\def\BibTeX{{\rm B\kern-.05em{\sc i\kern-.025em b}\kern-.08em
    T\kern-.1667em\lower.7ex\hbox{E}\kern-.125emX}}
\newtheorem{theorem}{Theorem}
\newtheorem{lemma}{Lemma}
\newtheorem{remark}{Remark}
\newtheorem{assumption}{Assumption}
\begin{document}
\title{Consistent and Optimal Solution to 
	Camera Motion Estimation}
\author{Guangyang Zeng*, Qingcheng Zeng*, Xinghan Li, Biqiang Mu, Jiming Chen, Ling Shi, and Junfeng Wu
\thanks{ The work was supported in part by NSFC under Grants 62336005 and 62088101; in part by the Shenzhen Science and Technology Program under Grants JCYJ20240813113609013 and JCYJ20241202124010014; in part by Guangdong Basic and Applied Basic Research Foundation 2024A151524009; in part by Key Research and Development Program of Zhejiang Province under Grant 2025C01061.}
\thanks{G. Zeng, Q. Zeng, and J. Wu are with the School of Data Science, Chinese University of Hong Kong, Shenzhen, Shenzhen 518172, P. R. China.
		{\tt\small \{zengguangyang, zengqingcheng, junfengwu\}@cuhk.edu.cn}.}
	\thanks{X. Li and J. Chen are with the College of Control Science and Engineering and the State Key Laboratory of Industrial Control Technology, Zhejiang University, Hangzhou 310027, P. R. China.
		{\tt\small xinghanli0207@gmail.com, cjm@zju.edu.cn}.}
  \thanks{B. Mu is with State Key Laboratory of Mathematical Sciences, Academy of Mathematics and Systems Science, Chinese Academy of Sciences, Beijing 100190, China.
		{\tt\small bqmu@amss.ac.cn}.}
 \thanks{L. Shi is with the Department of Electronic and Computer Engineering and with the Department of Chemical and Biological Engineering, Hong Kong University of Science and
Technology, Hong Kong.
		{\tt\small eesling@ust.hk}.}
        \thanks{*Equally contributed.}
  }


\maketitle

\begin{abstract}
Given 2D point correspondences between an image pair, inferring the camera motion is a fundamental issue in the computer vision community. The existing works generally set out from the epipolar constraint and estimate the essential matrix, which is not optimal in the maximum likelihood (ML) sense. In this paper, we dive into the original measurement model with respect to the rotation matrix and normalized translation vector and formulate the ML problem. We then propose an optimal two-step algorithm to solve it: In the first step, we estimate the variance of measurement noises and devise a consistent estimator based on bias elimination; In the second step, we execute a one-step Gauss-Newton iteration on manifold to refine the consistent estimator. We prove that the proposed estimator achieves the same asymptotic statistical properties as the ML estimator: The first is consistency, i.e., the estimator converges to the ground truth as the point number increases; The second is asymptotic efficiency, i.e., the mean squared error of the estimator converges to the theoretical lower bound --- Cramer-Rao bound. In addition, we show that our algorithm has linear time complexity. These appealing characteristics endow our estimator with a great advantage in the case of dense point correspondences. Experiments on both synthetic data and real images demonstrate that when the point number reaches the order of hundreds, our estimator outperforms the state-of-the-art ones in terms of estimation accuracy and CPU time. 
\end{abstract}

\begin{IEEEkeywords}
Camera motion estimation; essential matrix; epipolar geometry; maximum likelihood estimation; nonconvex optimization
\end{IEEEkeywords}

\section{Introduction} \label{section_introduction}
\IEEEPARstart{C}{amera} motion estimation (CME) involves estimating the relative camera pose from two images. It serves as a building block in many visual odometry, structure-from-motion (SfM), and simultaneous localization and mapping (SLAM) systems~\cite{hartley2003multiple,engel2017direct,sarlin2023pixel,zhan2020visual}. 
A CME pipeline generally includes two modules~\cite{hartley1997defense,nister2004efficient,li20134,chatterjee2017robust,zou2012coslam,jau2020deep}: The front-end extracts feature points from input images and then conducts feature matching to produce 2D point correspondences; The back-end recovers the relative pose based on these point correspondences. In this paper, we focus on the back-end algorithm by assuming the point correspondences are given.  


The existing literature generally estimates the essential matrix $\bf E$ first, see~\eqref{E_definition}, based on which the relative pose (including rotation matrix $\bf R$ and translation $\bf t$) is recovered. Due to the scale ambiguity caused by the unknown depths of measured points, the rotation matrix and the direction of translation can be recovered from the essential matrix, while the translation distance cannot be identified~\cite{hartley2003multiple}. To further estimate the translation distance, a priori information about the 3D points should be provided, e.g., via a calibration board or constructed 3D structures.

In an ideal noise-free case, given the normalized image coordinates of the $i$-th correspondence on the two images, say ${\bf y}_i^{h}$ and ${\bf z}_i^{h}$, the epipolar constraint yields the basic equation for the essential matrix: 
\begin{equation} \label{essential_matrix_eqn}
	{\bf z}_i^{h \top} {\bf E} {\bf y}_i^{h}=0.
\end{equation} 
Most literature estimates the essential matrix by minimizing the algebraic error originating from~\eqref{essential_matrix_eqn}~\cite{zhao2020efficient,chesi2008camera,helmke2007essential,briales2018certifiably,ding2021globally}. There are also some works minimizing geometric errors alternatively, such as the projection error~\cite{hartley2003multiple,hartley2009global,jiang2013global}. 
No matter what formulation is adopted, the resulting optimization problem is nonconvex since the set of essential matrices is a nonconvex manifold. Some works devised Gauss-Newton (GN) iterations on manifold to guarantee that the estimate at each iteration is an essential matrix~\cite{ma2001optimization,helmke2007essential,tron2017space}. These local search methods require a good initial value, otherwise, they will converge to local minima.  
A more prevalent idea is conducting relaxation, e.g., semidefinite relaxation (SDR)~\cite{zhao2020efficient,briales2018certifiably,garcia2022tighter} and direct linear transformation (DLT)~\cite{hartley1997defense,li2008linear} to obtain an optimization problem whose global minimizer can be obtained. Nevertheless, the global minimizer of the modified problem is not necessarily that of the original problem. In short, the global solution to the optimization problem over the essential matrix manifold is still an open problem. 

\begin{figure*}[!t]
	\centering
	\begin{subfigure}[b]{0.48\textwidth}
		\centering
		\includegraphics[width=\textwidth]{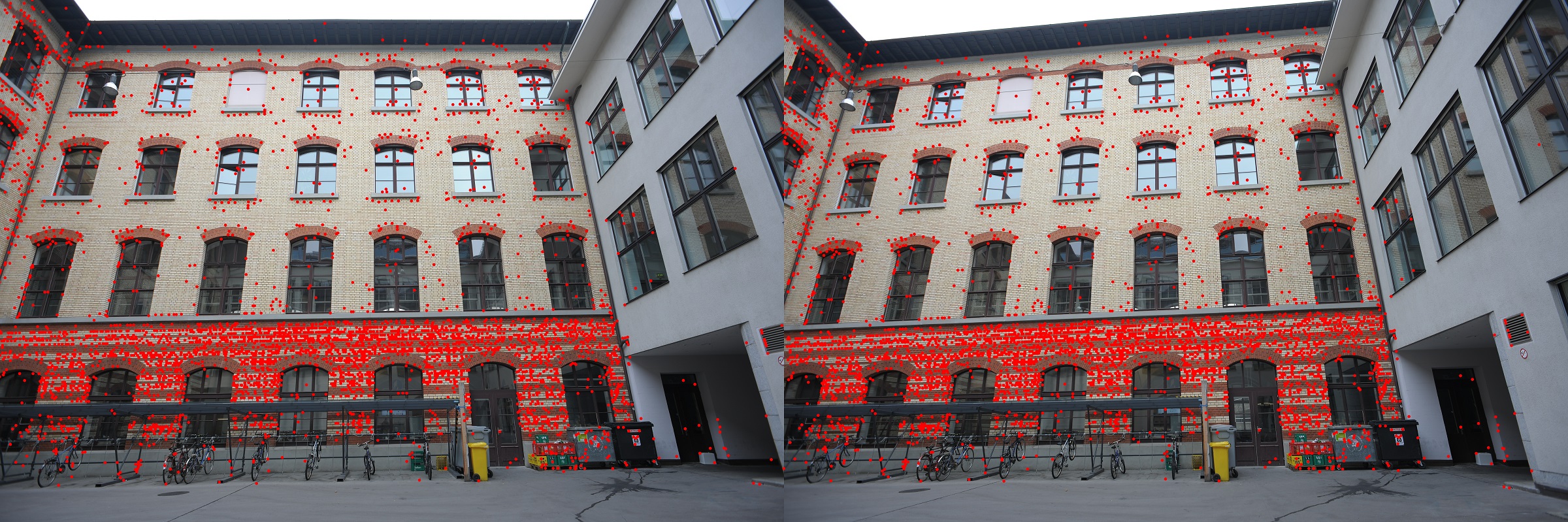}
		\caption{courtyard 36-37}
		\label{courtyard_36_37}
	\end{subfigure}
	\begin{subfigure}[b]{0.48\textwidth}
		\centering
		\includegraphics[width=\textwidth]{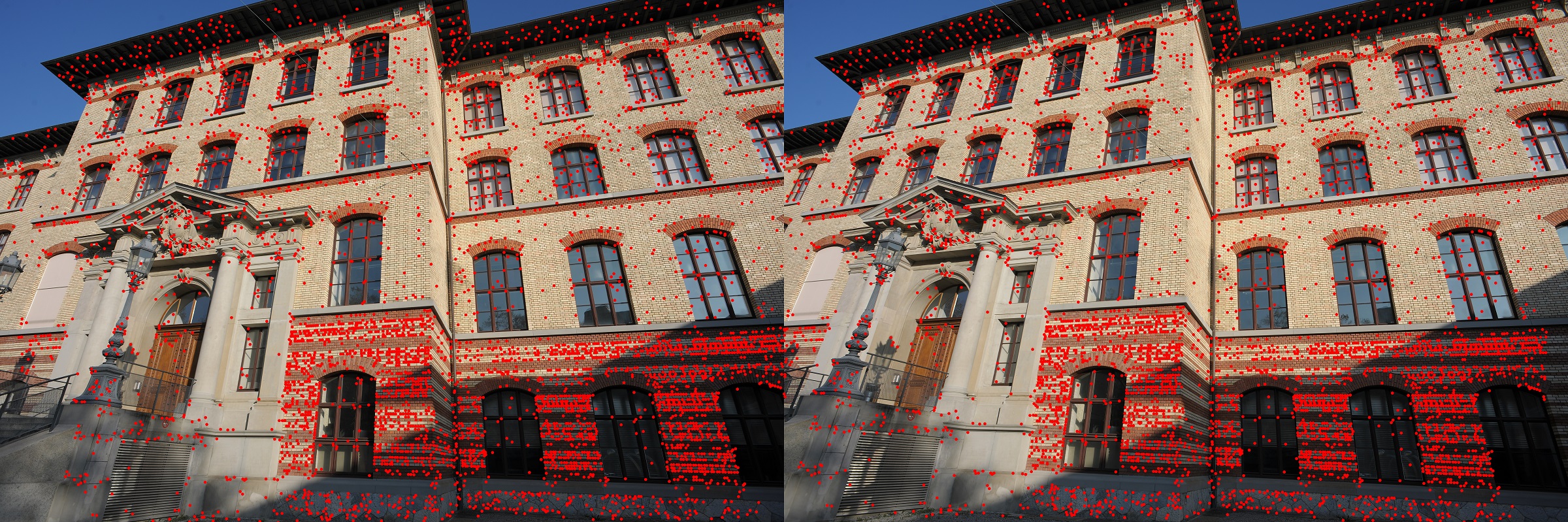}
		\caption{facade 32-33}
		\label{facade_32_33}
	\end{subfigure}
	\begin{subfigure}[b]{0.48\textwidth}
		\centering
		\includegraphics[width=\textwidth]{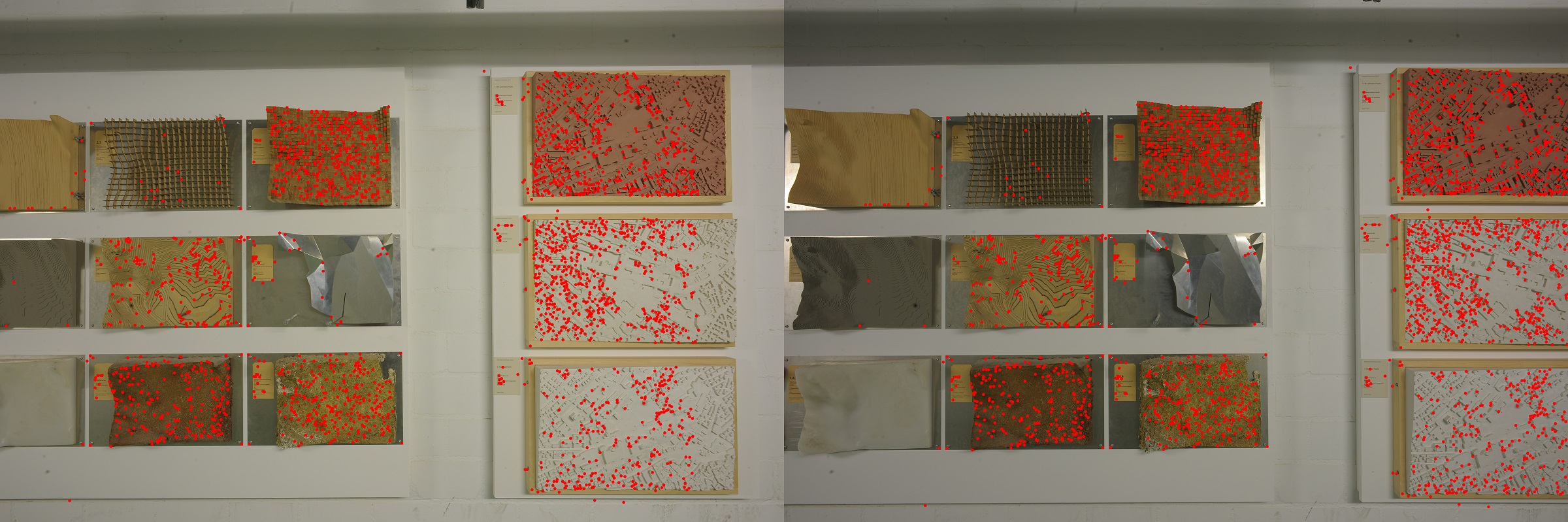}
		\caption{terrains 8-9}
		\label{terrains_8_9}
	\end{subfigure}
	\begin{subfigure}[b]{0.48\textwidth}
		\centering
		\includegraphics[width=\textwidth]{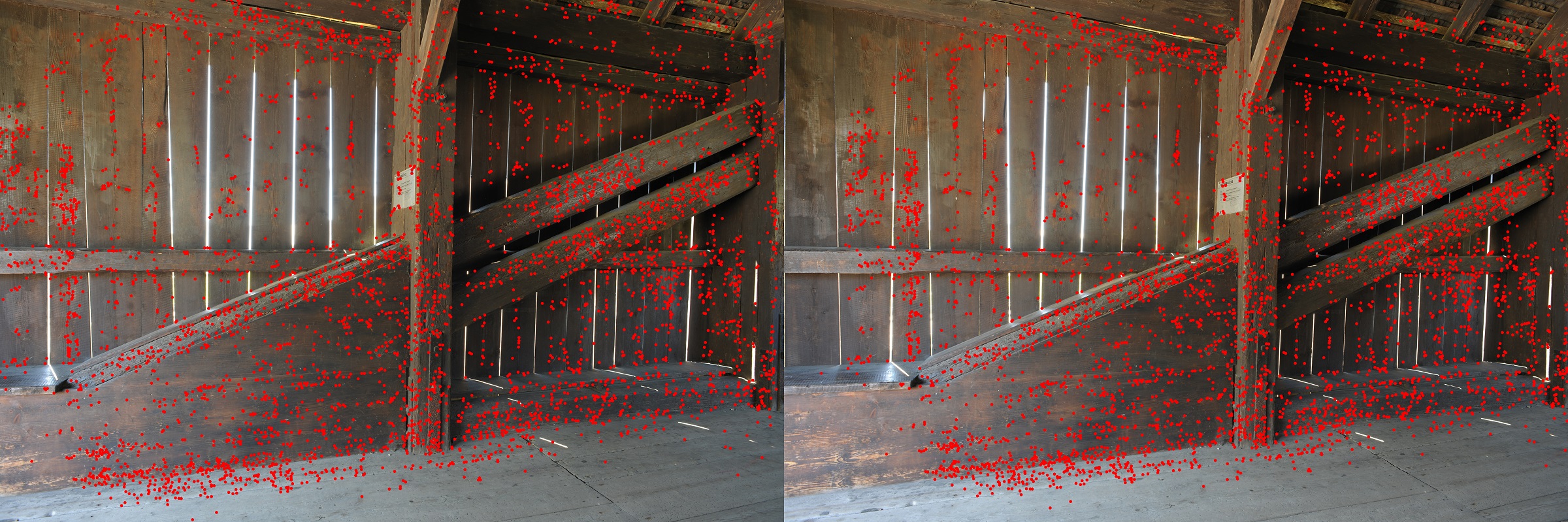}
		\caption{bridge 36-37}
		\label{bridge_36_37}
	\end{subfigure}
	\caption{Four image pairs from the ETH3D dataset~\cite{schops2017multi}. The red points are matched feature points. Each image pair has thousands of point correspondences.}
	\label{ETH3D_image_pair}
\end{figure*}

It should be noted that most of the literature sets out from the epipolar constraint~\eqref{essential_matrix_eqn}, which is not the original measurement model. As a result, the formulated optimization problem is generally not optimal in the maximum likelihood (ML) sense. This will be explicitly discussed in Section~\ref{section_problem_formulation}. From the estimation theory, we know that the ML estimator is statistically optimal in the sense that under some regularity conditions, it is \emph{consistent} and \emph{asymptotically statistically-efficient}~\cite{schervish2012theory}. Here, \emph{consistent} means that with the increase of point number, the estimator converges to the true value, and \emph{asymptotically statistically-efficient} denotes that as the point number becomes sufficiently large, the mean squared error (MSE) of the estimator reaches the theoretical lower bound --- Cramer-Rao bound (CRB)\footnote{We use statistically-efficient to distinguish from computationally-efficient which describes an algorithm that has low time complexity.}. 
Thus, a more appropriate method should take the original measurement model and measurement noises into account and formulate an ML problem. Similar to the existing formulations, the ML problem is nonconvex, and finding its global solution is a challenging task. 

In this paper, we utilize the rotation matrix and normalized translation vector, instead of the compound essential matrix, to obtain the original measurement model --- a noise-contaminated function between matched 2D points, based on which the ML problem is formulated. For the nonconvex ML problem, we propose a two-step algorithm that can optimally solve it in the asymptotic case where the number of point correspondences is large. In other words, the obtained estimator is consistent and asymptotically statistically-efficient. 
It is noteworthy to see that in some texture-rich scenarios, one can obtain a large number of feature correspondences. For instance, as shown in Figure~\ref{ETH3D_image_pair}, in the public dataset of ETH3D~\cite{schops2017multi}, there may exist thousands of point correspondences in an image pair. With abundant points, our algorithm achieves higher estimation accuracy compared with state-of-the-art (SOTA) algorithms. Actually, according to the results of simulations and real image tests, our algorithm begins to show its advantage when the point number reaches the order of hundreds.   
Besides estimation accuracy, time complexity is another important metric to appraise an algorithm, especially in the case of a large number of feature points. Since a closed-form solution is available in the first step, and only a one-step GN iteration is executed in the second step, our algorithm is \emph{computationally efficient} --- it has linear time complexity overall and has the capacity of real-time implementation even when the point number reaches the order of thousands.
To summarize, the main contributions of this paper are listed as follows:
\begin{enumerate}
	\item [(i).] We derive the analytic expression between 2D point correspondences, see~\eqref{noisy_measurement_model}. The expression is the original measurement model, which is a function of the rotation matrix and normalized translation, instead of the essential matrix. Based on the measurement model, the statistically optimal ML problem is formulated.  
	\item [(ii).] In the first step of our algorithm, we propose a novel consistent estimator of the measurement noise variance. It only involves calculating the maximum eigenvalue of a $9 \times 9$ matrix. Based on the noise variance estimate, we perform bias elimination and eigenvalue decomposition, obtaining consistent estimators of the rotation matrix and normalized translation. 
	\item [(iii).] In the second step of our algorithm, we take the consistent estimates as the initial value and devise the GN iterations on ${\rm SO}(3)$ (for the rotation matrix) and 2-sphere (for the normalized translation). Moreover, we prove that only a one-step of GN iteration is sufficient to achieve the same asymptotic property as the ML estimator, i.e., the MSE asymptotically reaches the CRB. 
	\item [(iv).] We conduct extensive experiments with both synthetic data and real images. The results show that our proposed algorithm outperforms SOTA ones in terms of estimation accuracy and CPU time when the point number reaches the order of hundreds. The open source code is available at \url{https://github.com/LIAS-CUHKSZ/epipolar_eval}.
\end{enumerate}

The remainder of the paper is organized as follows. 
In Section~\ref{section_related_work}, we review the related work on camera motion estimation. 
In Section~\ref{section_preliminaries}, we introduce some notations and necessary preliminaries. In Section~\ref{section_problem_formulation}, we derive the original measurement model and formulate the ML problem. 
In Section~\ref{section_consistent_estimator}, we estimate the variance of measurement noises and propose a consistent estimator.
In Section~\ref{section_GN}, we refine the consistent estimator via GN iterations on ${\rm SO}(3)$ and the 2-sphere. 
Experiment results are presented in Section~\ref{section_experiment}, followed by conclusions in Section~\ref{section_conclusion}.

\section{Related Work} \label{section_related_work}
In CME, the epipolar geometry constraint yields a linear equation of the essential matrix, as shown in~\eqref{essential_matrix_eqn}. Given an essential matrix, the rotation and normalized translation can be recovered via singular value decomposition~\cite{hartley2003multiple}. Thus, most of the literature estimates the essential matrix in CME. The rotation matrix and translation vector have three degrees of freedom each. The essential matrix loses one degree of freedom because of scale ambiguity. Hence, it has in total five degrees of freedom, and at least five points are needed to estimate it~\cite{nister2004efficient,kukelova2007two,kukelova2008polynomial,stewenius2006recent,kneip2012finding}, which is called the minimal case. It is noteworthy to see that the normalized image coordinates are used in estimating the essential matrix. If the pixel coordinates are utilized, one can adopt eight-point solvers~\cite{hartley1997defense,longuet1981computer} to estimate the fundamental matrix, which along with the intrinsic matrix can further recover the essential matrix. Barath~\cite{barath2018five} proposed a five-point fundamental matrix estimation for uncalibrated cameras by first estimating a homography from three correspondences. The above algorithms fall into the scope of fixed-point solvers. They are sensitive to measurement noises and usually need to be embedded in a RANSAC framework to enhance robustness~\cite{zhao2020efficient,barath2021graph}. 

More literature studies arbitrary-point solvers since they can make full use of the whole measurements. Note that the set of essential matrices is nonconvex. Hence, the optimization problems over this set are nonconvex, and how to find a global solution is still an open problem. Some works devised local iterations on the manifold of essential matrices to seek a nearby stationary solution~\cite{ma2001optimization,helmke2007essential,subbarao2008robust,tron2017space}. In these works, the essential matrix manifold is characterized by different formulations, leading to distinct performances and convergence rates. Ma \emph{et al.}~\cite{ma2001optimization} took the eight-point estimate as the initial value and proposed a Riemannian-Newton algorithm to solve the structure-from-motion problem. Helmke \emph{et al.}~\cite{helmke2007essential} improved the convergence property and reduced the computational cost by proposing Gauss-Newton-type algorithms. Tron and Daniilidis~\cite{tron2017space} characterized the space of essential matrices as a quotient manifold that takes the symmetric role played by the two views and the geometric peculiarities of
the epipolar constraint into account. We remark that the local iterative methods are sensitive to initial values. Without a good initial value, they can only converge to local minima. 

Instead of local searching, many works focus on globally optimal solvers. Branch and bound (BnB) methods, which explore the whole optimization space, were utilized in~\cite{hartley2009global,kneip2013direct}. They achieve global optima but are computationally inefficient --- exponential time in the worst case. Problem relaxation is a widely adopted idea for devising global solvers. The polynomial optimization problems established over the essential matrix manifold can be reformulated as QCQP problems, which can be further relaxed into semidefinite programming (SDP) problems via Shor's relaxation~\cite{boyd2004convex}. Karimian and Tron~\cite{karimian2023essential} used the quintessential matrix structure to derive a succession of SDRs that require fewer parameters than the existing nonminimal solvers. 
Although the SDP problems can be globally solved in polynomial time with off-the-shelf tools, their global minima generally do not coincide with that of the original problem. Therefore, recently, some works characterized the theoretical properties of the proposed SDP solvers. In~\cite{briales2018certifiably,garcia2021certifiable}, certifiable relative pose solvers were proposed whose optimality w.r.t. the original problem can be certified a posteriori. Zhao~\cite{zhao2020efficient} also presented a certifiable SDP-based solver. In addition, the tightness of the SDR relaxation was proved when the noise intensity is small. 

Apart from classical geometry-based methods, there are also some learning-based approaches. Ranftl and Koltun~\cite{ranftl2018deep} cast the estimation of fundamental matrices as a series of
weighted homogeneous least-squares problems, where robust weights are estimated using deep networks.
Moran \emph{et al.}~\cite{moran2024consensus} proposed a noise-aware Deep Sets framework to estimate relative camera poses, focusing on a simpler architectural design.

To summarize, most of the literature estimates the essential matrix that encodes rotation and translation information instead of directly optimizing the rotation matrix and normalized translation vector. In addition, due to the nonconvexity of the essential matrix manifold, the global solution to the resulting problem is still an open problem. In the rest of this paper, we set out from the original measurement model and formulate an optimization problem in the ML sense, which optimizes directly over the rotation matrix and normalized translation vector. Moreover, we propose an asymptotically optimal two-step algorithm that can optimally solve the formulated nonconvex problem when the point number is large. 

\section{Preliminaries} \label{section_preliminaries}
To facilitate the readability of the subsequent technical part, in this section, we present some notations and preliminaries in probability and statistics, rigid transformation, and the essential matrix. 

\subsection{Notations} \label{notations}
We use bold lowercase letters to denote vectors, e.g., ${\bf x}$, $\bf y$, $\bf z$, and bold uppercase letters for matrices, e.g., ${\bf X}$, $\bf Y$, $\bf Z$. The identity matrix of size $n$ is represented as ${\bf I}_n$. We denote the all-zeros vector of size $n$ as ${\bf 0}_n$. The operation $\otimes$ denotes the Kronecker product. For a vector $\bf x$, $\|{\bf x}\|$ denotes its $\ell_2$-norm. Given a matrix ${\bf X}$, $\|{\bf X}\|_{\rm F}$ deotes its Frobenius norm, ${\rm tr}({\bf X})$ represents its trace, and ${\rm vec}({\bf X})$ yields a vector by concatenating the columns of ${\bf X}$. If ${\bf X}$ has real eigenvalues, then $\lambda_{\rm min}({\bf X})$ and $\lambda_{\rm max}({\bf X})$ denote the minimum and maximum eigenvalues of ${\bf X}$, respectively. For a quantity ${\bf x}$ corrupted by noise, we use ${\bf x}^o$ to denote its noise-free counterpart.

\subsection{Preliminaries in probability and statistics} \label{preliminary_statistic}
\textbf{Convergence in probability.} We use $\hat {\bm \theta}_m \ \xrightarrow{p} \ {\bm \theta}^o$ to denote that $\hat {\bm \theta}_m$ converges to ${\bm \theta}^o$ in probability, i.e., for any $\varepsilon>0$, 
\begin{equation*}
    \lim_{m \rightarrow \infty} P(\|\hat {\bm \theta}_m-{\bm \theta}^o\| \geq \varepsilon)=0.
\end{equation*}
In addition, the notation $\Delta \hat {\bm \theta}_m=o_p(a_m)$ means that the sequence $\Delta \hat {\bm \theta}_m/a_m$ converges to $0$ in probability. 

\textbf{Stochastic boundedness.} The notation $\Delta \hat {\bm \theta}_m=O_p(a_m)$ means that the sequence $\Delta \hat {\bm \theta}_m/a_m$ is stochastically bounded. That is, for any $\varepsilon >0$, there exists a finite $M$ and a finite $N$ such that for any $m>M$,
\begin{equation*}
    P(\|\Delta \hat {\bm \theta}_m/a_m\|>N )<\varepsilon.
\end{equation*}

\textbf{$\sqrt{m}$-consistent estimator.} If $\hat {\bm \theta}_m$ is a $\sqrt{m}$-consistent estimator of ${\bm \theta}^o$, then
\begin{equation*}
    \hat {\bm \theta}_m-{\bm \theta}^o=O_p(1/\sqrt{m}).
\end{equation*}
This notion includes two implications: The estimator $\hat {\bm \theta}_m$ is consistent --- it converges to ${\bm \theta}^o$ in probability; The convergence rate is $1/\sqrt{m}$. 

\textbf{(Asymptotically) unbiased estimator.} The bias of an estimator $ \hat {\bm \theta}_m$ is equal to its expectation minus the true value, i.e., ${\rm Bias}(\hat {\bm \theta}_m)=\mathbb E[\hat {\bm \theta}_m]-{\bm \theta}^o$. If ${\rm Bias}(\hat {\bm \theta}_m)=0$, we call $\hat {\bm \theta}_m$ an unbiased estimator of ${\bm \theta}^o$. In particular, if $\lim_{m \rightarrow \infty}{\rm Bias}(\hat {\bm \theta}_m)=0$, we call $\hat {\bm \theta}_m$ an asymptotically unbiased estimator. It is noteworthy to see that an asymptotically unbiased estimator $\hat {\bm \theta}_m$ may not necessarily be unbiased when $m$ is finite. 

\textbf{(Asymptotically) efficient estimator.} An unbiased estimator $\hat {\bm \theta}_m$ is said to be efficient if the trace of its covariance is equal to the theoretical lower bound --- CRB, i.e., ${\rm tr}({\rm cov}(\hat {\bm \theta}_m))={\rm CRB}$. In particular, $\hat {\bm \theta}_m$ is called asymptotically efficient if it is asymptotically unbiased, and $\lim_{m \rightarrow \infty} {\rm tr}({\rm cov}(\hat {\bm \theta}_m))={\rm CRB}$.

\subsection{Rigid transformation and the essential matrix} \label{rigid_transformation}
The (proper) rigid transformations, or said relative poses, include rotations and translations. The rotation can be characterized by a rotation matrix $\bf R$. Specifically, in the 3D Euclidean space, rotation matrices belong to the special orthogonal group
\begin{equation*}
{\rm SO}(3)=\{{\bf R}\in \mathbb R^{3 \times 3} \mid {\bf R}^\top {\bf R}={\bf I}_3, {\rm det}({\bf R})=1\}.
\end{equation*}
The translation is depicted by a vector ${\bf t} \in \mathbb R^3$. 
Suppose the relative pose of the second frame w.r.t. the first frame is $({\bf R},{\bf t})$, and the coordinates of a 3D point in the second frame is ${\bf x}$. Then the coordinates of the point in the first frame is ${\bf R}{\bf x}+{\bf t}$. 

Given a vector ${\bf t}=[t_1~t_2~t_3]^\top \in \mathbb R^3$, the ``hat'' function ${\bf t}^\wedge$ generates the following skew-symmetric matrix 
$${\bf t}^\wedge=\begin{bmatrix}
	0 & -t_3 & t_2 \\
	t_3 & 0 & -t_1 \\
	-t_2 & t_1 & 0 
\end{bmatrix}.
$$
In epipolar geometry, the essential matrix is given as 
\begin{equation} \label{E_definition}
    {\bf E}={\bf t}^\wedge {\bf R}.
\end{equation}
In the noise-free case, the epipolar constraint is depicted by equation~\eqref{essential_matrix_eqn} in Section~\ref{section_introduction}.

As mentioned previously, the distance of the translation $\bf t$ cannot be identified given an image pair, hence we are interested in the normalized translation $\bar {\bf t}$, which belongs to the $2$-sphere
\begin{equation*}
    S^2=\{\bar {\bf t} \in \mathbb R^3 \mid \|\bar {\bf t}\|=1\}.
\end{equation*}
This derives the set of normalized essential matrices
\begin{equation*}
    \mathcal M_E=\{{\bf E} \mid {\bf E}=\bar {\bf t}^\wedge {\bf R}, \exists~ {\bf R} \in {\rm SO}(3), \bar {\bf t} \in S^2\}.
\end{equation*}

\section{ML Problem Formulation from Original Measurement Model} \label{section_problem_formulation}
We consider the pinhole camera model and assume the intrinsic matrices of cameras are known. The two-view geometry is shown in Figure~\ref{two_view_geometry}. We use ${\bf x}_i=[x_{i1}~x_{i2}~x_{i3}]^\top \in \mathbb R^3,i \in \{1,\ldots,m\}$ to denote the coordinates of the $i$-th 3D point in the world frame. Its 2D projections on the image planes are ${\bf y}_i =[y_{i1}~y_{i2}]^\top \in \mathbb R^2$ and ${\bf z}_i=[z_{i1}~z_{i2}]^\top \in \mathbb R^2$. Given the intrinsic matrices of cameras, we can use normalized image coordinates to represent points in the image, i.e., the focal length is set to be $1$. The homogeneous normalized image coordinates of ${\bf y}_i$ and ${\bf z}_i$ are denoted as ${\bf y}_i^h=[{\bf y}_i^\top~1]^\top$ and ${\bf z}_i^h=[{\bf z}_i^\top~1]^\top$, respectively. Without loss of generality, we take the first camera frame as the world frame. Then, the projection model for the first camera is 
\begin{equation} \label{projection1}
	{\bf y}_i=\frac{{\bf W}{\bf x}_i}{{\bf e}_3^\top {\bf x}_i}= \begin{bmatrix}
		x_{i1}/x_{i3} \\
		x_{i2}/x_{i3}
	\end{bmatrix},
\end{equation}
where ${\bf e}_i$ denotes the unit vector whose $i$-th element is 1, and ${\bf W}=[{\bf e}_1~{\bf e}_2]^\top$. Let $(\bf R,\bf t)$ be the relative pose of the first camera w.r.t. the second one. Then, the coordinates of the $i$-th 3D point in the second camera frame is ${\bf R}{\bf x}_i+{\bf t}$, and the corresponding projection model is 
\begin{equation} \label{projection2}
	{\bf z}_i=\frac{{\bf W}({\bf R}{\bf x}_i+{\bf t})}{{\bf e}_3^\top ({\bf R}{\bf x}_i+{\bf t})}.
\end{equation}

\begin{figure}[!t]
	\centering
	\includegraphics[width=0.68\columnwidth]{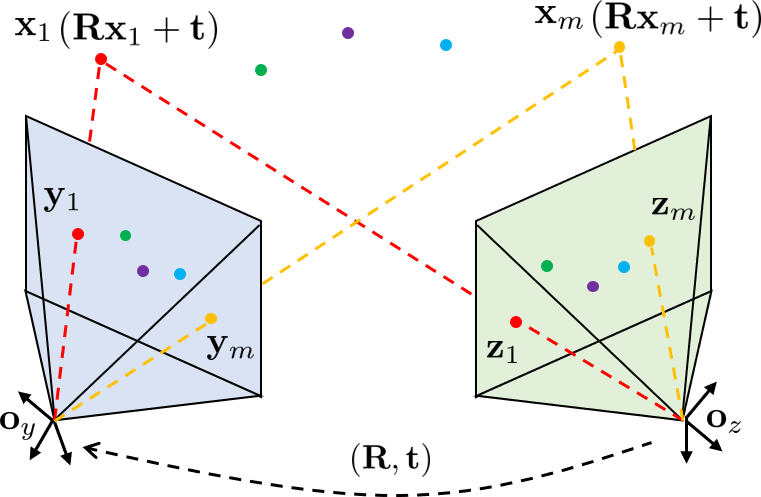}
	\caption{Two-view geometry.}
	\label{two_view_geometry}
\end{figure}

Recall that only the direction of $\bf t$ can be identified, while its scale cannot. If ${\bf t}=0$, its direction becomes arbitrary, and the identifiability condition fails. Therefore, for the two camera centers, we make the following assumption:
\begin{assumption} \label{nonzero_t}
	The centers of the two cameras do not coincide, i.e., $\|{\bf t}\| \neq 0$. 
\end{assumption}
Assumption~\ref{nonzero_t} serves as a necessary condition to estimate the translation (up to a scale). When $\|{\bf t}\| = 0$, the direction of ${\bf t} $ cannot be identified, and the CRB goes to infinity, see Figure~\ref{MSE_vs_t}. In this case, the homography matrix should be used instead of the essential matrix. 
Apart from Assumption~\ref{nonzero_t}, we also need an assumption on the spatial distribution of 3D points and the two camera centers to ensure the global identifiability of the relative pose. Before that, we will introduce the notion of ruled quadric surfaces. A quadric is a surface in the projective space $\mathbb P^3$ and is defined to be the set of points ${\bf x}^h$ such that ${\bf x}^{h \top}{\bf Q}{\bf x}^h=0$, where ${\bf Q}$ is a symmetric $4\times 4$ matrix. A ruled quadric surface is a quadric surface that contains a straight line. It includes a hyperboloid of one sheet, a cone, two (intersecting) planes, a single plane, and a single line~\cite{hartley2003multiple}.

\begin{assumption} \label{not_coplanar_assump}
	The 3D points $\{{\bf x}_i\}_{i=1}^m$ and the two camera centers 
	${\bf o}_y$ and ${\bf o}_z$ do not lie on a ruled quadric surface. 
\end{assumption}

Assumption~\ref{not_coplanar_assump} ensures that there do not exist two different configurations of $\{{\bf x}_i\}_{i=1}^m$, $\bf R$ and $\bf t$ (up to a scale) that have the same projections $\{{\bf y}_i\}_{i=1}^m$ and $\{{\bf z}_i\}_{i=1}^m$~\cite{hartley2003multiple}. In other words, $\bf R$ can be uniquely identified, and $\bf t$ can be uniquely identified up to a scale. A complete enumeration of the types of placement on a ruled quadric surface is given in~\cite[Result 22.11]{hartley2003multiple}. In general, the feature points are randomly distributed and will not concentrate on any ruled quadric surface. The most likely adverse scenario is that feature points concentrate on a man-made plane, e.g., a wall. In this case, CME algorithms may have unstable performance, which will be shown in Section~\ref{section_experiment}.   

By combining~\eqref{projection1} and~\eqref{projection2}, we obtain the following relationship between ${\bf z}_i$ and ${\bf y}_i$:
\begin{equation} \label{measurement_model}
	\begin{split}
		{\bf z}_i & =\frac{{\bf W}({\bf R}{\bf y}_i^h+{\bf t}/x_{i3})}{{\bf e}_3^\top ({\bf R}{\bf y}_i^h+{\bf t}/x_{i3})} \\
		&=\frac{{\bf W}({\bf R}{\bf y}_i^h+k_i \bar {\bf t})}{{\bf e}_3^\top ({\bf R}{\bf y}_i^h+k_i \bar {\bf t})},
	\end{split}
\end{equation}
where $\bar {\bf t}={\bf t}/\|{\bf t}\|$, and $k_i=\|{\bf t}\|/x_{i3}$. It is well-recognized that only the direction of $\bf t$, i.e., $\bar {\bf t}$ can be estimated from two-view geometry, while the length of $\bf t$, i.e., $\|\bf t\|$ cannot be recovered~\cite{hartley2003multiple}. This forms the motivation that we use $\bar {\bf t}$ in measurement model~\eqref{measurement_model}.
Note that the measurement equation~\eqref{measurement_model} models the noise-free relationship between ${\bf z}_i$ and ${\bf y}_i$. In real applications, there exist measurement noises due to, for example, a nonideal pinhole model, the inaccuracy of intrinsic matrix calibration, the spatial inconsistency of feature points, etc. Hence, the real measurement model should be
\begin{equation} \label{noisy_measurement_model}
	{\bf z}_i =\frac{{\bf W}({\bf R}{\bf y}_i^h+k_i \bar {\bf t})}{{\bf e}_3^\top ({\bf R}{\bf y}_i^h+k_i \bar {\bf t})} + {\bm \epsilon}_i,
\end{equation}
where ${\bm \epsilon}_i$ is the measurement noise. 
\begin{assumption} \label{noise_assump}
	The measurement noises ${\bm \epsilon}_i \sim \mathcal N (0,\sigma^2 {\bf I}_2),i=1,\ldots,m$ are independent and identically distributed (i.i.d.) with unknown variance $\sigma^2$.
\end{assumption}
The i.i.d. Gaussian noise assumption has been widely adopted in estimations in computer vision, e.g.,~\cite{lepetit2009epnp,hesch2011direct,urban2016mlpnp}. We remark that when ${\bf y}_i$ and ${\bf z}_i$ correspond to the same ${\bf x}_i$, model~\eqref{noisy_measurement_model} essentially characterizes the ``error in one image'' case that is frequently assumed in~\cite{hartley2003multiple}. The verification of Assumption~\ref{noise_assump} is presented in Appendix G.
Note that in~\eqref{noisy_measurement_model}, we have established the relationship between ${\bf z}_i$ and ${\bf y}_i$, and the measurement noise is in its most natural form. Hence, we call~\eqref{noisy_measurement_model} the \emph{original measurement model} and will use it to construct the ML problem. Specifically, the residual is 
\begin{equation} \label{residual}
	{\bf r}_i={\bf z}_i- \frac{{\bf W}({\bf R}{\bf y}_i^h+k_i \bar {\bf t})}{{\bf e}_3^\top ({\bf R}{\bf y}_i^h+k_i \bar {\bf t})}.
\end{equation}
With point correspondences $\{({\bf y}_i,{\bf z}_i)\}_{i=1}^{m}$, we can formulate the ML problem as
\begin{subequations}\label{LS_problem}
	\begin{align}
		\mathop{\rm minimize~}\limits_{{\bf R},\bar {\bf t},\{k_i\}} ~& \frac{1}{m} \sum_{i=1}^{m} \|{\bf r}_i\|^2  \label{LS_objective} \\
		\mathop{\rm subject~to~} ~& {\bf R} \in {\rm SO}(3) \label{LS_constraint1}\\
		& \bar {\bf t} \in S^2 \label{LS_constraint2}\\
		& k_i > 0,i=1,\ldots,m.\label{LS_constraint3}
	\end{align}
\end{subequations}

Given Assumptions~\ref{nonzero_t}-\ref{noise_assump}, the global solution to the ML problem~\eqref{LS_problem} (called the ML estimator) is consistent and asymptotically statistically-efficient. However, the ML problem~\eqref{LS_problem} is nonconvex, and finding its global solution is nontrivial. When using local iterations, e.g., the GN algorithm, an appropriate initial solution is needed, otherwise, it will converge to local minima. In the next section, we will propose a consistent estimator. The resulting consistent solution serves as a good initial value in the sense that GN iterations converge to the global minimizer of~\eqref{LS_problem} in the asymptotic case. 
\begin{remark}
    Most literature adopts the algebraic error $\left({\bf z}_i^{h \top} {\bf E} {\bf y}_i^h \right)^2$ which stems from the basic equation~\eqref{essential_matrix_eqn} of the essential matrix and formulates the following problem~\cite{zhao2020efficient,chesi2008camera,helmke2007essential}:
\begin{subequations}\label{essential_estimation}
	\begin{align}
		\mathop{\rm minimize~}\limits_{{\bf E}} ~& \frac{1}{m} \sum_{i=1}^{m} \left({\bf z}_i^{h \top} {\bf E} {\bf y}_i^h \right)^2  \label{algebraic_error} \\
		\mathop{\rm subject~to~} ~& {\bf E} \in \mathcal M_E.\label{E_constraint}
	\end{align}
\end{subequations}
Problem~\eqref{essential_estimation} is not an ML formulation, since different terms in the objective may have varied uncertainties, as will be shown in~\eqref{essential_equation}. As a result, it cannot achieve asymptotic efficiency.  
\end{remark}

\section{Consistent Estimator Design} \label{section_consistent_estimator}
In this section, we focus on the design of a $\sqrt{m}$-consistent estimator. The resulting estimators $\hat {\bf R}_m$ and $\hat{\bar {{\bf t}}}_m$ satisfy 
\begin{equation}
	\hat {\bf R}_m-{\bf R}^o=O_p(1/\sqrt{m}), ~~\hat{\bar {{\bf t}}}_m-\bar {\bf t}^o=O_p(1/\sqrt{m}),
\end{equation}
where ${\bf R}^o$ and $\bar {\bf t}^o$ are ground truth. The design consists of two procedures: First, we provide a consistent estimator of noise variance by calculating the maximum eigenvalue of a $9 \times 9$ matrix; We then execute bias elimination based on the estimate of noise variance to obtain a consistent solution. 

\subsection{Consistent noise variance estimation} \label{subsection_noise_estimation}
Regarding an estimator with finite variance, asymptotic unbiasedness is a necessary condition for consistency~\cite{lehmann2006theory}. On the one hand, in nonlinear nonconvex optimization, estimators obtained by relaxation are usually biased, even in the asymptotic case. On the other hand, the bias of an estimator is generally a function of the variance of measurement noises. Therefore, noise variance estimation is a prerequisite for bias elimination and the construction of a consistent solution. Let ${\bm \epsilon}_i^h=[{\bm \epsilon}_i^\top~0]^\top$ denote the homogeneous noise. Based on epipolar geometry constraint, we obtain the equation $ ({\bf z}_i^h-{\bm \epsilon}_i^h)^\top {\bf E} {\bf y}_i^{h}=0$, i.e., 
\begin{equation} \label{essential_equation}
	0={\bf z}_i^{h \top} {\bf E} {\bf y}_i^h-{\eta}_i,
\end{equation}
where ${\eta}_i={\bm \epsilon}_i^{h \top} {\bf E} {\bf y}_i^h$ is the new noise term. 
Let ${\bf L}_i={\bf y}_i^{h \top} \otimes {\bf I}_3 \in \mathbb R^{3 \times 9}$ and ${\bm \theta}={\rm vec}({\bf E}) \in \mathbb R^9$, we can rewrite~\eqref{essential_equation} as
\begin{equation} \label{essential_equation2}
	0={\bf z}_i^{h \top} {\bf L}_i {\bm \theta}-{\eta}_i.
\end{equation}
By stacking~\eqref{essential_equation2} for all $i\in\{1,\ldots,m\}$, we obtain the matrix form:
\begin{equation} \label{essential_matrix_form}
	{\bf 0}={\bf A}_m {\bm \theta}-{\bm \eta}_m,
\end{equation}
where 
\begin{equation*}
	{\bf A}_m=\begin{bmatrix}
		{\bf z}_1^{h \top} {\bf L}_1 \\
		\vdots \\
		{\bf z}_m^{h \top} {\bf L}_m
	\end{bmatrix},~~{\bm \eta}_m=\begin{bmatrix}
		{\eta}_1 \\
		\vdots \\
		{\eta}_m
	\end{bmatrix}.
\end{equation*}

Define 
\begin{equation} \label{bf_Q_m}
	{\bf Q}_m=\frac{{\bf A}_m^\top {\bf A}_m}{m}
\end{equation}
and
\begin{equation} \label{bf_S_m}
	{\bf S}_m={\bf Y}^{h}\otimes {\bf W}^h
\end{equation}
where ${\bf Y}^{h}=\sum_{i=1}^{m} {\bf y}_i^{h} {\bf y}_i^{h \top}/m$ and ${\bf W}^h=[{\bf W}^\top~{\bf 0}_3]^\top$. The following theorem gives a consistent estimator of the variance $\sigma^2$ of the measurement noises ${\bm \epsilon}_i$'s.
\begin{theorem} \label{theo_noise_est}
	Let $\hat \sigma_m^2=1/\lambda_{\rm max} ({\bf Q}_m^{-1} {\bf S}_m)$. Then,under Assumptions~\ref{nonzero_t}-\ref{noise_assump}, $\hat \sigma_m^2$ is a $\sqrt{m}$-consistent estimator of $\sigma^2$.
\end{theorem}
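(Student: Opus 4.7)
The plan is to show that $\sigma^2$ is asymptotically the smallest positive generalized eigenvalue of the matrix pair $({\bf Q}_m,{\bf S}_m)$ --- equivalently, that $1/\sigma^2 = \lambda_{\max}({\bf Q}_m^{-1}{\bf S}_m)$ up to $O_p(m^{-1/2})$ noise --- and then to invoke matrix-eigenvalue perturbation theory to upgrade this asymptotic identification to the $\sqrt{m}$-consistency rate.

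First I would establish the decomposition ${\bf Q}_m = {\bf Q}_m^o + \sigma^2 {\bf S}_m + O_p(m^{-1/2})$, where ${\bf Q}_m^o$ is the noise-free counterpart of ${\bf Q}_m$. Starting from ${\bf Q}_m = \frac{1}{m}\sum_i {\bf L}_i^\top {\bf z}_i^h {\bf z}_i^{h\top}{\bf L}_i$ and substituting ${\bf z}_i^h = {\bf z}_i^{h,o} + {\bm \epsilon}_i^h$ with $\mathbb{E}[{\bm \epsilon}_i^h {\bm \epsilon}_i^{h\top}] = \sigma^2 {\bf W}^h$, the linear cross terms are zero-mean and the pure-noise quadratic contributes $\sigma^2 \cdot \frac{1}{m}\sum_i {\bf L}_i^\top {\bf W}^h {\bf L}_i$. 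The mixed-product Kronecker identity $({\bf y}_i^h\otimes{\bf I}_3){\bf W}^h({\bf y}_i^{h\top}\otimes{\bf I}_3) = ({\bf y}_i^h{\bf y}_i^{h\top})\otimes {\bf W}^h$ collapses this sum to exactly $\sigma^2 {\bf S}_m$. A standard CLT applied to the zero-mean fluctuation (entries are independent with bounded fourth moments, since ${\bm \epsilon}_i$ is Gaussian and ${\bf y}_i$, ${\bf z}_i^{h,o}$ are bounded) then yields the advertised $O_p(m^{-1/2})$ remainder.

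Next I would pin down which generalized eigenvalue equals $\sigma^2$. The noise-free epipolar constraint gives ${\bf A}_m^o{\bm \theta}^o = 0$, hence ${\bf Q}_m^o{\bm \theta}^o = 0$, and combining with the decomposition above yields $({\bf Q}_m - \sigma^2{\bf S}_m){\bm \theta}^o = O_p(m^{-1/2})$, so $\sigma^2$ is an approximate generalized eigenvalue with eigenvector ${\bm \theta}^o$. To argue it is the smallest \emph{positive} one, I would analyze the generalized Rayleigh quotient
\begin{equation*}
\frac{{\bm \theta}^\top {\bf Q}_m {\bm \theta}}{{\bm \theta}^\top {\bf S}_m {\bm \theta}} \;=\; \sigma^2 \;+\; \frac{{\bm \theta}^\top {\bf Q}_m^o {\bm \theta}}{{\bm \theta}^\top {\bf S}_m {\bm \theta}} \;+\; o_p(1),
\end{equation*}
which, since ${\bf Q}_m^o \succeq 0$, is $\ge \sigma^2$ asymptotically, with equality forcing ${\bm \theta}$ into $\ker\bigl(\lim_m {\bf Q}_m^o\bigr)$. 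Here Assumption~\ref{not_coplanar_assump} is essential: the non-ruled-quadric condition is exactly the asymptotic identifiability of the essential matrix from the epipolar constraints, giving $\ker\bigl(\lim_m {\bf Q}_m^o\bigr) = \mathrm{span}({\bm \theta}^o)$ and an isolated gap between $\sigma^2$ and the next positive generalized eigenvalue. Because ${\bf W}^h = \mathrm{diag}(1,1,0)$, ${\bf S}_m$ has rank $6$, so the three ``infinite'' generalized eigenvalues (directions in $\ker {\bf S}_m$) map to the three zero eigenvalues of ${\bf Q}_m^{-1}{\bf S}_m$, and $\lambda_{\max}({\bf Q}_m^{-1}{\bf S}_m)$ picks out precisely $1/\sigma^2$ in the limit.

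Finally, the $\sqrt{m}$ rate follows from standard eigenvalue-perturbation bounds (Weyl for the extremal eigenvalue, plus a Lipschitz argument on reciprocation away from zero): ${\bf Q}_m$ is asymptotically invertible because $\lim_m ({\bf Q}_m^o + \sigma^2{\bf S}_m)$ is positive definite (its kernel would lie in $\ker{\bf Q}_m^o \cap \ker{\bf S}_m = \mathrm{span}({\bm \theta}^o)\cap\ker{\bf S}_m$, which is trivial generically), so ${\bf Q}_m^{-1}{\bf S}_m$ is well defined with probability tending to one. An $O_p(m^{-1/2})$ matrix perturbation, combined with the uniform spectral gap from the previous step, yields an $O_p(m^{-1/2})$ perturbation of the extremal eigenvalue, and the reciprocation $\hat\sigma_m^2 = 1/\lambda_{\max}({\bf Q}_m^{-1}{\bf S}_m)$ preserves this rate. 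The principal obstacle I anticipate is the identifiability step: translating the geometric non-ruled-quadric condition of Assumption~\ref{not_coplanar_assump} into the clean algebraic statement that $\lim_m {\bf Q}_m^o$ has a one-dimensional kernel spanned by ${\bm \theta}^o$ with a strictly positive spectral gap, while simultaneously handling the rank deficiency of ${\bf S}_m$ so that the $\lambda_{\max}$ in the theorem statement genuinely tracks the eigenvalue of interest rather than a spurious one.
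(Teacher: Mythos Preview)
Your proposal is correct and follows the same overall architecture as the paper --- establish the decomposition ${\bf Q}_m = {\bf Q}_m^o + \sigma^2{\bf S}_m + O_p(m^{-1/2})$, identify $1/\sigma^2$ as the relevant eigenvalue, then invoke perturbation --- but the paper packages the eigenvalue step differently and, as a result, sidesteps precisely the obstacle you flag at the end.

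Where you analyze the generalized Rayleigh quotient and worry about the rank-$6$ structure of ${\bf S}_m$ and the one-dimensionality of $\ker(\lim_m{\bf Q}_m^o)$, the paper applies a single algebraic lemma: if ${\bf Q}={\bf R}+{\bf S}$ with ${\bf Q}\succ 0$ and ${\bf R}\succeq 0$ singular, then $\lambda_{\max}({\bf Q}^{-1}{\bf S})=1$. (The proof is one line: ${\bf Q}^{-1}{\bf S}={\bf I}-{\bf Q}^{-1}{\bf R}$, and ${\bf Q}^{-1}{\bf R}$ is similar to the PSD matrix ${\bf Q}^{-1/2}{\bf R}{\bf Q}^{-1/2}$, which has minimum eigenvalue $0$.) Applied with ${\bf R}={\bf Q}_m^o$ and ${\bf S}=\sigma^2{\bf S}_m$, this needs only (i) ${\bf Q}_m^o$ has \emph{some} zero eigenvalue, which follows immediately from ${\bf A}_m^o{\bm\theta}^o=0$ plus Assumption~\ref{nonzero_t}, and (ii) ${\bf Q}_m\succ 0$, which the paper obtains directly from Assumption~\ref{not_coplanar_assump} and the continuous noise distribution. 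In particular, the paper never needs $\ker{\bf Q}_m^o$ to be exactly one-dimensional, nor any explicit spectral-gap argument --- Weyl's inequality on the \emph{maximum} eigenvalue is Lipschitz regardless of multiplicity, so the $O_p(m^{-1/2})$ matrix perturbation transfers to $\lambda_{\max}$ without a gap hypothesis. Your route would also work, but the geometric-to-algebraic translation you anticipate as the principal obstacle is in fact unnecessary for this theorem.
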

The proof is presented in Appendix A.

\subsection{Bias elimination and eigendecomposition} \label{subsection_consistent_solution}
With the $\sqrt{m}$-consistent estimator of noise variance given in Theorem~\ref{theo_noise_est}, we are ready to propose a consistent estimator of the essential matrix $\bf E$. With the consistent estimator of $\bf E$, we can further recover the consistent estimators of $\bf R$ and $\bar {\bf t}$. 
The consistent estimator is derived based on~\eqref{essential_matrix_form} and is tightly related to the classic $8$-point algorithm~\cite{hartley1997defense}. The $8$-point algorithm optimally solves the problem
\begin{subequations}\label{8_point_formulation}
	\begin{align}
		\mathop{\rm minimize~}\limits_{{\bm \theta}} ~& \|{\bf A}_m {\bm \theta}\|^2  \label{8_point_obj} \\
		\mathop{\rm subject~to~} ~& \|{\bm \theta}\|=1. \label{8_point_cons}
	\end{align}
\end{subequations}
Denote the SVD of ${\bf A}_m$ as ${\bf A}_m={\bf U}{\bf D}{\bf V}^\top$. Then, the singular vector ${\bf v}_9$ corresponding to the smallest singular value of ${\bf A}_m$, i.e., the last column of $\bf V$ is the global minimizer of problem~\eqref{8_point_formulation}~\cite{hartley2003multiple}. It can be verified that this is equivalent to finding the eigenvector corresponding to the smallest eigenvalue of ${\bf Q}_m$ (Recall that ${\bf Q}_m={\bf A}_m^\top {\bf A}_m/m$)~\cite{horn2012matrix}. 
The visual interpretation of the optimal solution to problem~\eqref{8_point_formulation} in 2D dimension is shown in Figure~\ref{interpretation_eigenvector}. The matrix ${\bf A}_m$ stretches the unit circle to an ellipse. The shortest radius of the ellipse corresponds to the smallest singular value $\sigma_2$, and the unit vector ${\bf v}_2$ is the singular vector corresponding to $\sigma_2$. 
Although the formulation~\eqref{8_point_formulation} arose from the $8$-point problem, it is essentially a constrained least-squares formulation of~\eqref{essential_matrix_form} and can be used in arbitrary point number $m$. 

To construct a consistent estimator of $\bm \theta$, first, we consider the noise-free problem
\begin{subequations}\label{noise_free_LS}
	\begin{align}
		\mathop{\rm minimize~}\limits_{{\bm \theta}} ~& \|{\bf A}_m^o {\bm \theta}\|^2  \label{noise_free_obj} \\
		\mathop{\rm subject~to~} ~& \|{\bm \theta}\|=1. \label{noise_free_cons}
	\end{align}
\end{subequations}
Since~\eqref{noise_free_LS} is noise-free, under the identifiability Assumption~\ref{not_coplanar_assump}, the optimal solutions of~\eqref{noise_free_LS} equal the true ${\bm \theta}^o={\rm vec}(\bar {\bf t}^{o \wedge} {\bf R}^o)$ up to a scale. 
As noted before, the unit eigenvectors associated with the smallest eigenvalue of ${\bf Q}_m^o$ are the optimal solutions to~\eqref{noise_free_LS}. However, as the noise-free counterpart of ${\bf Q}_m$, ${\bf Q}_m^o$ is unavailable in practice. One natural idea is to eliminate the bias between ${\bf Q}_m$ and ${\bf Q}_m^o$ in the asymptotic case and make the bias-eliminated quantity ${\bf Q}_m^{\rm BE}$ converge to ${\bf Q}_m^o$. 
Note that we can obtain a $\sqrt{m}$-consistent estimator $\hat \sigma_m^2$ of $\sigma^2$. Let
\begin{equation} \label{bias_elimination}
	{\bf Q}_m^{\rm BE}={\bf Q}_m-\hat \sigma_m^2 {\bf S}_m.
\end{equation}
From (24) in the supplementary materials, we have
\begin{equation} \label{bias_elimination_result}
	{\bf Q}_m^{\rm BE}={\bf Q}_m^o+O_p(1/\sqrt{m}),
\end{equation}
which means that ${\bf Q}_m^{\rm BE}$ is a $\sqrt{m}$-consistent estimator of ${\bf Q}_m^o$.

We then select any one of eigenvectors of ${\bf Q}_m^{\rm BE}$ as $\hat {\bm \theta}_m^{\rm BE}$. The essential matrix estimate $\hat {\bf E}_m^{\rm BE}$ can be obtained via inverse vectorization from $\hat {\bm \theta}_m^{\rm BE}$. 
Given the estimate of the essential matrix $\hat {\bf E}_m^{\rm BE}$, we can recover $\hat{\bar {{\bf t}}}_m^{\rm BE}$ and $\hat {\bf R}_m^{\rm BE}$ by the SVD. To do this, there are two points worth mentioning~\cite{hartley2003multiple}. First, the recovery result is invariant to scale, that is, for every $k \neq 0$, $k \hat {\bf E}_m^{\rm BE}$ yields the same results as $\hat {\bf E}_m^{\rm BE}$. Second, the recovery is not unique --- it produces four pairs of solutions, and the correct one needs to be selected as the final estimate. The selection criterion is that the triangulated 3D points with the solution should locate in front of two cameras. We utilize the {\bf recoverPose} function in OpenCV to complete this task. Finally, we give a theorem to end this section:
\begin{theorem} \label{the_consistent_solution}
	The solutions $\hat{\bar {{\bf t}}}_m^{\rm BE}$ and $\hat {\bf R}_m^{\rm BE}$ are $\sqrt{m}$-consistent estimators of $\bar {\bf t}^o$ and ${\bf R}^o$.
\end{theorem}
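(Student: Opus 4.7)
The plan is to chain together three ingredients: the $\sqrt{m}$-consistency of ${\bf Q}_m^{\rm BE}$ established in~\eqref{bias_elimination_result}, an eigenvector perturbation argument to pass from matrix-level consistency to vector-level consistency of $\hat{\bm\theta}_m^{\rm BE}$, and the smoothness of the SVD-based recovery map $\hat{\bf E}\mapsto(\hat{\bar{\bf t}},\hat{\bf R})$ to lift consistency from the essential matrix to its factors.

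First, I would invoke the identifiability Assumption~\ref{not_coplanar_assump} to characterize the spectrum of ${\bf Q}_m^o$. Since the noise-free problem~\eqref{noise_free_LS} has ${\bm\theta}^o={\rm vec}(\bar{\bf t}^{o\wedge}{\bf R}^o)$ as its unique (up to sign) minimizer, zero is a simple eigenvalue of ${\bf Q}_m^o$ with eigenvector proportional to ${\bm\theta}^o$, and the remaining eigenvalues are bounded away from zero. This spectral gap, together with ${\bf Q}_m^{\rm BE}={\bf Q}_m^o+O_p(1/\sqrt m)$ from~\eqref{bias_elimination_result}, puts us in the regime where a standard Davis--Kahan/$\sin\Theta$ perturbation bound applies: the unit eigenvector $\hat{\bm\theta}_m^{\rm BE}$ of ${\bf Q}_m^{\rm BE}$ associated with the smallest eigenvalue satisfies
\begin{equation*}
\min_{s\in\{\pm1\}}\bigl\|\hat{\bm\theta}_m^{\rm BE}-s\,{\bm\theta}^o/\|{\bm\theta}^o\|\bigr\|=O_p(1/\sqrt m).
\end{equation*}
After inverse vectorization, this gives $\hat{\bf E}_m^{\rm BE}=s\,{\bf E}^o/\|{\bf E}^o\|_{\rm F}+O_p(1/\sqrt m)$ for some (data-dependent) sign $s$.

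Next, I would argue that the SVD-based recovery is locally a smooth function of ${\bf E}$ around any genuine normalized essential matrix. Writing ${\bf E}^o={\bf U}^o{\bm\Sigma}^o{\bf V}^{o\top}$ with ${\bm\Sigma}^o={\rm diag}(1,1,0)$, the two nonzero singular values are equal and well-separated from the zero singular value, so in a neighborhood the first two left/right singular vectors (a 2D invariant subspace) and their orthogonal complement depend smoothly on ${\bf E}$. The translation direction is recovered as the null vector of ${\bf E}$, and the rotation is recovered from the first two singular vectors via the standard twist-matrix construction; both operations are $C^\infty$ on the manifold $\mathcal M_E$ away from degeneracies. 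Combined with the four-fold sign/chirality ambiguity being resolved by the cheirality test ({\bf recoverPose}), which selects the correct branch with probability tending to one as $m\to\infty$ (since the triangulated depths converge in probability to their positive ground-truth values), the delta method delivers
\begin{equation*}
\hat{\bar{\bf t}}_m^{\rm BE}-\bar{\bf t}^o=O_p(1/\sqrt m),\quad \hat{\bf R}_m^{\rm BE}-{\bf R}^o=O_p(1/\sqrt m),
\end{equation*}
which is precisely the claim.

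The main obstacle I anticipate is the careful handling of the discrete ambiguities: the sign of the eigenvector, the overall scale of $\hat{\bf E}_m^{\rm BE}$, and the four-fold decomposition ambiguity. All three produce finitely many candidate $(\hat{\bar{\bf t}},\hat{\bf R})$ values that are bounded away from each other under Assumption~\ref{nonzero_t}, so a compactness/continuous-mapping argument is needed to show that the cheirality-based selection in {\bf recoverPose} eventually (with probability tending to one) picks the candidate lying in the $O_p(1/\sqrt m)$ neighborhood of the truth; once that event is isolated, the smoothness of SVD on the relevant branch finishes the proof. The only other subtlety is verifying the spectral gap of ${\bf Q}_m^o$ uniformly in $m$, which follows because ${\bf Q}_m^o$ converges almost surely to a deterministic limit whose null space is one-dimensional under Assumption~\ref{not_coplanar_assump}.
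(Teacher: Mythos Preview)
Your proposal is correct and follows the same route as the paper: start from the $\sqrt{m}$-consistency of ${\bf Q}_m^{\rm BE}$ in~\eqref{bias_elimination_result}, pass through eigendecomposition to $\hat{\bm\theta}_m^{\rm BE}$, then through SVD-based recovery to $(\hat{\bar{\bf t}}_m^{\rm BE},\hat{\bf R}_m^{\rm BE})$. The paper's own proof is a two-sentence sketch that simply asserts eigendecomposition, inverse vectorization, and SVD are ``continuous functions that preserve $\sqrt{m}$-consistency''; your Davis--Kahan argument, spectral-gap verification, and cheirality-selection analysis are exactly the details one would supply to make that sketch rigorous.
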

\begin{proof}
	From~\eqref{bias_elimination_result} we know that ${\bf Q}_m^{\rm BE}$ is a $\sqrt{m}$-consistent estimator of ${\bf Q}_m^o$. Recall that we obtain $\hat {\bf E}_m^{\rm BE}$ from ${\bf Q}_m^{\rm BE}$ by eigendecomposition and inverse vectorization and recover $\hat{\bar {{\bf t}}}_m^{\rm BE}$ and $\hat {\bf R}_m^{\rm BE}$ from $\hat {\bf E}_m^{\rm BE}$ via SVD. Since eigendecomposition, inverse vectorization, and SVD are all continuous functions that can preserve the property of $\sqrt{m}$-consistency, the proof is completed. 
\end{proof}
 \begin{figure}[!t]
    \centering
  \includegraphics[width=0.52\linewidth]{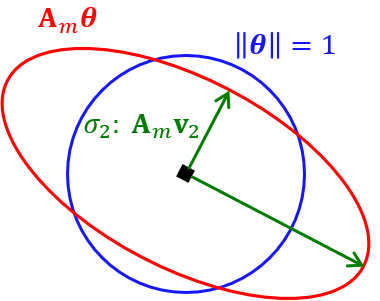}
   \caption{Illustration of the optimal solution to problem~\eqref{8_point_formulation} in 2D dimension.}
            \label{interpretation_eigenvector}
\end{figure}

\section{Gauss-Newton Iterations for Rotation and Normalized Translation} \label{section_GN}
Recall that our ultimate goal is to optimally solve the ML problem~\eqref{LS_problem} in the asymptotic case. We achieve this via a two-step scheme. As shown in Section~\ref{section_consistent_estimator}, in the first step, we have obtained consistent estimators $\hat{\bar {{\bf t}}}_m^{\rm BE}$ and $\hat {\bf R}_m^{\rm BE}$. It is noteworthy to see that although $\hat{\bar {{\bf t}}}_m^{\rm BE}$ and $\hat {\bf R}_m^{\rm BE}$ own consistency, they are not optimal in the sense of minimum variance.
Nonetheless, the consistency enables them to be a theoretically guaranteed initial value in the asymptotic case. As the number of points increases, $\hat{\bar {{\bf t}}}_m^{\rm BE}$ and $\hat {\bf R}_m^{\rm BE}$ will converge into the attraction region of the global solution to~\eqref{LS_problem}, which ensures that GN iterations optimally solve~\eqref{LS_problem}. Therefore, in the second step, we conduct GN iterations to refine the consistent estimators $\hat{\bar {{\bf t}}}_m^{\rm BE}$ and $\hat {\bf R}_m^{\rm BE}$.  

To let the refined solution converge to the ML estimator, GN iterations should be derived from the ML problem~\eqref{LS_problem}. However, this comes with two difficulties. First, in addition to $\bar {\bf t}$ and $\bf R$, there are $m$ unknown $k_i$ in~\eqref{LS_problem}. If we involve updating $k_i$'s in the GN iterations, the Jacobian matrix $\bf J$ has an expanding size as $m$ increases, and calculating the inverse of ${\bf J}^\top {\bf J}$ needs $O(m^3)$ time complexity, which is computationally inefficient. Second, the GN iterations need to meet two constraints, that is, the rotation matrix belongs to ${\rm SO}(3)$, and the normalized translation vector is on the 2-sphere. In what follows, we first eliminate $k_i$'s by resorting to the Karush-Kuhn-Tucker (KKT) conditions~\cite{boyd2004convex}. We then derive the GN iteration formulation on the ${\rm SO}(3)$ group and 2-sphere. 

\subsection{Variable elimination via KKT conditions} \label{subsection_variable_elimination}
In this subsection, we express optimal $k_i$'s with $\bar {\bf t}$ and $\bf R$ by utilizing the KKT conditions. By doing so, the Jacobian matrix $\bf J$ associated with only $\bar {\bf t}$ and $\bf R$ can be used, so the matrix ${\bf J}^\top {\bf J}$ has a fixed size. Specifically, we denote the objective function of~\eqref{LS_problem} as $f_m$. According to the KKT stationary condition, we have
\begin{equation} \label{stationary_condition}
	\frac{\partial f_m}{\partial k_i} + \lambda_i=0,
\end{equation}
where $\lambda_i$ is the Lagrange multiplier corresponding to $k_i$. In addition, the KKT complementary slackness condition requires that $\lambda_i k_i=0,i=1,\ldots,m$. 
\begin{lemma} \label{optimal_multiplier}
    In the asymptotic case, i.e., when the point number $m$ is sufficiently large, the optimal $\lambda_i^*$ satisfies that $\lambda_i^*=0,i=1,\ldots,m$.
\end{lemma}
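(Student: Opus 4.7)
The plan is to exhibit the constraint $k_i > 0$ as asymptotically inactive, so that KKT complementary slackness $\lambda_i^* k_i^* = 0$ immediately yields $\lambda_i^* = 0$. The key structural input is the strict positivity of the ground-truth scale $k_i^o = \|{\bf t}\|/x_{i3}^o$: Assumption~\ref{nonzero_t} ensures $\|{\bf t}\| > 0$, while the cheirality condition intrinsic to the pinhole model (a visible 3D point must lie in front of the camera) ensures $x_{i3}^o > 0$. Hence every $k_i^o$ is strictly positive, bounded away from zero.

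Next I would combine this with consistency. As $m \to \infty$, the optimizer $(\hat{\bf R}^*, \hat{\bar{\bf t}}^*)$ of the ML problem~\eqref{LS_problem} converges in probability to $({\bf R}^o,\bar{\bf t}^o)$, which is the very consistency property that motivates Theorem~\ref{the_consistent_solution} and the two-step scheme. For each $i$, the partial minimization of $\|{\bf r}_i\|^2$ over $k_i$ with $({\bf R}^*,\bar{\bf t}^*)$ frozen is a smooth one-dimensional problem whose unconstrained minimizer is a continuous function of $({\bf R}^*,\bar{\bf t}^*,{\bm\epsilon}_i)$ and reduces to $k_i^o$ at the noise-free ground truth. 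Continuity, together with consistency and the strict positivity of $k_i^o$, places the optimum $k_i^*$ in the open region $k_i > 0$ with probability tending to one; the stationarity identity~\eqref{stationary_condition} combined with complementary slackness $\lambda_i^* k_i^* = 0$ then forces $\lambda_i^* = 0$ for each $i$.

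The main technical obstacle will be making this uniform over $i = 1,\ldots,m$: the per-point noise ${\bm\epsilon}_i$ does not shrink with $m$, so one cannot simply take a componentwise limit and trivially interchange it with the ``for all $i$'' quantifier. I would address this by assuming a uniform lower bound on $\min_i k_i^o$ (which holds whenever the depths $x_{i3}^o$ stay bounded above) together with Gaussian-tail estimates on ${\bm\epsilon}_i$ and a union bound, so that the exceptional event in which some $k_i^*$ lands on the boundary has probability vanishing in the asymptotic regime. An alternative, slightly softer route is to linearize~\eqref{stationary_condition} around $({\bf R}^o,\bar{\bf t}^o,k_i^o)$ and observe that the induced perturbation in $k_i$ is $O_p(\sigma)$ while $k_i^o$ is $\Omega(1)$, yielding the same conclusion.
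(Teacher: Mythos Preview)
Your proposal is essentially correct and shares the same skeleton as the paper's proof: show that the optimal $k_i^*$ lies strictly in the interior ($k_i^* > 0$), then invoke complementary slackness to conclude $\lambda_i^* = 0$. Both arguments rely on the strict positivity of the true scales $k_i^o = \|{\bf t}^o\|/x_{i3}^o$, coming from Assumption~\ref{nonzero_t} and cheirality.

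The route differs, however. The paper works directly with the objective: it decomposes $f_m = f_m^o + \tfrac{2}{m}\sum_i {\bm\epsilon}_i^\top {\bf r}_i^o + \tfrac{1}{m}\sum_i \|{\bm\epsilon}_i\|^2$, uses Lemma~\ref{lemma_noise_aver} to show the cross term vanishes and the last term tends to $2\sigma^2$, and then appeals explicitly to Assumption~\ref{not_coplanar_assump} (via Hartley--Zisserman, Theorem~22.9) to argue that the noise-free objective $f_m^o$ vanishes \emph{only} at the true configuration, where every $k_i = k_i^o > 0$. This is a self-contained consistency argument for the full parameter vector $({\bf R},\bar{\bf t},\{k_i\})$ in one step. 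You instead take consistency of the ML $(\hat{\bf R}^*,\hat{\bar{\bf t}}^*)$ as given and then analyze the per-index partial minimization in $k_i$ by continuity. Two small remarks: your citation of Theorem~\ref{the_consistent_solution} is slightly off, since that result concerns the bias-eliminated estimator, not the ML one---the ML consistency you need is the general property asserted earlier in Section~\ref{section_problem_formulation}, and proving it would again require the identifiability Assumption~\ref{not_coplanar_assump}, which you do not mention explicitly. On the other hand, your concern about uniformity over $i=1,\ldots,m$ (per-point noise not shrinking) is a genuine technical point that the paper's proof passes over, so your tail-bound/linearization remedies add rigor the paper omits.
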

The proof of Lemma~\ref{optimal_multiplier} is presented in Appendix B. 
By combining~\eqref{stationary_condition} and Lemma~\ref{optimal_multiplier}, we have $\partial f_m/\partial k_i=0$. Hence, we can express optimal $k_i$'s with $\bar {\bf t}$ and $\bf R$. Here we omit the tedious derivation and directly give the result:
\begin{equation} \label{expression_ki}
	k_i=\frac{{\bf y}_i^{h \top}{\bf R}^\top {\bf C}_1 ({\bf I}_3 \otimes \bar {\bf t}) {\bf R}{\bf y}_i^{h}}      {\bar {\bf t}^\top {\bf C}_2 ({\bf I}_3 \otimes \bar {\bf t}) {\bf R}{\bf y}_i^{h}},
\end{equation}
where 
\begin{align*}
	{\bf C}_1 &=\begin{bmatrix}
		-{\bf e}_3^\top & {\bf 0}_3^\top & z_{i1}{\bf e}_3^\top\\ 
		{\bf 0}_3^\top & -{\bf e}_3^\top & z_{i2}{\bf e}_3^\top\\
		{\bf e}_1^\top & {\bf e}_2^\top & -(z_{i1}{\bf e}_1^\top+z_{i2}{\bf e}_2^\top)
	\end{bmatrix}, \\
        {\bf C}_2 &=\begin{bmatrix}
		{\bf e}_3^\top & {\bf 0}_3^\top & z_{i1}{\bf e}_3^\top-{\bf e}_1^\top\\ 
		{\bf 0}_3^\top & {\bf e}_3^\top & z_{i2}{\bf e}_3^\top-{\bf e}_2^\top\\
		-z_{i1}{\bf e}_3^\top & -z_{i2}{\bf e}_3^\top & {\bf 0}_3^\top
	\end{bmatrix}.
\end{align*}
\subsection{Gauss-Newton iterations on SO(3) and 2-sphere} \label{subsection_GN}
Since the GN algorithm is an extension of Newton's method, when the initial guess is near the minimum, the rate of its convergence can approach quadratic. Thanks to the $\sqrt{m}$-consistent property of $\hat {\bf R}_m^{\rm BE}$ and $\hat{\bar {{\bf t}}}_m^{\rm BE}$, when the point number $m$ is large, they are sufficiently near the global minimum of the ML problem~\eqref{LS_problem}, and only a one-step GN iteration suffices to achieve the same asymptotic property (i.e., asymptotically statistically-efficient) as the ML estimator $\hat {\bf R}_m^{\rm ML}$ and $\hat{\bar {{\bf t}}}_m^{\rm ML}$, which will be formally stated in Theorem~\ref{asymptotic_efficiency_theorem}.
Before that, we take $\hat {\bf R}_m^{\rm BE}$ and $\hat{\bar {{\bf t}}}_m^{\rm BE}$ as the initial value and derive the one-step GN iteration. 

Note that the rotation matrix belongs to ${\rm SO}(3)$, and the normalized translation locates on the 2-sphere. Hence, we are going to derive the GN iteration formulation on ${\rm SO}(3)$ and 2-sphere. 
For the ${\rm SO}(3)$ constraint, given the initial estimate $\hat {\bf R}_m^{\rm BE}\in{\rm SO}(3)$ and any ${\bf s} \in \mathbb R^3$, the matrix $\hat {\bf R}_m^{\rm BE} \exp({\bf s}^\wedge)$ also belongs to ${\rm SO}(3)$. Hence, we can update the unconstrained vector $\bf s$ to guarantee the refined rotation matrix estimate is still in ${\rm SO}(3)$. 
For the 2-sphere constraint, let 
\begin{align*}
	\alpha_0 &=\sin^{-1} \left( \hat {\bar {t}}_{m3}^{\rm BE}\right), \\
	\beta_0 &=\begin{cases}
		\tan^{-1} \left( \hat {\bar {t}}_{m2}^{\rm BE}/\hat {\bar {t}}_{m1}^{\rm BE}\right),  ~{\rm if} ~~\hat {\bar {t}}_{m1}^{\rm BE}>0 \\
		\tan^{-1} \left( \hat {\bar {t}}_{m2}^{\rm BE}/\hat {\bar {t}}_{m1}^{\rm BE}\right)+180^{\rm o}, ~{\rm if} ~~\hat {\bar {t}}_{m1}^{\rm BE}<0,
	\end{cases}
\end{align*}
where we express $\hat{\bar {{\bf t}}}_{m}^{\rm BE}$ as $\hat{\bar {{\bf t}}}_{m}^{\rm BE}=\left[\hat {\bar {t}}_{m1}^{\rm BE} ~~\hat {\bar {t}}_{m2}^{\rm BE}~~\hat {\bar {t}}_{m3}^{\rm BE}\right]^\top$. Given any $\alpha,\beta \in \mathbb R$, the vector
\begin{equation*}
	\bar {\bf t}(\alpha,\beta)=\begin{bmatrix}
		\cos (\alpha_0+\alpha)\cos (\beta_0+\beta) \\
		\cos (\alpha_0+\alpha)\sin (\beta_0+\beta) \\
		\sin (\alpha_0+\alpha)
	\end{bmatrix}
\end{equation*}
is still on the $2$-sphere. Hence, we can update $\alpha$ and $\beta$ to refine the normalized translation estimate. Let $\hat{{\bf s}}_m^{\rm GN}$, $\hat \alpha_m^{\rm GN}$, and $\hat \beta_m^{\rm GN}$ denote the results obtained by a one-step GN iteration. Then the refined rotation matrix and normalized translation vector are given as
\begin{equation} \label{GN_R_and_t}
	\hat{{\bf R}}^{\rm GN}_m=\hat{{\bf R}}_m^{\rm BE}\exp\left({{}\hat{{\bf s}}_m^{\rm GN}}^{\wedge} \right),~~\hat{\bar {{\bf t}}}_m^{\rm GN}=\bar {\bf t}(\hat \alpha_m^{\rm GN},\hat \beta_m^{\rm GN}).
\end{equation}

For the explicit derivation of $\hat{{\bf s}}_m^{\rm GN}$, $\hat \alpha_m^{\rm GN}$, and $\hat \beta_m^{\rm GN}$, one can refer to Appendix C. 

\begin{theorem} \label{asymptotic_efficiency_theorem}
    Denote the one-step GN iteration of the $\sqrt{m}$-consistent estimators $\hat {\bf R}_m^{\rm BE}$ and $\hat{\bar {{\bf t}}}_m^{\rm BE}$ by $\hat {\bf R}_m^{\rm GN}$ and $\hat{\bar {{\bf t}}}_m^{\rm GN}$, respectively. Then, 
    \begin{equation*}
        \hat {\bf R}_m^{\rm ML}-\hat {\bf R}_m^{\rm GN}=o_p(1/\sqrt{m}),~~\hat{\bar {{\bf t}}}_m^{\rm ML}-\hat{\bar {{\bf t}}}_m^{\rm GN}=o_p(1/\sqrt{m}).
    \end{equation*}
\end{theorem}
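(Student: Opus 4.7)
The plan is to invoke the classical ``one-step estimator'' argument of Le Cam: starting from any $\sqrt{m}$-consistent initializer, a single Newton (or Gauss--Newton) update of the ML objective produces an estimator that differs from the true MLE by $o_p(1/\sqrt m)$. I would execute this in the local chart introduced in Section~\ref{section_GN}. First, pack the three skew coordinates $\bf s$ for $\mathrm{SO}(3)$ and the two angular perturbations $(\alpha,\beta)$ for $S^2$ into a single parameter $\bm\xi\in\mathbb R^5$, with $\bm\xi=\mathbf 0$ corresponding to $(\hat{\bf R}_m^{\rm BE},\hat{\bar{\bf t}}_m^{\rm BE})$. Using Lemma~\ref{optimal_multiplier} and the closed form~\eqref{expression_ki}, substitute the optimal depths $k_i^\star(\bm\xi)$ back into the ML objective to obtain a reduced, unconstrained objective $g_m(\bm\xi)$ on $\mathbb R^5$; by the envelope theorem, the first-order optimality conditions of the original and profiled problems coincide asymptotically, so both $\hat{\bm\xi}_m^{\rm ML}$ and $\hat{\bm\xi}_m^{\rm GN}$ can be characterized entirely through $g_m$.

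Next, I would expand the stationary condition $\nabla g_m(\hat{\bm\xi}_m^{\rm ML})=0$ to second order around $\bm\xi=\mathbf 0$:
\[
0 = \nabla g_m(\mathbf 0) + \nabla^2 g_m(\mathbf 0)\,\hat{\bm\xi}_m^{\rm ML} + O_p\bigl(\|\hat{\bm\xi}_m^{\rm ML}\|^2\bigr).
\]
Since both $\hat{\bm\xi}_m^{\rm ML}$ and $\hat{\bm\xi}_m^{\rm BE}$ are $\sqrt m$-consistent (Theorem~\ref{the_consistent_solution} and ML regularity), the remainder is $O_p(1/m)$. Comparing with the one-step GN update $\hat{\bm\xi}_m^{\rm GN} = -\bigl[H_m^{\rm GN}(\mathbf 0)\bigr]^{-1}\nabla g_m(\mathbf 0)$, where $H_m^{\rm GN}$ is the outer-product Jacobian Hessian, subtraction yields
\[
\hat{\bm\xi}_m^{\rm ML} - \hat{\bm\xi}_m^{\rm GN} = \Bigl(\bigl[H_m^{\rm GN}(\mathbf 0)\bigr]^{-1} - \bigl[\nabla^2 g_m(\mathbf 0)\bigr]^{-1}\Bigr)\nabla g_m(\mathbf 0) + O_p(1/m).
\]

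The quantitative core is then showing that the right-hand side is $o_p(1/\sqrt m)$. By expanding $\nabla g_m(\mathbf 0)$ around the true parameter $\bm\xi^o$ and using $\nabla g_m(\bm\xi^o)=O_p(1/\sqrt m)$ (CLT, zero-mean noise) together with $\hat{\bm\xi}_m^{\rm BE}-\bm\xi^o=O_p(1/\sqrt m)$, we get $\nabla g_m(\mathbf 0) = O_p(1/\sqrt m)$. It then suffices to prove $[H_m^{\rm GN}(\mathbf 0)]^{-1}-[\nabla^2 g_m(\mathbf 0)]^{-1}=o_p(1)$. The exact Hessian of a nonlinear least-squares objective differs from the GN approximation by a term of the form $\tfrac{1}{m}\sum_i \mathbf r_i^\top \nabla^2 \mathbf r_i$; at the truth, $\mathbf r_i=\bm\epsilon_i$ has mean zero and is independent of $\nabla^2 \mathbf r_i$ evaluated at $\bm\xi^o$, so by the weak law of large numbers this average vanishes in probability. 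At $\hat{\bm\xi}_m^{\rm BE}$ the residuals are perturbed by an $O_p(1/\sqrt m)$ deterministic shift, which does not change the limit, and by Assumption~\ref{not_coplanar_assump} both matrices converge to the same positive-definite Fisher information, giving invertibility and the required $o_p(1)$ bound on the inverse difference.

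Finally, since the retractions ${\bf s}\mapsto\hat{\bf R}_m^{\rm BE}\exp({\bf s}^\wedge)$ and $(\alpha,\beta)\mapsto\bar{\bf t}(\alpha,\beta)$ are smooth local diffeomorphisms at $\bm\xi=\mathbf 0$, the $o_p(1/\sqrt m)$ bound on $\hat{\bm\xi}_m^{\rm ML}-\hat{\bm\xi}_m^{\rm GN}$ transfers directly to the desired bounds on $\hat{\bf R}_m^{\rm ML}-\hat{\bf R}_m^{\rm GN}$ and $\hat{\bar{\bf t}}_m^{\rm ML}-\hat{\bar{\bf t}}_m^{\rm GN}$. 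The principal obstacle will be the uniform control required in the Hessian-comparison step: one must verify that the family $\{\nabla^2\mathbf r_i\}$ admits a uniform envelope over a shrinking neighborhood of $\bm\xi^o$ so that the LLN argument for $\tfrac{1}{m}\sum_i \mathbf r_i^\top \nabla^2\mathbf r_i$ remains valid at the random point $\hat{\bm\xi}_m^{\rm BE}$, and that the profiling step does not inject singular contributions into the Hessian---this is precisely where Lemma~\ref{optimal_multiplier} is indispensable, as it guarantees that the inequality constraints $k_i>0$ remain inactive asymptotically and hence never alter the smooth second-order structure of $g_m$.
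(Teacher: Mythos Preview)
Your proposal is correct and follows essentially the same route as the paper's proof in Appendix~D: work in the local chart $({\bf s},\alpha,\beta)$, Taylor-expand the stationarity condition of the (profiled) ML objective around the $\sqrt{m}$-consistent initializer, and compare the resulting Newton step with the Gauss--Newton step, using that ${\bf J}^\top{\bf r}/m=O_p(1/\sqrt{m})$ and that the discrepancy between $\nabla^2 f_{m,{\bf 0}}$ and its GN approximation vanishes in probability. The paper is terser---it invokes Lemma~\ref{lemma_noise_aver} directly to get an $O_p(1/\sqrt{m})$ rate on the Hessian-difference factor rather than your $o_p(1)$ via the LLN on $\tfrac{1}{m}\sum_i{\bf r}_i^\top\nabla^2{\bf r}_i$---but the structure and conclusion are identical.
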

The proof of Theorem~\ref{asymptotic_efficiency_theorem} is presented in Appendix D. Theorem~\ref{asymptotic_efficiency_theorem} implies that $\hat {\bf R}_m^{\rm GN}$ and $\hat{\bar {{\bf t}}}_m^{\rm GN}$ have the same asymptotic property as the ML estimator $\hat {\bf R}_m^{\rm ML}$ and $\hat{\bar {{\bf t}}}_m^{\rm ML}$.

By now we have introduced the whole algorithm. In summary, it mainly consists of three modules: noise variance estimation, consistent solution construction, and a one-step GN refinement. The proposed algorithm is summarized in Algorithm~\ref{pseudo_algorithm}, where we denote it as \texttt{CECME} --- {\bf C}onsistent and asymptotically statistically-{\bf E}fficient {\bf C}amera {\bf M}otion {\bf E}stimator. We remark that our algorithm has significant advantages in the asymptotic case. In terms of estimation accuracy, owing to the $\sqrt{m}$-consistency of $\hat {\bf R}_m^{\rm BE}$ and $\hat{\bar {{\bf t}}}_m^{\rm BE}$ obtained in the first step, only a single GN iteration in the second step will suffice to achieve the CRB asymptotically, which is also verified by our simulation results. We put the derivation of the CRB in Appendix E. In terms of time complexity, it can be verified that {\bf Lines}~\ref{line_1},\ref{line_9},\ref{line_11} in Algorithm~\ref{pseudo_algorithm} cost $O(m)$ time, and {\bf Lines}~\ref{line_3},\ref{line_5},\ref{line_7} cost $O(1)$ time. Therefore, the whole time complexity of \texttt{CECME} is $O(m)$, making it suitable for real-time implementation when the point number $m$ is large. The experiment results in the following section will demonstrate the superiority (in terms of MSE and CPU time) of the proposed algorithm over SOTA ones when the point number is large.  
\begin{algorithm}
	\caption{CECME}
	\label{pseudo_algorithm}
	\begin{algorithmic}[1]
		\Statex {\bf Input:} Point correspondences $\{({\bf y}_i,{\bf z}_i)\}_{i=1}^{m}$.
		\Statex {\bf Output:} The estimates of the rotation matrix $\hat{{\bf R}}^{\rm GN}_m$ and normalized translation vector $\hat{\bar {{\bf t}}}_m^{\rm GN}$. 
		\State Calculate the matrix ${\bf Q}_m$ in~\eqref{bf_Q_m} and ${\bf S}_m$ in~\eqref{bf_S_m} \label{line_1};
		\State Obtain noise variance estimate via $\hat \sigma_m^2=1/\lambda_{\rm max} ({\bf Q}_m^{-1} {\bf S}_m)$ \label{line_3} based on Thereom~\ref{theo_noise_est};
		\State Calculate ${\bf Q}_m^{\rm BE}={\bf Q}_m-\hat \sigma_m^2 {\bf S}_m$ \label{line_5} as in~\eqref{bias_elimination};
		\State Let $\hat {\bm \theta}_m^{\rm BE}$ be any eigenvector of ${\bf Q}_m^{\rm BE}$ associated with $\lambda_{\rm min}({\bf Q}_m^{\rm BE})$, as described below~\eqref{bias_elimination_result}\label{line_7};
		\State Recover $(\hat{\bar {{\bf t}}}_m^{\rm BE},\hat {\bf R}_m^{\rm BE})$ from $\hat {\bm \theta}_m^{\rm BE}$, as described below~\eqref{bias_elimination_result} \label{line_9};
		\State Execute a one-step GN iteration shown in~\eqref{GN_R_and_t} \label{line_11}.
	\end{algorithmic}
\end{algorithm}

\section{Experiment} \label{section_experiment}
In this section, we conduct experiments on both synthetic data and real images. The classical or SOTA methods compared with ours are
\begin{itemize}
	\item \texttt{Eigen}: the eigenvalue-based method proposed by Kneip and Lynen~\cite{kneip2013direct}
	\item \texttt{SDP}: the SDP-based method proposed by Zhao~\cite{zhao2020efficient}
	\item \texttt{GN-E}: the GN iterations on the manifold of normalized essential matrices proposed by Helmke \emph{et al.}~\cite{helmke2007essential}
    \item \texttt{DFE}: the learning-based fundamental matrix estimation approach proposed by Ranftl and Koltun~\cite{ranftl2018deep} 
    \item \texttt{NACNet}: a SOTA learning-based essential matrix estimation approach proposed by Moran \emph{et al.}~\cite{moran2024consensus}
    \item \texttt{LM}: the basic Levenberg-Marquardt algorithm with a robust kernel to minimize point-to-epipolar-line distances
\end{itemize}

We use open source codes for \texttt{Eigen}, \texttt{SDP}, \texttt{DFE}, and \texttt{NACNet} methods, and realize \texttt{GN-E} and \texttt{LM} methods by ourselves. Note that \texttt{Eigen}, \texttt{GN-E}, and \texttt{LM} need an initial guess of the rotation matrix or essential matrix. We take the RANSAC results with the five-point solver~\cite{nister2004efficient} as their inputs. The learning-based methods \texttt{DFE} and \texttt{NACNet} are only compared in experiments with real images.
We point out that in simulations with outliers and real image tests, in order to enhance robustness, we first implement the five-point RANSAC algorithm to clean the data for all algorithms. We also embed a truncated least-squares kernel in our GN iteration.

\subsection{Experiment with synthetic data} \label{subsection_simulation}
In the simulation, the translation is set as $[5~5~5]^\top {\rm cm}$, and the Euler angles are $[20^o~20^o~20^o]^\top$. The two cameras have the same intrinsic matrix, where the focal length is $f_x=f_y=800~ {\rm pixels}$ ($5 ~{\rm cm}$), and the size of the image plane is $640 \times 480 ~ {\rm pixels}$.  The principle point lies in the top-left corner of the image plane and the principle point offsets are $u_0=320 ~ {\rm pixels}$ and $v_0=240 ~ {\rm pixels}$. 
For the generation of 3D points that are visible in both images, we first randomly generate 2D points in the first image and then endow them each with a random depth within $[1,5]$ m. Only the 3D points whose projection in the second camera is within its image plane are selected as valid ones. 
As noted in Assumption~\ref{noise_assump}, the measurement is corrupted by a zero-mean Gaussian noise whose standard deviation is $\sigma$ pixels. 

The evaluation metric for estimation accuracy is mean squared error (MSE), which is defined as follows: 
\begin{align*}
	{\rm MSE}_{\bf R} & = \frac{1}{K} \sum_{k=1}^{K} \left\|\hat {\bf R}_k-{\bf R}^o \right\|_{\rm F}^2, \\
	{\rm MSE}_{\bar {\bf t}} & = \frac{1}{K} \sum_{k=1}^{K} \left\|\hat{\bar {{\bf t}}}_k-\bar {\bf t}^o \right\|^2,
\end{align*}
where $\hat {\bf R}_k$ and $\hat{\bar {{\bf t}}}_k$ are the estimates obtained in the $k$-th Monte Carlo test, and $K$ is the total number of Monte Carlo tests. We also present the bias of each estimator. The bias is given as
\begin{align*}
	\Delta {\bf R} & = \left|\frac{1}{K} \sum_{k=1}^{K} \hat {\bf R}_k-{\bf R}^o \right|, ~~{\rm Bias}_{\bf R}=\sum_{i=1}^{3} \sum_{j=1}^{3} \Delta {\bf R}_{ij}\\
	\Delta \bar {\bf t} & =\left| \frac{1}{K} \sum_{k=1}^{K} \hat{\bar {{\bf t}}}_k-\bar {\bf t}^o \right|,~~{\rm Bias}_{\bar {\bf t}}=\sum_{i=1}^{3} \Delta \bar {\bf t}_{i}.
\end{align*}

\begin{figure*}[!t]
	\centering
	\begin{subfigure}[b]{0.24\textwidth}
		\centering
		\includegraphics[width=\textwidth]{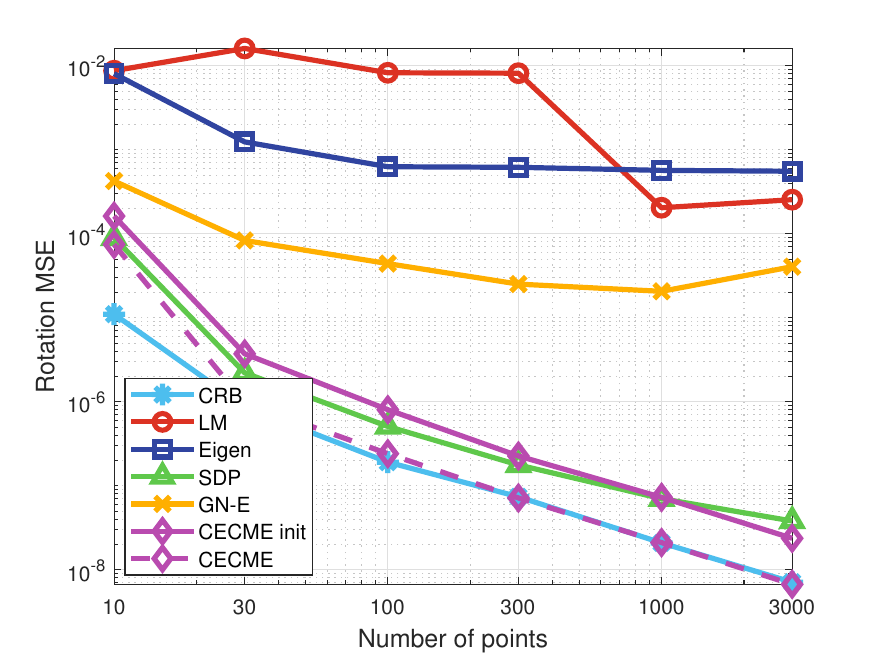}
		\caption{$\sigma=0.25$px ($\bf R$)}
		\label{MSE_R_025px}
	\end{subfigure}
	\begin{subfigure}[b]{0.24\textwidth}
		\centering
		\includegraphics[width=\textwidth]{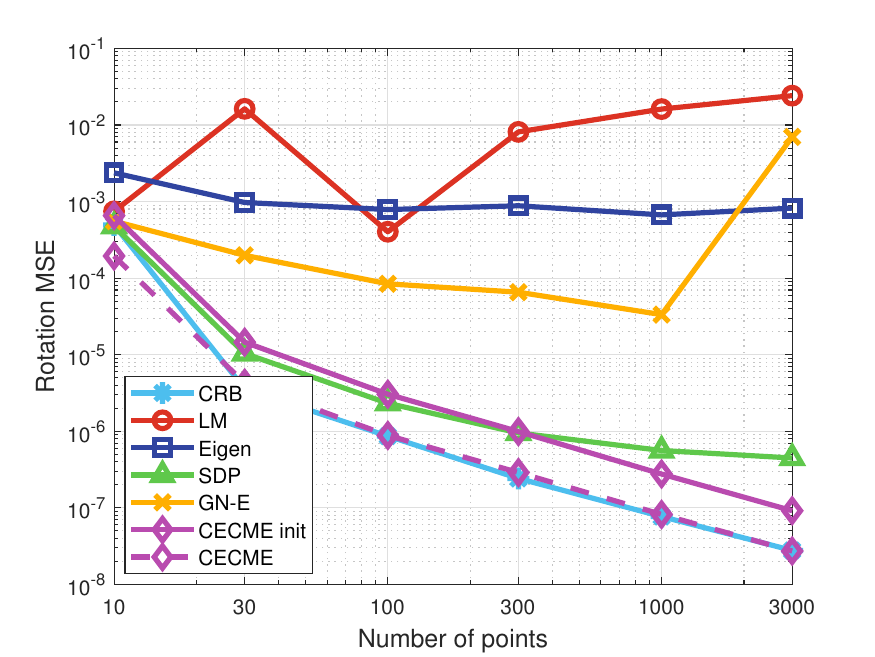}
		\caption{$\sigma=0.5$px ($\bf R$)}
		\label{MSE_R_05px}
	\end{subfigure}
	\begin{subfigure}[b]{0.24\textwidth}
		\centering
		\includegraphics[width=\textwidth]{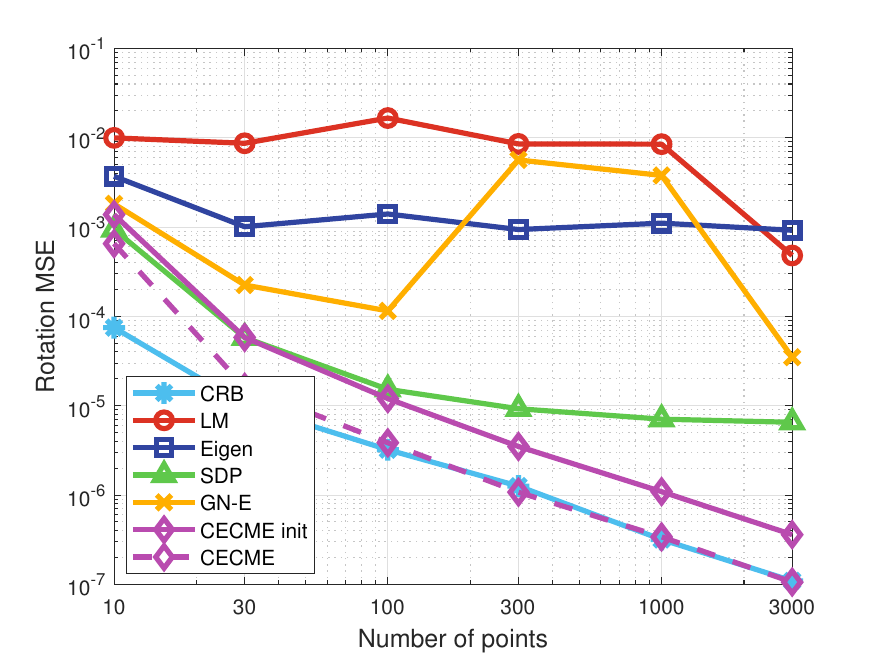}
		\caption{$\sigma=1$px ($\bf R$)}
		\label{MSE_R_1px}
	\end{subfigure}
	\begin{subfigure}[b]{0.24\textwidth}
		\centering
		\includegraphics[width=\textwidth]{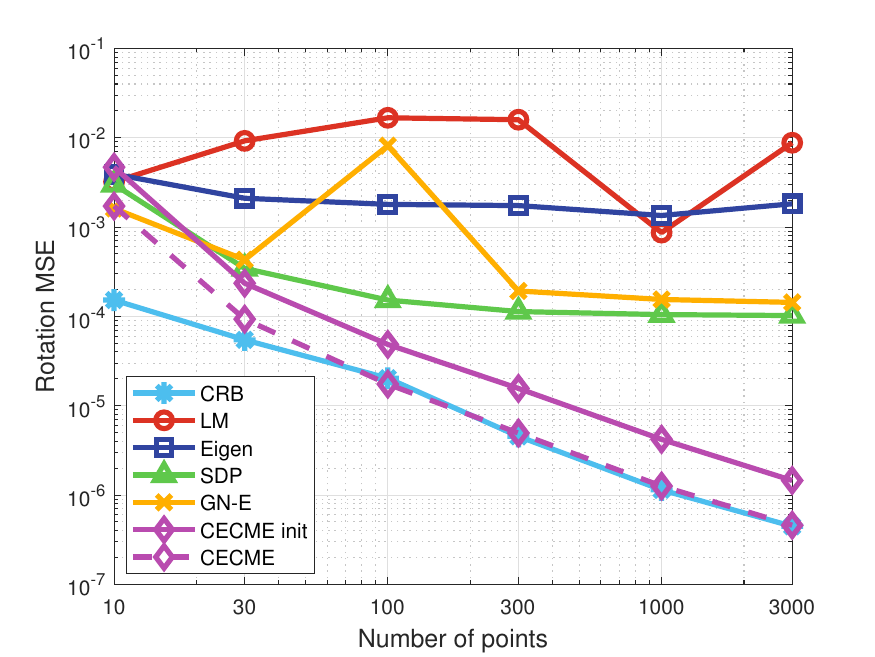}
		\caption{$\sigma=2$px ($\bf R$)}
		\label{MSE_R_2px}
	\end{subfigure}
	\begin{subfigure}[b]{0.24\textwidth}
		\centering
		\includegraphics[width=\textwidth]{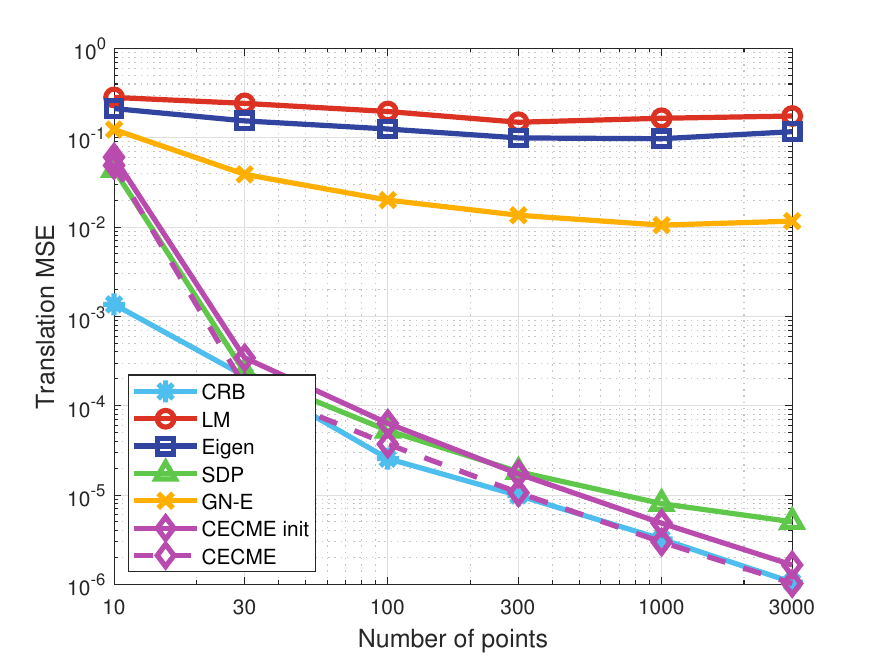}
		\caption{$\sigma=0.25$px ($\bar {\bf t}$)}
		\label{MSE_t_025px}
	\end{subfigure}
	\begin{subfigure}[b]{0.24\textwidth}
		\centering
		\includegraphics[width=\textwidth]{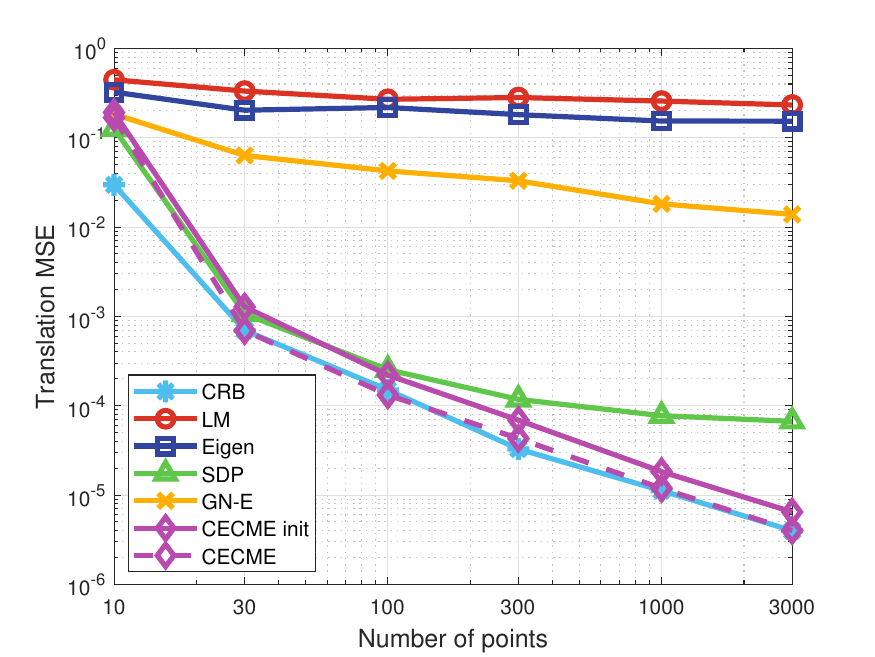}
		\caption{$\sigma=0.5$px ($\bar {\bf t}$)}
		\label{MSE_t_05px}
	\end{subfigure}
	\begin{subfigure}[b]{0.24\textwidth}
		\centering
		\includegraphics[width=\textwidth]{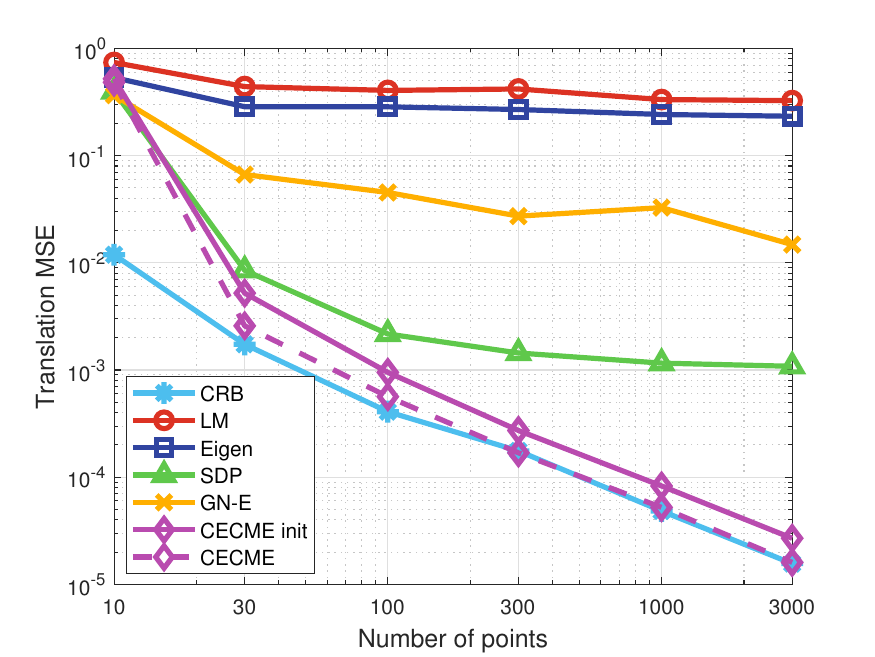}
		\caption{$\sigma=1$px ($\bar {\bf t}$)}
		\label{MSE_t_1px}
	\end{subfigure}
	\begin{subfigure}[b]{0.24\textwidth}
		\centering
		\includegraphics[width=\textwidth]{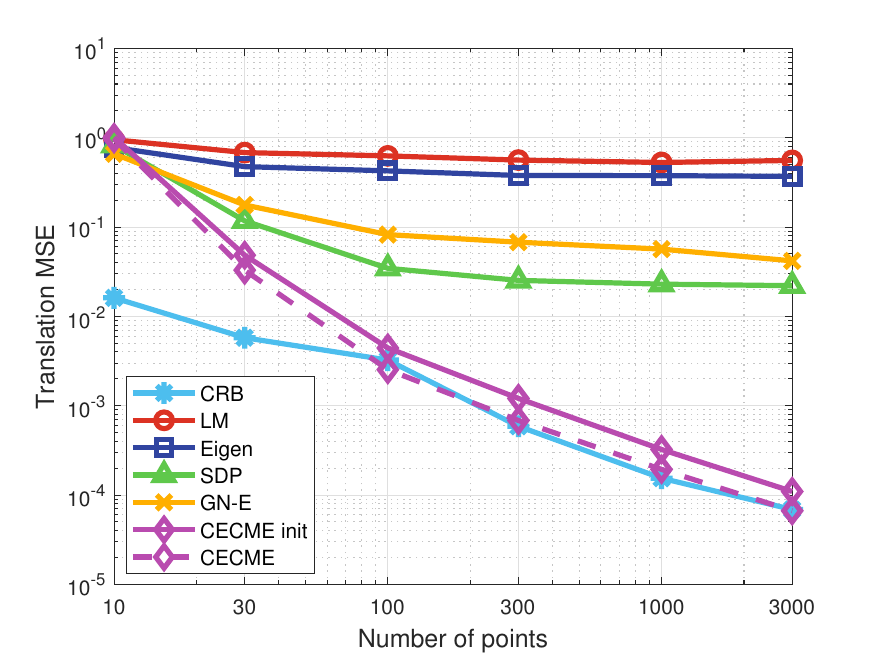}
		\caption{$\sigma=2$px ($\bar {\bf t}$)}
		\label{MSE_t_2px}
	\end{subfigure}
	\caption{MSE comparison under different noise intensities and point numbers.}
	\label{MSE_comparison}
\end{figure*}

\begin{figure*}[!t]
	\centering
	\begin{subfigure}[b]{0.24\textwidth}
		\centering
		\includegraphics[width=\textwidth]{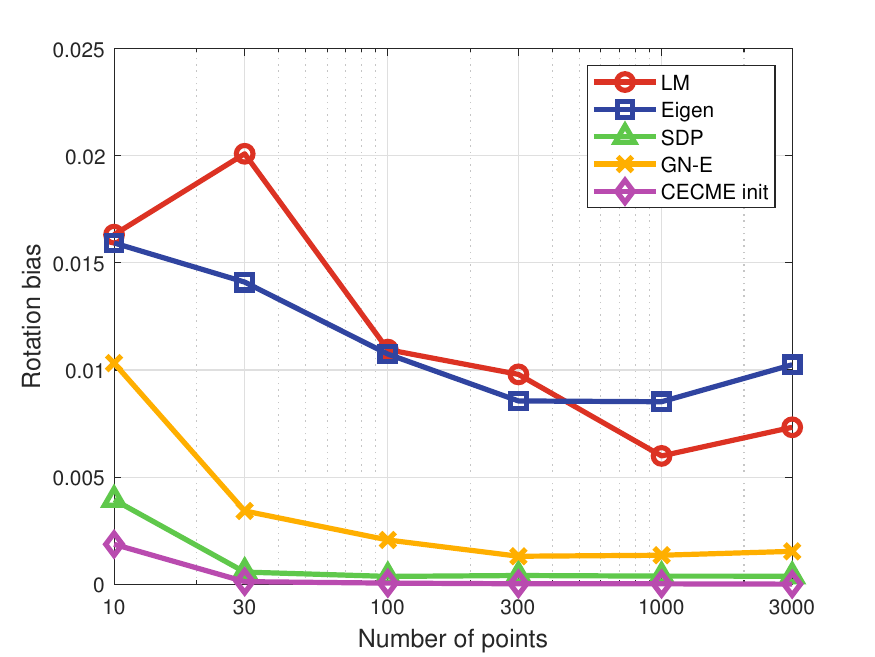}
		\caption{$\sigma=0.25$px ($\bf R$)}
		\label{bias_R_025px}
	\end{subfigure}
	\begin{subfigure}[b]{0.24\textwidth}
		\centering
		\includegraphics[width=\textwidth]{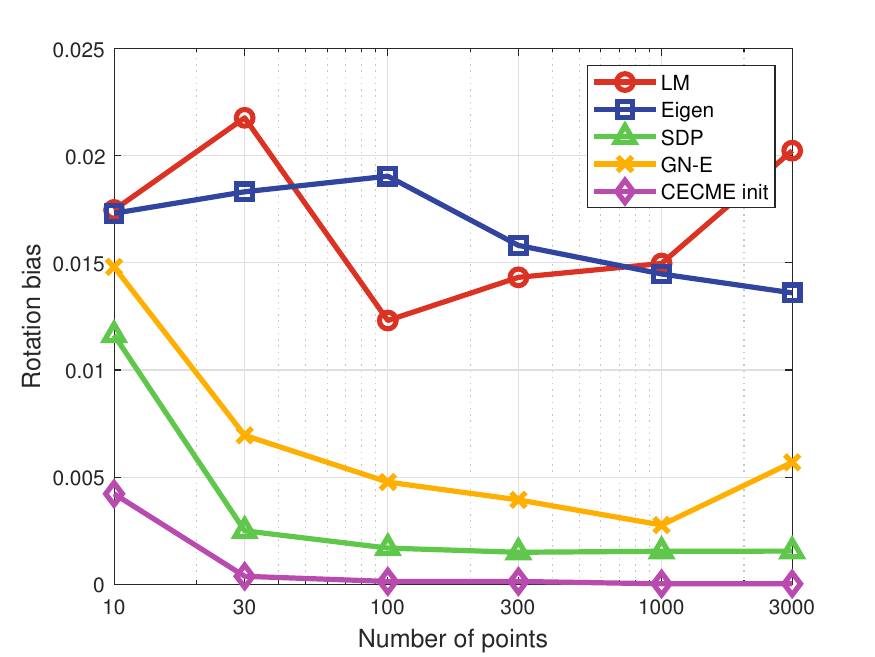}
		\caption{$\sigma=0.5$px ($\bf R$)}
		\label{bias_R_05px}
	\end{subfigure}
	\begin{subfigure}[b]{0.24\textwidth}
		\centering
		\includegraphics[width=\textwidth]{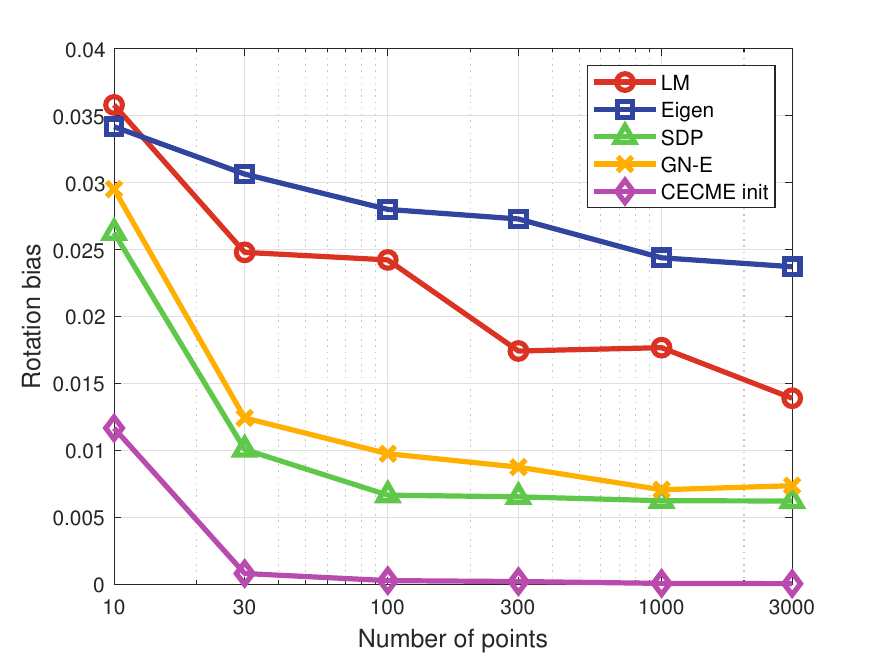}
		\caption{$\sigma=1$px ($\bf R$)}
		\label{bias_R_1px}
	\end{subfigure}
	\begin{subfigure}[b]{0.24\textwidth}
		\centering
		\includegraphics[width=\textwidth]{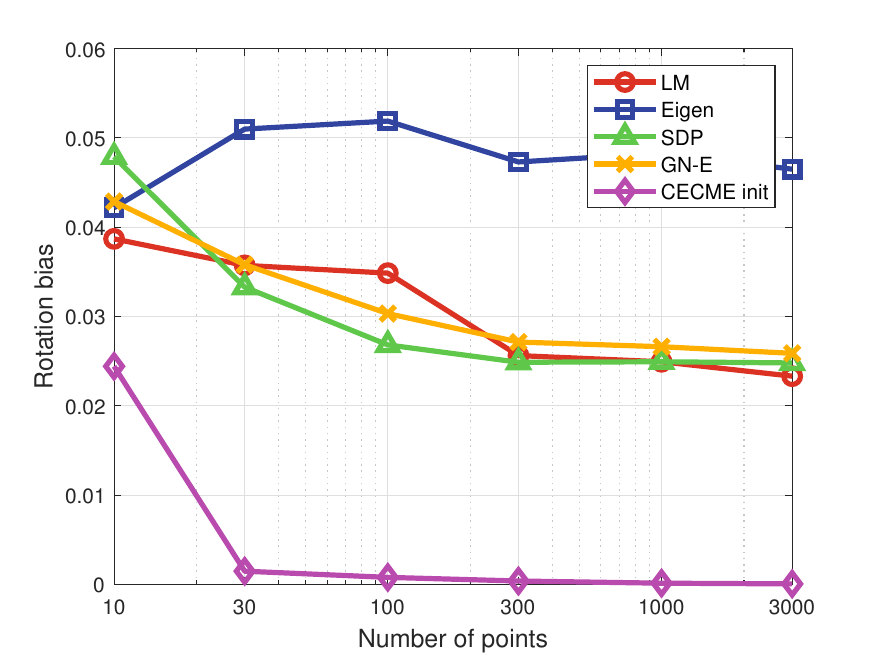}
		\caption{$\sigma=2$px ($\bf R$)}
		\label{bias_R_2px}
	\end{subfigure}
	\begin{subfigure}[b]{0.24\textwidth}
		\centering
		\includegraphics[width=\textwidth]{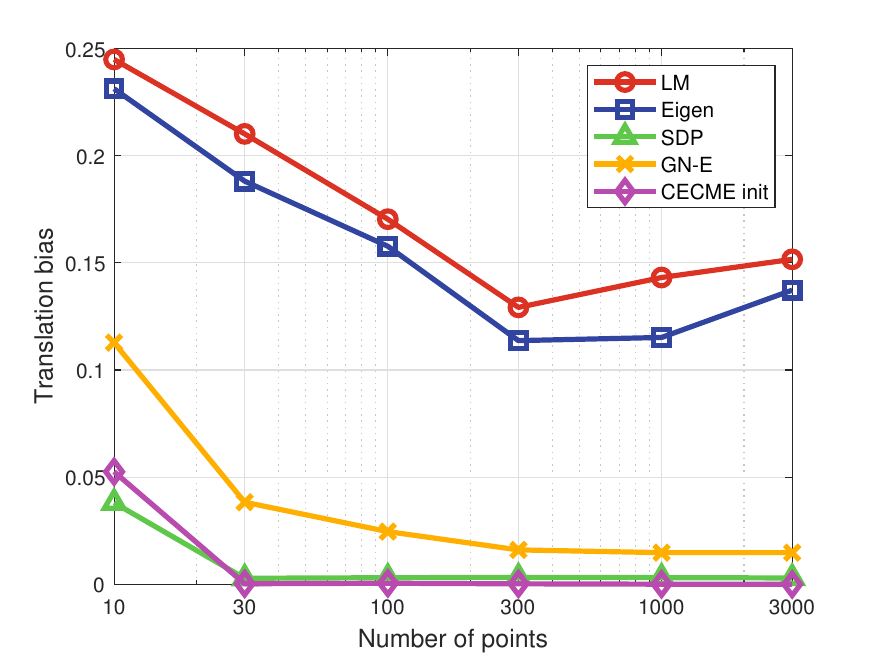}
		\caption{$\sigma=0.25$px ($\bar {\bf t}$)}
		\label{bias_t_025px}
	\end{subfigure}
	\begin{subfigure}[b]{0.24\textwidth}
		\centering
		\includegraphics[width=\textwidth]{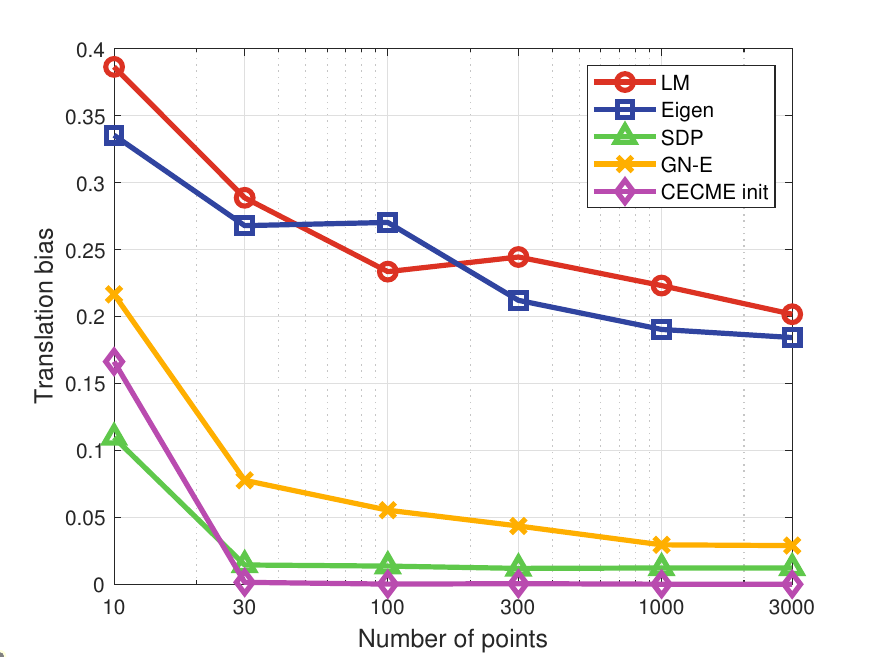}
		\caption{$\sigma=0.5$px ($\bar {\bf t}$)}
		\label{bias_t_05px}
	\end{subfigure}
	\begin{subfigure}[b]{0.24\textwidth}
		\centering
		\includegraphics[width=\textwidth]{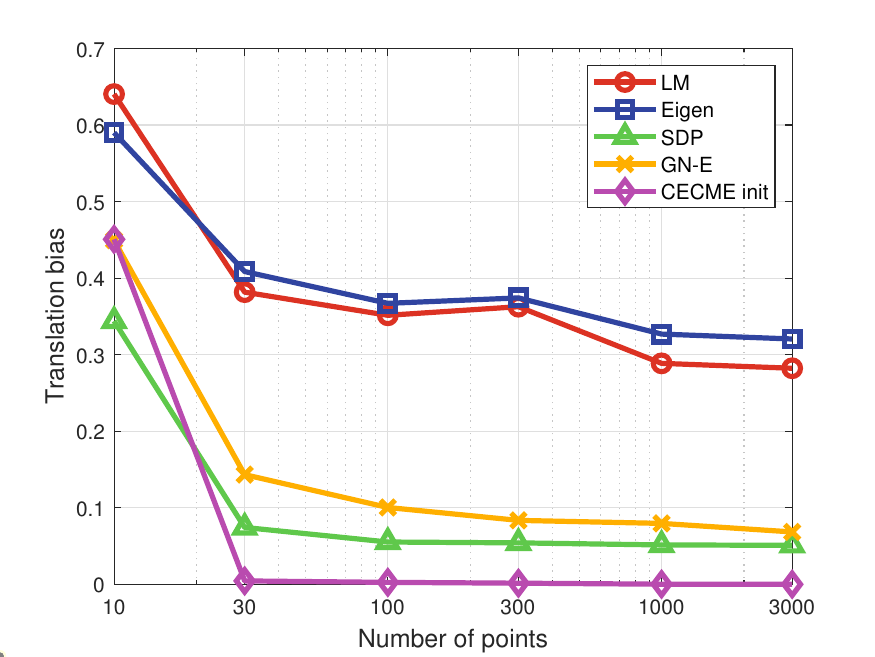}
		\caption{$\sigma=1$px ($\bar {\bf t}$)}
		\label{bias_t_1px}
	\end{subfigure}
	\begin{subfigure}[b]{0.24\textwidth}
		\centering
		\includegraphics[width=\textwidth]{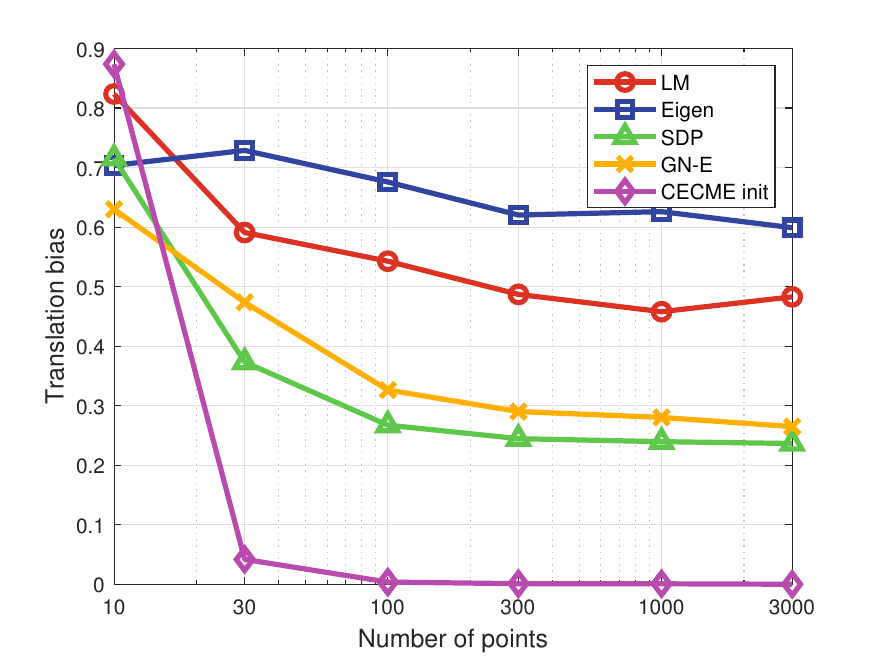}
		\caption{$\sigma=2$px ($\bar {\bf t}$)}
		\label{bias_t_2px}
	\end{subfigure}
	\caption{Bias comparison under different noise intensities and point numbers.}
	\label{bias_comparison}
\end{figure*}

\textbf{Consistency and asymptotic statistical efficiency test.} In our simulation, we run a total of $K=1000$ Monte Carlo tests to evaluate MSEs and biases. In order to verify our theoretical claim that the proposed \texttt{CECME} estimator is consistent and asymptotically statistically-efficient, we set $m=10,30,100,300,1000,3000$ and evaluate the MSEs under varied noise intensities, with $\sigma$ varying from 0.25px (0.0312\% of the image diagonal) to 2px (0.25\% of the image diagonal). The result is plotted in Figure~\ref{MSE_comparison}, where we use \texttt{CECME init} and \texttt{CECME} to denote our first-step estimator and second-step estimator, respectively. We see that the MSE of the \texttt{CECME init} estimator declines linearly w.r.t. the number of points in the log-log plot, which implies it is $\sqrt{m}$-consistent. In addition, with a one-step of GN iteration, the \texttt{CECME} estimator asymptotically reaches the CRB. Actually, when the point number exceeds one hundred, our estimator owns the statistical efficiency. It is noteworthy to see that our estimator outperforms the SOTA ones, especially when the point number and noise intensity are relatively large. 
A counter-intuitive phenomenon is that although \texttt{Eigen}, \texttt{GN-E}, and \texttt{LM} solvers use all $m$ inputs in the pose inference, their MSEs do not vary obviously w.r.t. $m$. This is because their prior pose information is provided by the five-point RANSAC solver, which has a nearly constant estimation accuracy. It shows that \texttt{Eigen}, \texttt{GN-E}, and \texttt{LM} solvers highly depend on the quality of the initial estimate. 
For the \texttt{SDP} solver, it performs well in the case of small noise intensity. As the noise intensity increases, its performance deteriorates rapidly. This coincides with the theoretical development in~\cite{zhao2020efficient}, which says that only when the noise is small enough, the SDP relaxation is tight, and the \texttt{SDP} solver gives a global solution to problem~\eqref{essential_estimation}.

We remark that the consistency of the \texttt{CECME init} estimator is owing to the proposed bias elimination~\eqref{bias_elimination} which leads to asymptotic unbiasedness. Asymptotic unbiasedness together with vanishing covariance finally yields consistency. The asymptotic unbiasedness of the \texttt{CECME init} estimator is validated in Figure~\ref{bias_comparison}, where we see that its bias converges to $0$ as the point number increases. However, the bias of the other estimators cannot converge to $0$, i.e., they are asymptotically biased. Actually, in the asymptotic case, their MSE is dominated by the asymptotic bias, and thus cannot converge to $0$, as shown in Figure~\ref{MSE_comparison}.

Recall that we choose the eigenvector corresponding to the smallest eigenvalue of the matrix ${\bf Q}_m^{\rm BE}$ in~\eqref{bias_elimination_result} as the initial guess. To justify this selection, we test how performance changes by choosing the initial guess as eigenvectors corresponding to larger eigenvalues of ${\bf Q}_m^{\rm BE}$. The result is shown in Appendix F, which demonstrates that selecting other eigenvectors results in significantly larger MSEs. 


\begin{figure*}[!t]
	\centering
\begin{minipage}{.48\linewidth}
	\centering
	\begin{subfigure}[b]{0.48\textwidth}
	\centering
	\includegraphics[width=\textwidth]{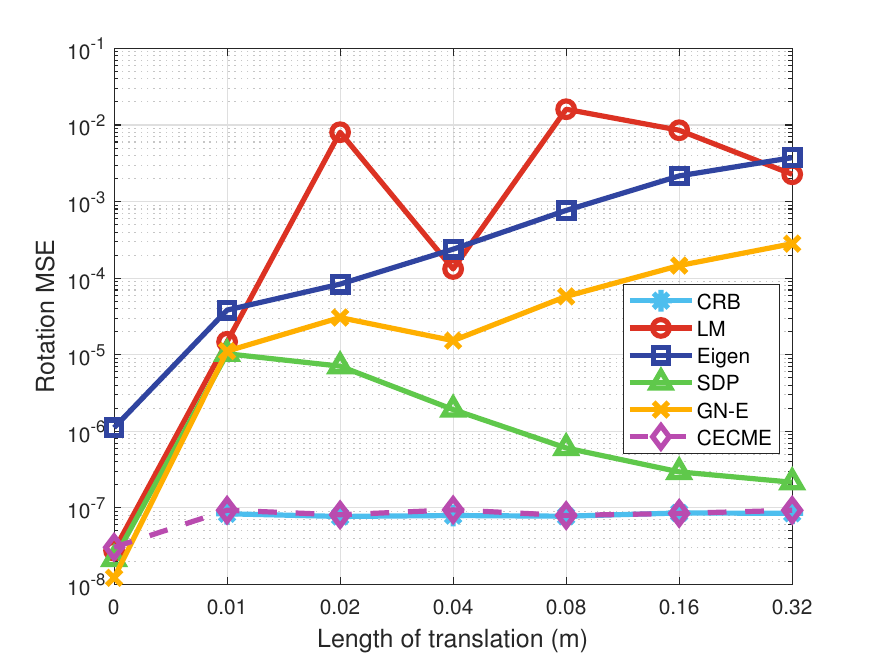}
	\caption{MSE ($\bf R$)}
	\label{MSE_R_vs_t}
\end{subfigure}
\begin{subfigure}[b]{0.48\textwidth}
	\centering
	\includegraphics[width=\textwidth]{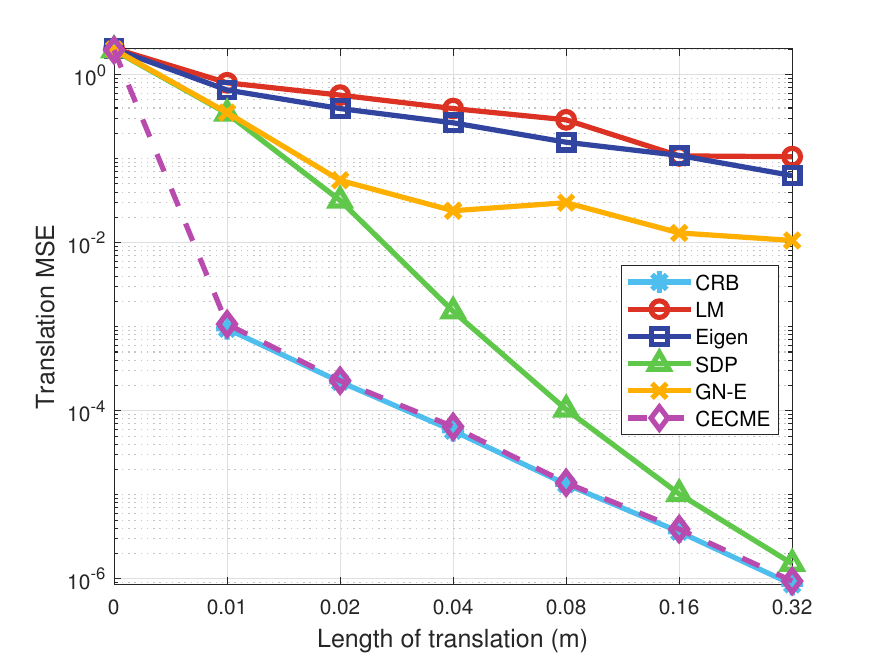}
	\caption{MSE ($\bar{\bf t}$)}
	\label{MSE_t_vs_t}
\end{subfigure}
\caption{MSE under different lengths of translation.}
\label{MSE_vs_t}
	
\end{minipage}
\begin{minipage}{.48\linewidth}
	\centering
	\begin{subfigure}[b]{0.48\textwidth}
	\centering
	\includegraphics[width=\textwidth]{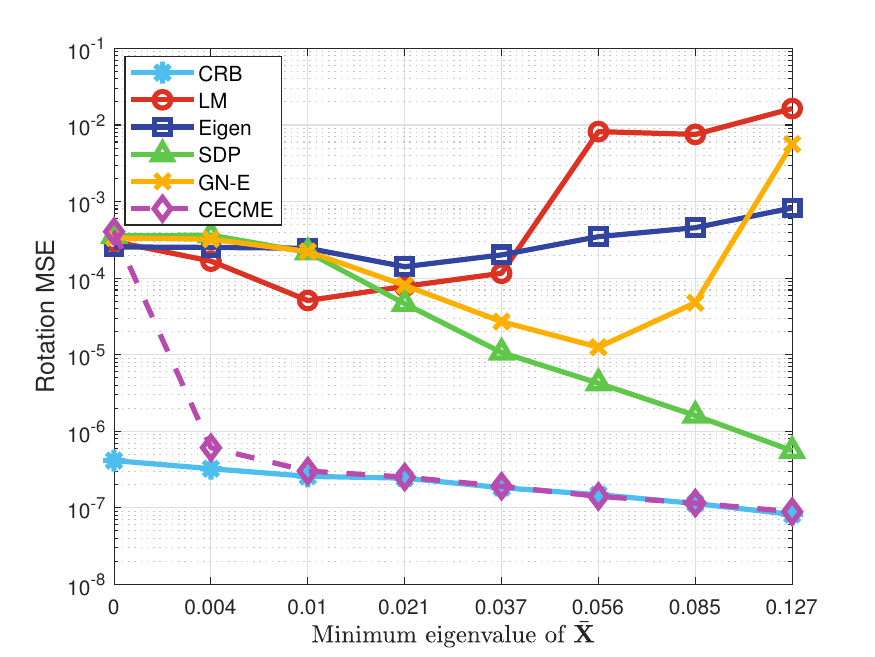}
	\caption{MSE ($\bf R$)}
	\label{MSE_R_vs_mineig}
\end{subfigure}
\begin{subfigure}[b]{0.48\textwidth}
	\centering
	\includegraphics[width=\textwidth]{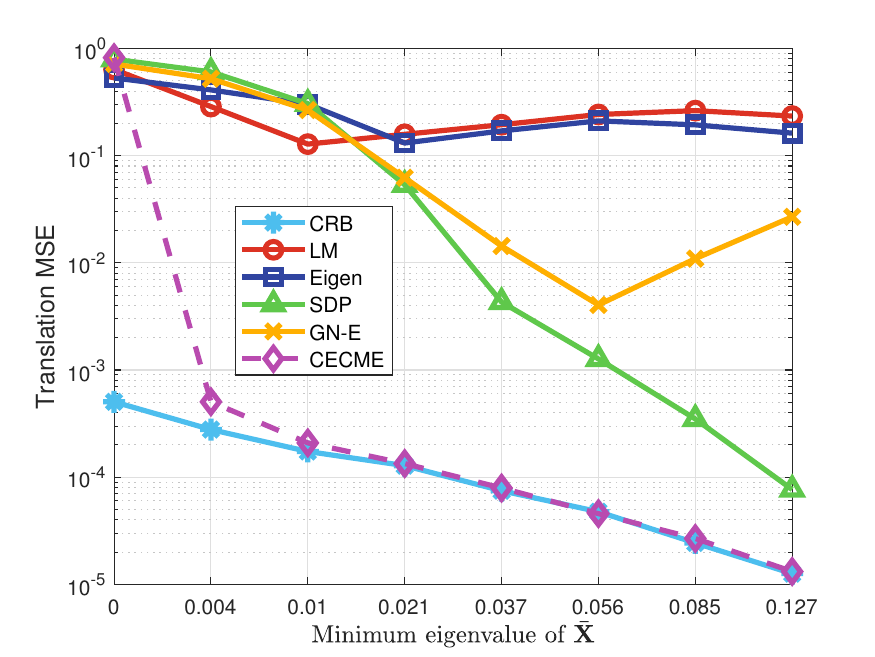}
	\caption{MSE ($\bar{\bf t}$)}
	\label{MSE_t_vs_mineig}
\end{subfigure}
\caption{MSE under different point distributions.}
\label{MSE_vs_mineig}
	
\end{minipage}
\end{figure*}

\textbf{Influence of the length of translation.} In Assumption~\ref{nonzero_t}, we assume the true translation is not equal to $0$, otherwise, the normalized translation cannot be identified, and the homography matrix should be estimated instead of the essential matrix. Nevertheless, it has been empirically shown that the length of translation $\|{\bf t}^o\|$ generally yields no impact on the estimation accuracy of the rotation matrix, even in the pure rotation cases, the rotation can be recovered from the essential matrix correctly~\cite{zhao2020efficient}. In addition, there exists a statistic that can identify the pure rotation cases~\cite{cai2019equivalent}. Specifically, the statistic is given as the average of $\{\frac{{\bf z}_i^h \times \hat{\bf R} {\bf y}_i^h}{\|{\bf z}_i^h\|\|{\bf y}_i^h\|}\}_{i=1}^{m}$, where $\hat{\bf R}$ is the estimate of the rotation matrix. This statistic can be utilized to appraise the estimation quality of the normalized translation. 

In this experiment, we fix $m=1000$, $\sigma=0.5$px (0.0625\% of the image diagonal), and change the length of the translation. The result is shown in Figure~\ref{MSE_vs_t}. Note that when $\|{\bf t}^o\|=0$, the Fisher information matrix is singular, and the CRB is not available. We see from the figure that the CRB of $\bf R$ does not change w.r.t. $\|{\bf t}^o\|$, while the CRB of $\bar {\bf t}$ increases as $\|{\bf t}^o\|$ decreases. In addition, our estimator \texttt{CECME} coincides with the CRB for both $\bf R$ and $\bar {\bf t}$. However, the other estimators have an obvious gap from the CRB, and their stability is not as good as ours. 
We also calculate the pure rotation statistics, and they are $6.9{\rm e}^{-4},2.6{\rm e}^{-3},5.6{\rm e}^{-3},1.1{\rm e}^{-2},2.2{\rm e}^{-2},4.5{\rm e}^{-2},8.5{\rm e}^{-2}$ respectively, which have the same varying trend as $\|{\bf t}^o\|$. Hence, it can serve as an indicator that depicts how confident we are with the estimation for $\bar {\bf t}$. The smaller the statistic is, the less accurate the estimate is.

\begin{table*}[!htb]
    \centering
    \captionsetup{justification=centering}
    \caption{Average estimation errors in all $25$ ETH3D scenarios. The units for rotation and translation are $10^{-3}$ rad and $10^{-5}$, respectively.} \label{average_error_comparison}
    \resizebox{1.92\columnwidth}{!}{%
        \begin{tabular}{ c c c c c c c c c c c c c c c } 
            \Xhline{4\arrayrulewidth}
            \multirow{2}{*}{Scenario} & \multicolumn{2}{c}{DFE} & \multicolumn{2}{c}{NACNet} & \multicolumn{2}{c}{LM} & \multicolumn{2}{c}{Eigen} & \multicolumn{2}{c}{SDP} & \multicolumn{2}{c}{GN-E} & \multicolumn{2}{c}{CECME}  \\
            \cline{2-15}
            & $\bf R$ & $ \bar {\bf t}$ & $\bf R$ & $ \bar {\bf t}$ & $\bf R$ & $\bar {\bf t}$ & $\bf R$ & $\bar {\bf t}$ & $\bf R$ & $\bar {\bf t}$ & $\bf R$ & $\bar {\bf t}$ & $\bf R$ & $\bar {\bf t}$ \\ 
            \Xhline{4\arrayrulewidth}
            relief & 3.39 & 5.74 & 2.65 & 4.48 & 1.23 & 1.03 & 0.929 & 0.755 & 1.23 & 8.21 & 0.893 & 0.693 & {\color{red}\bf 0.787} & {\color{red}\bf 0.485} \\ 
            facade & 3.36 & 44.1 & 3.41 & 90.4 & 1.44 & 15.4 & 1.21 & {\color{red}\bf 13.8} & 2.44 & 421 & 1.28 & 19.4 & {\color{red}\bf 1.14} & 38.3 \\ 
            courtyard & 39.9 & 858 & 37.4 & 563 & 24.5 & 412 & 27.4 & 421 & 34.8 & 1980 & 28.2 & {\color{red}\bf 392} & {\color{red}\bf 22.7} & 411 \\ 
            relief 2 & 2.80 & 10.9 & 2.30 & 4.93 & 1.28 & 2.05 & 0.850 & {\color{red}\bf 0.808} & 1.08 & 1.94 & 0.901 & 0.887 & {\color{red}\bf 0.828} & 0.842 \\
            delivery area & 5.60 & 173 & 5.37 & 75.4 & 3.49 & 596 & 2.94 & 565 & 4.32 & 405 & 2.33 & 49.9 & {\color{red}\bf 2.28} & {\color{red}\bf 41.0} \\ 
            electro & 3.54 & 23.3 & 3.26 & 20.2 & 1.30 & 3.06 & 1.19 & 3.49 & 1.64 & 4.30 & 1.13 & {\color{red}\bf 1.64} & {\color{red}\bf 1.04} & 2.63 \\ 
            terrace & 2.68 & 4.14 & 2.71 & 2.78 & 1.15 & 0.708 & 1.04 & 0.342 & 1.20 & 0.428 & 1.05 & 0.341 & {\color{red}\bf 0.979} & {\color{red}\bf 0.273} \\ 
            kicker & 7.63 & 86.0 & 6.93 & 39.3 & 2.76 & 12.6 & 2.52 & 22.3 & 3.37 & 409 & 2.22 & 24.6 & {\color{red}\bf 2.15} & {\color{red}\bf 8.76} \\
            terrains & 4.56 & 9.45 & 3.95 & 21.0 & 2.28 & 7.78 & 1.60 & {\color{red}\bf 1.49} & 2.54 & 45.0 & 1.49 & 3.20 & {\color{red}\bf 1.40} & 2.24 \\ 
            playground & 2.17 & 5.17 & 2.19 & 3.71 & 0.874 & 0.560 & 0.757 & 0.744 & 0.968 & 0.962 & 0.746 & 0.723 & {\color{red}\bf 0.680} & {\color{red}\bf 0.453} \\ 
            pipes & 3.87 & 24.4 & 3.78 & 18.5 & 2.81 & 5.31 & 2.16 & 7.29 & 2.12 & 8.40 & 2.01 & 6.76 & {\color{red}\bf 1.58} & {\color{red}\bf 1.03} \\ 
            meadow & 63.7 & 8990 & 31.6 & 4330 & 33.4 & 3730 & 34.3 & 3700 & 74.3 & 9200 & 34.3 & 3710 & {\color{red}\bf 4.51} & {\color{red}\bf 4.96} \\ 
            office & 14.6 & 4100 & 3.09 & 15.7 & 8.75 & 2620 & 8.28 & 3010 & 23.4 & 7820 & 8.27 & 2990 & {\color{red}\bf 1.37} & {\color{red}\bf 1.44} \\ 
            door & 1.86 & 4.46 & 1.22 & 0.992 & 0.649 & 0.567 & 0.435 & 0.336 & 0.587 & 0.773 & 0.377 & {\color{red}\bf 0.231} & {\color{red}\bf 0.374} & 0.232 \\ 
            observatory & 3.03 & 90.5 & 2.70 & 21.6 & {\color{red}\bf 1.32} & 4.05 & 1.35 & 5.02 & 1.58 & 14.0 & 1.37 & 4.78 & {\color{red}\bf 1.32} & {\color{red}\bf 5.90} \\ 
            boulders & 4.42 & 22.0 & 4.11 & 18.1 & 1.28 & 0.987 & 0.932 & 0.562 & 1.30 & 0.583 & {\color{red}\bf 0.905} & {\color{red}\bf 0.513} & 0.992 & 14.2 \\  
            statue & 1.87 & 0.148 & 1.32 & 0.0854 & 0.891 & 0.038 & 0.421 & 0.0088 & 0.464 & 0.012 & 0.419 & 0.0093 & {\color{red}\bf 0.407} & {\color{red}\bf 0.0087} \\ 
            bridge & 2.17 & 3.73 & 2.11 & 2.24 & 0.948 & 0.444 & 6.76 & 310 & 1.23 & 2.08 & 1.01 & {\color{red}\bf 0.410} & {\color{red}\bf 0.875} & 0.877 \\ 
            terrace 2 & 1.42 & 9.12 & 1.05 & 2.70 & 0.554 & 0.841 & 0.492 & 0.518 & 0.509 & 0.998 & 0.499 & 0.482 & {\color{red}\bf 0.428} & {\color{red}\bf 0.246} \\ 
            exhibition hall & 16.4 & 2490 & 12.1 & 1090 & 4.63 & 754 & 5.76 & 1130 & 11.1 & 4960 & 7.77 & 754 & {\color{red}\bf 3.94} & {\color{red}\bf 513} \\  
            botanical garden & 2.49 & 7.85 & 3.23 & 29.9 & 1.47 & 8.47 & {\color{red}\bf 0.911} & {\color{red}\bf 1.69} & 6.37 & 551 & 1.14 & 2.47 & 0.927 & 3.70 \\ 
            living room & 4.77 & 18.9 & 3.88 & 10.6 & 1.84 & 4.41 & 1.76 & 5.54 & 1.86 & 8.92 & 1.65 & 5.41 & {\color{red}\bf 1.44} & {\color{red}\bf 3.19} \\ 
            lecture room & 9.38 & 1140 & 6.67 & 268 & 1.11 & {\color{red}\bf 1.72} & 1.08 & 38.3 & 1.29 & 60.2 & 1.51 & 39.4 & {\color{red}\bf 1.01} & 8.10 \\ 
            lounge & 9.36 & 228 & 7.03 & 105 & 2.56 & 10.1 & 2.46 & 10.5 & 2.54 & 12.0 & 2.46 & 10.4 & {\color{red}\bf 1.58} & {\color{red}\bf 2.63} \\ 
            old computer & 4.09 & 43.7 & 3.24 & 20.4 & 1.90 & 120 & 1.53 & 28.3 & 1.69 & 23.2 & 1.50 & 31.8 & {\color{red}\bf 1.33} & {\color{red}\bf 7.06} \\ 
            \Xhline{4\arrayrulewidth}
        \end{tabular}
    }
\end{table*}

\begin{table*}[!t]
	\centering
	\captionsetup{justification=centering}
	\caption{Average CPU time (unit: $\mu s$) comparison in all $25$ scenarios} \label{average_cpu_time}
 \begin{subtable}{0.48\textwidth}
     \resizebox{0.98\columnwidth}{!}{%
		\begin{tabular}{ c c c c c c} 
			\Xhline{4\arrayrulewidth}
			Scenario & LM & Eigen & SDP & GN-E & CECME\\ 
			\Xhline{4\arrayrulewidth}
			relief & 35565 & 14484 & 8054 & 25077 & {\color{red} \bf 7611}\\ 
			door & 31011 & 11557 & 8236 & 21893 & {\color{red} \bf 5979}\\ 
			observatory & 32589 & 15824 & 6868 & 24145 & {\color{red} \bf 5469}\\ 
			facade & 27313 & 12888 & {\color{red} \bf 6333} & 20096 & 7226\\ 
			boulders & 27374 & 13033 & 6168 & 19752 & {\color{red} \bf 4406}\\ 
			courtyard & 23118 & 8008 & {\color{red} \bf 6088} & 14694 & 22096\\ 
       relief 2 & 23460 & 11421 & 5875 & 17558 & {\color{red} \bf 3917} \\
       statue & 26389 & 13443 & 5457 & 19124 & {\color{red} \bf 3380} \\
       bridge & 29064 & 20029 & 4862 & 24583 & {\color{red} \bf 2918} \\
       terrace 2 & 20312 & 9868 & 5402 & 15626 & {\color{red} \bf 4113} \\
       delivery area & 23009 & 14133 & {\color{red} \bf 4844} & 18792 & 5627 \\
       exhibition hall & 18045 & 8480 & {\color{red} \bf 4992} & 13037 & 13815 \\
			\Xhline{4\arrayrulewidth}
		\end{tabular}
	}
 \end{subtable}
  \begin{subtable}{0.48\textwidth}
     \resizebox{0.99\columnwidth}{!}{%
		\begin{tabular}{ c c c c c c} 
			\Xhline{4\arrayrulewidth}
		electro & 29772 & 21099 & 4802 & 25199 & {\color{red} \bf 4410} \\
        terrace & 26776 & 19218 & 4230 & 22991 & {\color{red} \bf 2266} \\
       kicker & 23979 & 16157 & {\color{red} \bf 4543} & 19887 & 10244 \\
       botanical garden & 28116 & 21408 & 4115 & 24692 & {\color{red} \bf 3053} \\
       terrains & 19833 & 14032 & {\color{red} \bf 4268} & 17062 & 7810 \\
       living room & 15338 & 9781 & 3860 & 12534 & {\color{red} \bf 2569}\\
       playground & 22657 & 16386 & 3878 & 19092 & {\color{red} \bf 1741} \\
       pipes & 16882 & 11850 & {\color{red} \bf 3864} & 14054 & 3960 \\
       lecture room & 14154 & 9145 & 3776 & 11490 & {\color{red} \bf 2068} \\
       lounge & 26101 & 21130 & 3481 & 23200 & {\color{red} \bf 1266} \\
       meadow & 14655 & 11354 & {\color{red} \bf 4504} & 12997 & 12585 \\
       office & 17403 & 13805 & {\color{red} \bf 3518} & 15399 & 7256 \\
       old computer & 17535 & 14637 & 3415 & 16083 & {\color{red} \bf 3176} \\
			\Xhline{4\arrayrulewidth}
		\end{tabular}
	}
 \end{subtable}
\end{table*}

 \begin{figure}[!t]
    \centering
   \begin{subfigure}{0.49\linewidth}
       \includegraphics[width=\linewidth]{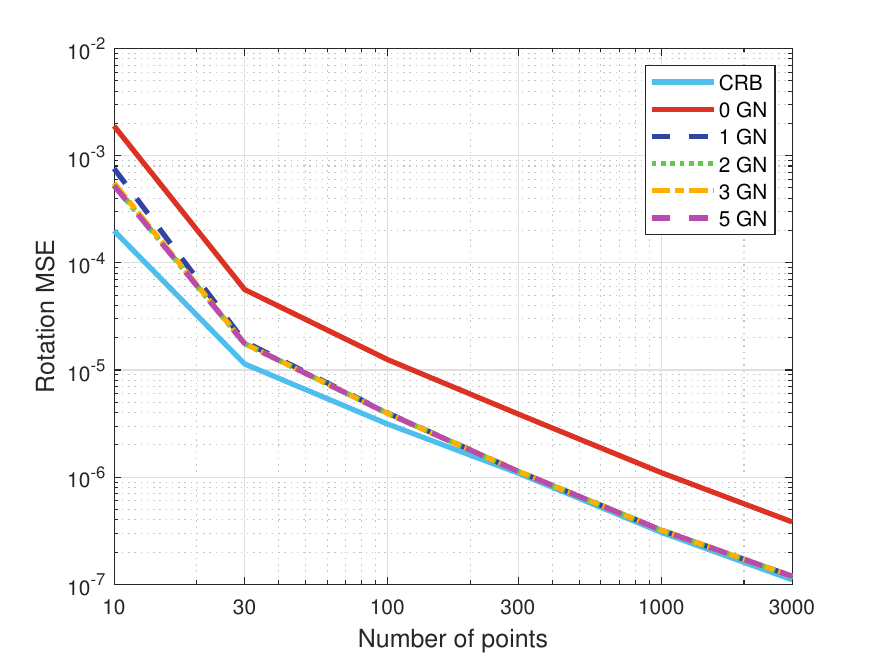}
   \end{subfigure}
    \begin{subfigure}{0.49\linewidth}
       \includegraphics[width=\linewidth]{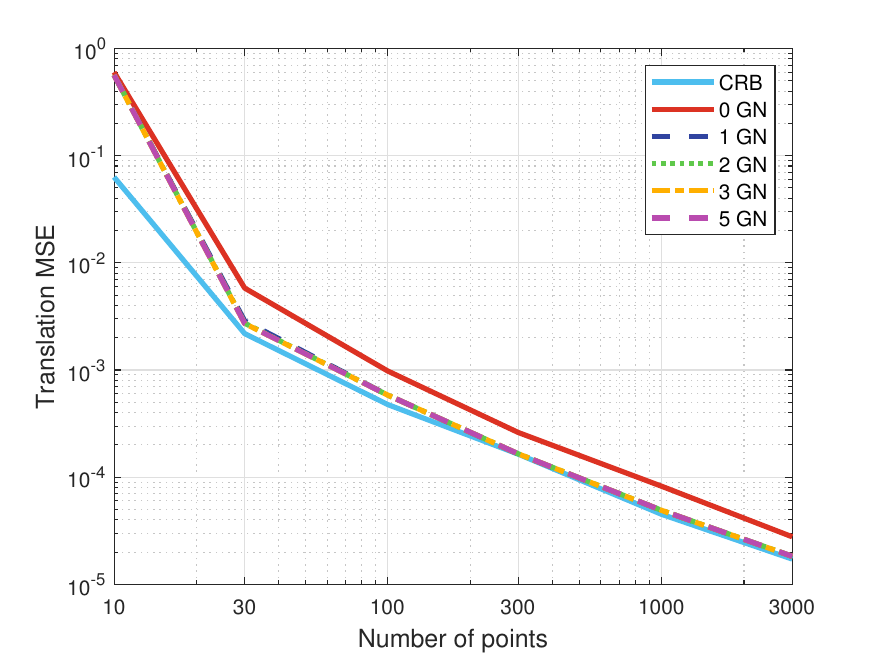}
   \end{subfigure}
   \caption{MSE comparison with varied GN numbers.}
            \label{MSE_varied_GN}
\end{figure}

\textbf{Influence of the distribution of 3D points.} In Assumption~\ref{not_coplanar_assump}, we assume the 3D points $\{{\bf x}_i\}_{i=1}^{m}$ and the two camera centers do not lie on a ruled quadric surface, which we call the degenerate configuration. The most likely degenerate configuration is that $\{{\bf x}_i\}_{i=1}^{m}$ concentrate on a man-made plane, e.g., a wall. Similar to the case of $\|{\bf t}^o\|=0$, in the coplanar case, the homography matrix should be estimated instead of the essential matrix. Let $\bar {\bf X}=[{\bf x}_1^h ~\cdots~{\bf x}_m^h][{\bf x}_1^h ~\cdots~{\bf x}_m^h]^\top/m$, where ${\bf x}_i^h$ is the homogeneous coordinates of ${\bf x}_i$. Then, if $\{{\bf x}_i\}_{i=1}^{m}$ are coplanar, it can be verified that $\lambda_{\rm min}(\bar {\bf X})=0$. Therefore, $\lambda_{\rm min}(\bar {\bf X})$ is a quantity that can identify the coplanar case. In Figure~\ref{MSE_vs_mineig}, we plot the relationship between MSE and $\lambda_{\rm min}(\bar {\bf X})$. We see that as $\lambda_{\rm min}(\bar {\bf X})$ decreases, i.e., the 3D points shrink in some dimension, the CRB and MSE increase. However, different from the case of $\|{\bf t}^o\|=0$, when $\lambda_{\rm min}(\bar {\bf X})=0$, the Fisher information matrix is nonsingular and the CRB is available, which implies the relative pose is locally identifiable. Nevertheless, in the (near) coplanar case, the MSE of all estimators deviates from the CRB, showing that the relative pose is not globally identifiable. Note that $\{{\bf x}_i\}_{i=1}^{m}$ are unavailable in practice, so we cannot obtain $\lambda_{\rm min}(\bar {\bf X})$. In order to identify the coplanar case in real applications, one can estimate a homography matrix and treat the average residual as the coplanar statistic~\cite{campos2021orb}. The smaller the statistic is, the more possible the coplanar case is.

\textbf{Effect of GN iteration number.} We claim that given a $\sqrt{m}$-consistent initial estimator, a single GN iteration is enough to reach the CRB in the large-sample regime. To substantiate this claim, we compare the MSEs for varied GN iteration numbers. We set the standard deviation of noise as $\sigma=1$px (0.125\% of the image diagonal) and execute 0-5 steps of GN iterations. As demonstrated in Figure~\ref{MSE_varied_GN}, the initial estimator's MSE (as seen from the 0 GN MSE curve) can approach, but not reach, the CRB. A single GN iteration, however, can achieve the CRB, with further iterations providing negligible improvement. The test result for the effect of GN iteration number in real datasets can be found in Appendix H.

  \begin{figure}[!t]
    \centering
   \begin{subfigure}{0.49\linewidth}
       \includegraphics[width=\linewidth]{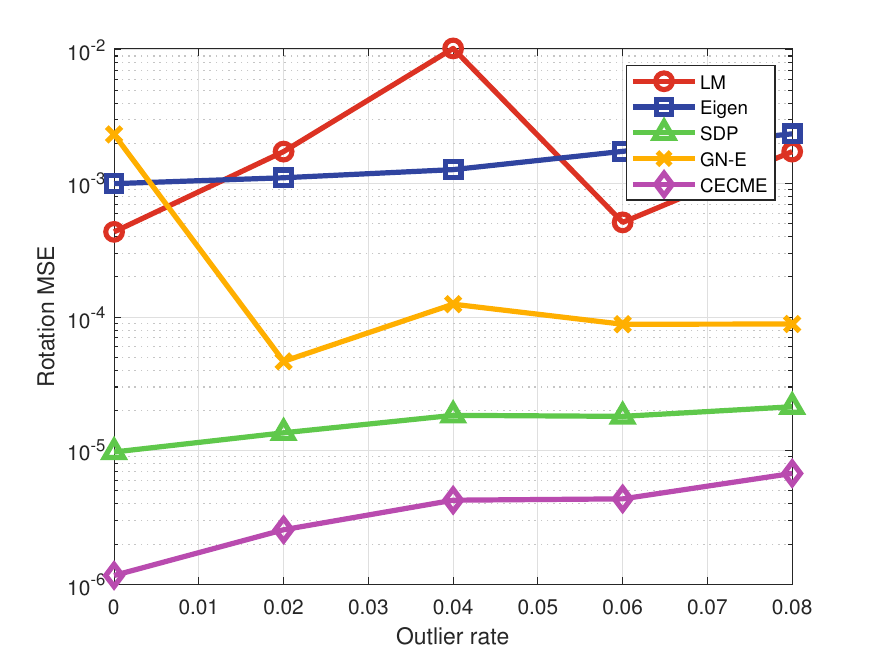}
   \end{subfigure}
    \begin{subfigure}{0.49\linewidth}
       \includegraphics[width=\linewidth]{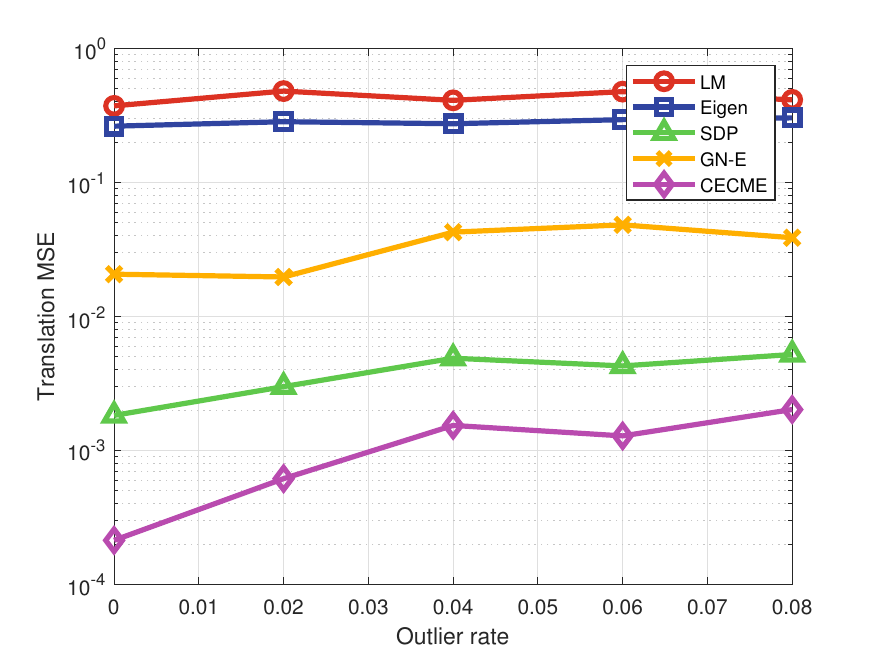}
   \end{subfigure}
   \caption{MSE comparison under varied outlier rates.}
            \label{MSE_outlier}
\end{figure}

\textbf{Robustness to outliers.} We set the point number as $m=1000$ and the standard deviation of noise as $\sigma=1$px (0.125\% of the image diagonal). The generation of outliers follows a uniform distribution in the image, and the outlier rate is set to be $0, 0.02, 0.04, 0.06, 0.08$, respectively.  
The results are presented in Figure~\ref{MSE_outlier}. As the outlier rate increases, the MSE of our estimator \texttt{CECME} rises. Nevertheless, it consistently outperforms the other algorithms, demonstrating a notable level of robustness.

\subsection{Experiment with real images}
\label{subsection_real_images}
For the experiment with real images, we use the ETH3D~\cite{schops2017multi}, KITTI odometry~\cite{geiger2012we}, and the EuRoC MAV datasets~\cite{burri2016euroc}. 
All algorithms are implemented in C++ via a PC equipped with an Intel Core i5-10400H and a 32Gb RAM. The evaluation details and codes are open source at \url{https://github.com/LIAS-CUHKSZ/epipolar_eval}. The estimation errors of the rotation matrix and normalized translation vector are given by Lie algebra angular error and cosine distance, respectively, as in~\cite{zhao2020efficient}. 

In Assumption~\ref{not_coplanar_assump}, we assume the 3D points $\{{\bf x}_i\}_{i=1}^{m}$ and the two camera centers do not lie on a ruled quadric surface, which we call the degenerate configuration. The most likely degenerate configuration is that 3D points concentrate on a plane. In real-world data tests, near-degenerate point distributions can occasionally occur. Hence, we introduce an adaptive mechanism to handle such cases. Specifically, recall that in the first step, we select the eigenvector of ${\bf Q}_m^{\rm BE}$ associated with the smallest eigenvalue as the initial estimate. Denote the corresponding epipolar cost as $c_1$. In addition, we also compute the eigenvector associated with the second-smallest eigenvalue and calculate its epipolar cost, denoted as $c_2$. If the ratio $c_2/c_1$ falls below a predefined threshold, indicating degeneracy, we use a RANSAC algorithm to obtain a robust initial estimate.

\subsubsection{ETH3D dataset} This dataset contains $25$ scenarios ranging from indoors to outdoors. The intrinsic and extrinsic parameters of the camera are given, so we can calculate the normalized image coordinates of each feature point and the true relative pose between an image pair. In addition, the 3D global map is available, and the 2D-3D point correspondences are provided, based on which the 2D-2D point correspondences can be obtained. For a fair comparison with \texttt{NACNet}, we bypass its feature detection and matching module and instead directly input the provided correspondences. To validate that the proposed estimator has advantages in the asymptotic case, we only estimate the relative pose of image pairs (the two images are not necessarily consecutive) that have more than $100$ point correspondences.

The average estimation errors in all $25$ scenarios are listed in Table~\ref{average_error_comparison}. We see that our proposed estimator \texttt{CECME} performs best in $23$ scenarios for $\bf R$ and $15$ scenarios for $\bar {\bf t}$. 
It is noteworthy to see that \texttt{LM}, \texttt{Eigen}, and \texttt{GN-E} outperform \texttt{SDP} and sometimes become the best ones, which does not coincide with the simulation result. This is because, in real image tests, the five-point RANSAC algorithms can provide a robust initial value for these three solvers. We observe that learning-based methods are inferior to classical geometry-based approaches and fail to achieve the best accuracy on any sequence. This may be attributed to the presence of large-parallax image pairs. In such cases, feature descriptors may exhibit notable variations, even for correct correspondences, leading to reduced estimation accuracy for the learning-based methods.
The average CPU time cost is listed in Table~\ref{average_cpu_time}. We see that our \texttt{CECME} estimator consumes the least time in 16 out of 25 sequences, demonstrating its high computational efficiency. The reason why it consumes more time in the remaining 9 sequences is that in these sequences, degenerate cases occurs more frequently, and the RANSAC initialization costs much more time than calculating the eigenvector of ${\bf Q}_m^{\rm BE}$. 

\begin{figure*}[!t]
	\centering
	\begin{subfigure}[b]{0.48\textwidth}
		\centering
		\includegraphics[width=\textwidth]{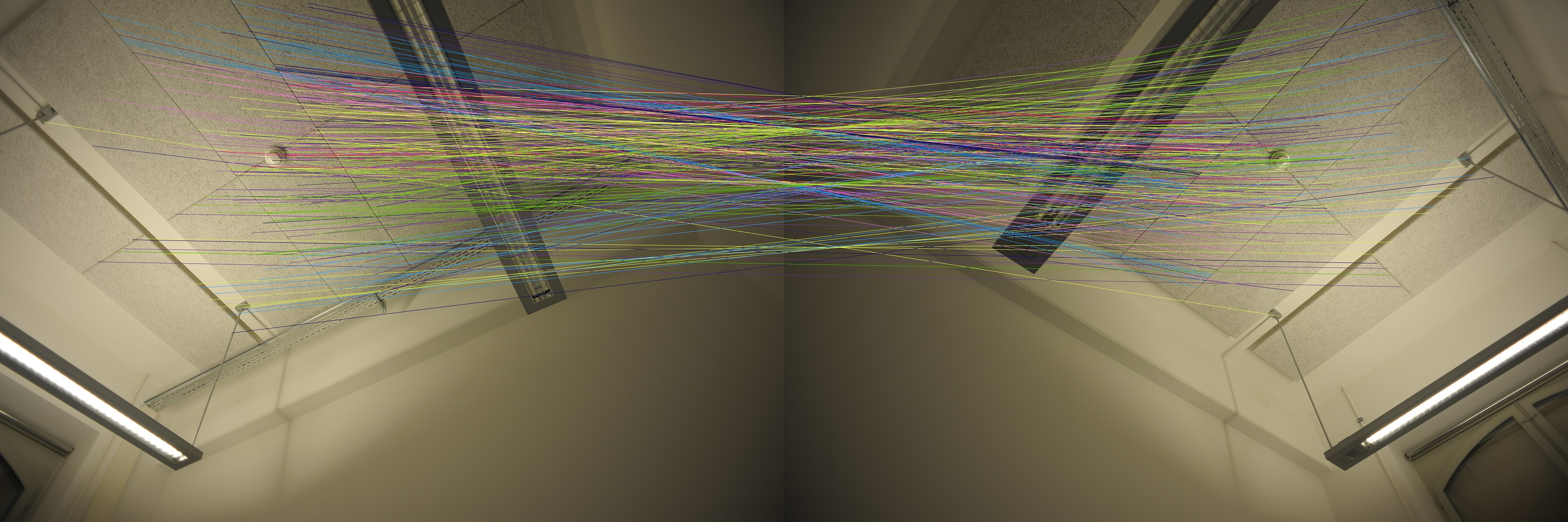}
		\caption{office 12-16}
		\label{office_12_16}
	\end{subfigure}
	\begin{subfigure}[b]{0.48\textwidth}
		\centering
		\includegraphics[width=\textwidth]{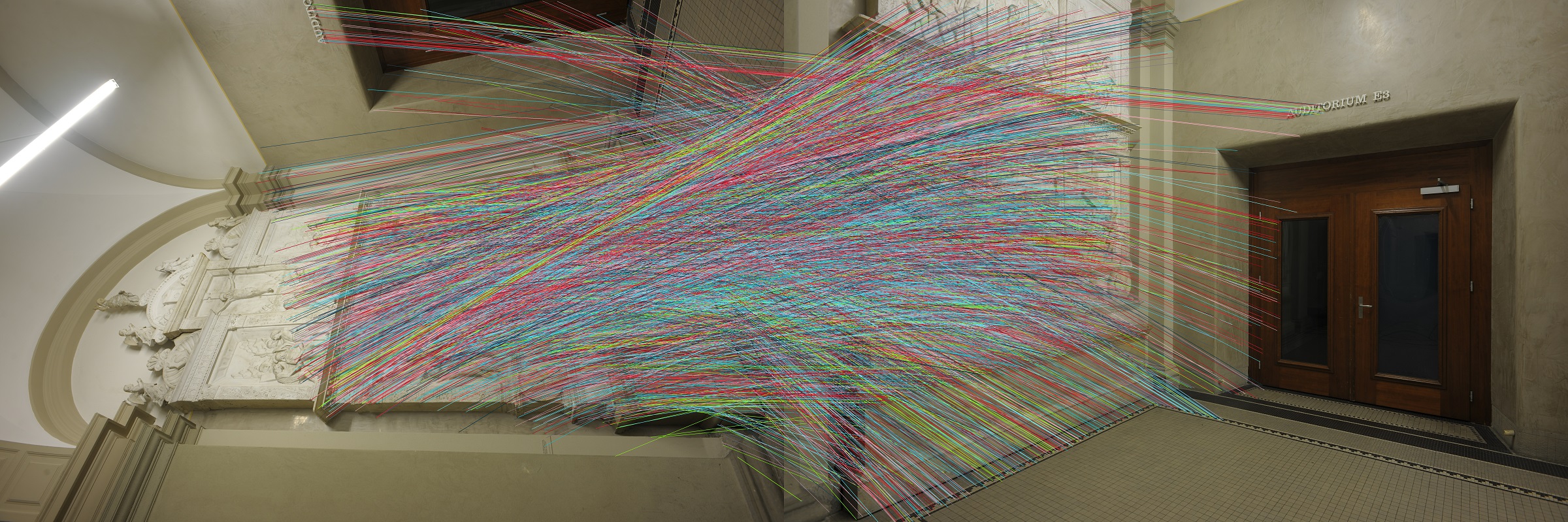}
		\caption{relief 11-18}
		\label{relief_11_18}
	\end{subfigure}
	\begin{subfigure}[b]{0.48\textwidth}
		\centering
		\includegraphics[width=\textwidth]{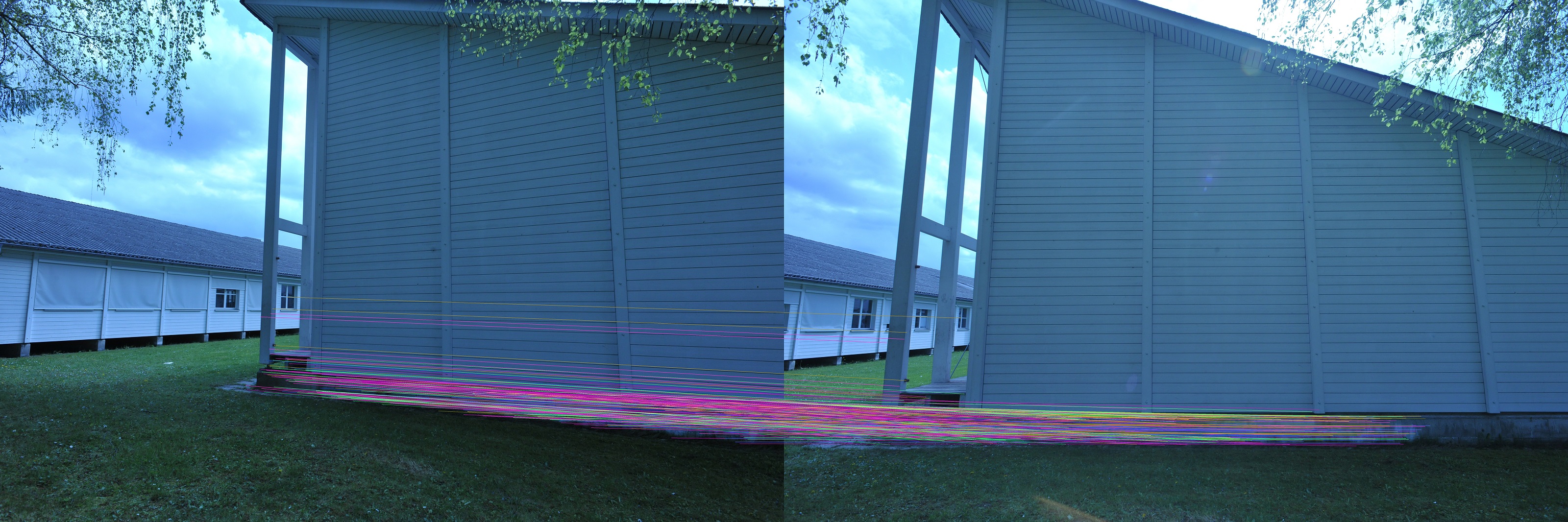}
		\caption{meadow 4-13}
		\label{meadow_4_13}
	\end{subfigure}
	\begin{subfigure}[b]{0.48\textwidth}
		\centering
		\includegraphics[width=\textwidth]{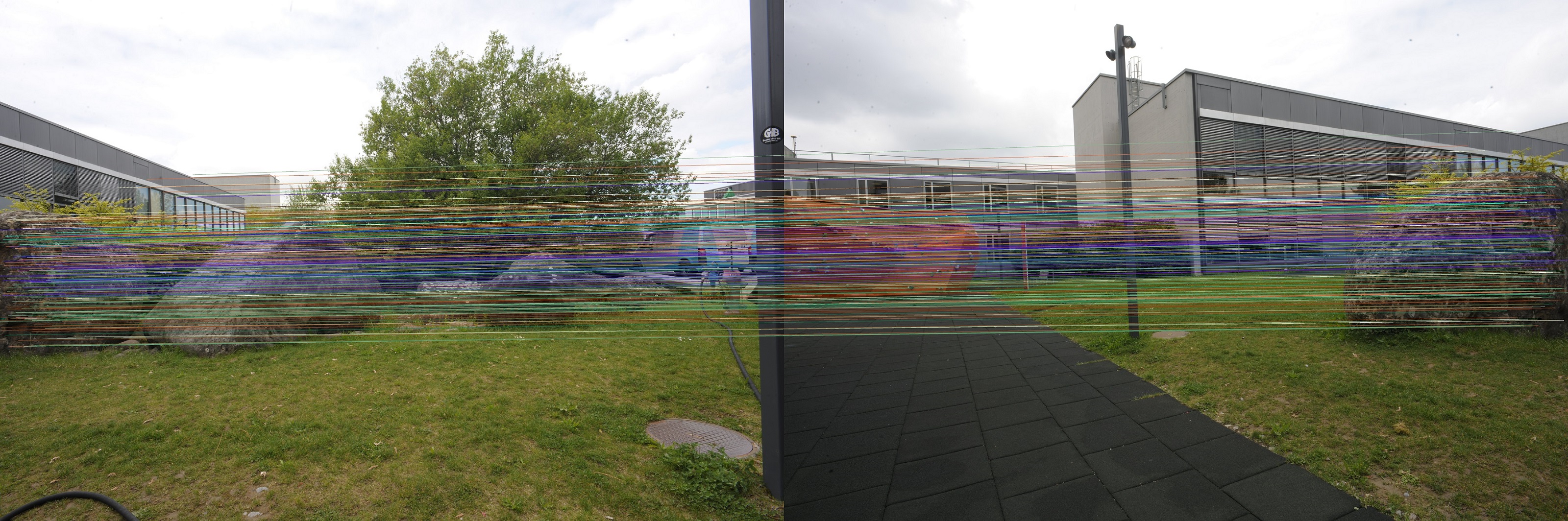}
		\caption{boulders 8-13}
		\label{boulders_8_13}
	\end{subfigure}
	\caption{Some examples of degenerate scenarios.}
	\label{four_degenerate_scenarios}
\end{figure*}

Finally, we present in Figure~\ref{four_degenerate_scenarios} some typical degenerate scenarios of epipolar pose estimation. In Figure~\ref{office_12_16}, the points concentrate on the roof of the office, which violates the noncoplanar Assumption~\ref{not_coplanar_assump}; In Figure~\ref{relief_11_18}, the translation between the image pair is too small, which can be viewed as the pure rotation case and infringes Assumption~\ref{nonzero_t}; In Figure~\ref{meadow_4_13}, the 3D points are confined within a very limited range in two dimensions and has a small $\lambda_{\rm min}(\bar {\bf X})$; In Figure~\ref{boulders_8_13}, most of the 3D points locate on leaves, whose position may change due to the disturbance of wind and in turn weaken the estimation accuracy. The above degenerate scenarios pose a challenge to the monocular-vision-based relative pose estimation. 

\subsubsection{KITTI odometry dataset}
The KITTI odometry dataset is captured in diverse outdoor driving scenarios, spanning urban, suburban, and rural environments. It offers rectified stereo images at a frame rate of 10 Hz. The dataset also includes ground-truth poses obtained from an OXTS3003 GPS/IMU unit for 11 training sequences (seq00-10), enabling pose estimation evaluation and benchmarking. 

We estimate the relative pose between each pair of consecutive frames. For front-end feature matching, we test three methods: SIFT, SURF, and optical flow tracking with Harris corner detection, and the results are presented in Tables~\ref{average_error_comparison_kitti_sift}-\ref{average_error_comparison_kitti_harris}. Notably, \texttt{NACNet} is independent of front-end feature matching, and \texttt{DFE} uses the SIFT descriptor in its open-source code. Hence, we only present the results of the two learning-based methods in Table~\ref{average_error_comparison_kitti_sift} which corresponds to the SIFT descriptor. Three tables show that regardless of the front-end method used, our \texttt{CECME} algorithm consistently delivers the best performance in most sequences. When using Harris corner detection and optical flow tracking, \texttt{CECME} achieves the highest accuracy across all sequences for both rotation and translation, with the exception of translation accuracy in sequence 01.

\begin{table*}[!t]
	\centering
	\captionsetup{justification=centering}
	\caption{Average estimation errors in all $11$ KITTI odometry sequences using SIFT features. The units for rotation and translation are $10^{-4}$ rad and $10^{-4}$, respectively.} \label{average_error_comparison_kitti_sift}
	\resizebox{1.9\columnwidth}{!}{%
		\begin{tabular}{ c c c c c c c c c c c c c c c } 
			\Xhline{4\arrayrulewidth}
			\multirow{2}{*}{Sequence} & \multicolumn{2}{c}{DFE} & \multicolumn{2}{c}{NACNet} & \multicolumn{2}{c}{LM} & \multicolumn{2}{c}{Eigen} & \multicolumn{2}{c}{SDP} & \multicolumn{2}{c}{GN-E} & \multicolumn{2}{c}{CECME}  \\
			\cline{2-15}
			& $\bf R$ & $ \bar {\bf t}$ & $\bf R$ & $ \bar {\bf t}$ & $\bf R$ & $\bar {\bf t}$ & $\bf R$ & $\bar {\bf t}$ & $\bf R$ & $\bar {\bf t}$ & $\bf R$ & $\bar {\bf t}$ & $\bf R$ & $\bar {\bf t}$ \\ 
			\Xhline{4\arrayrulewidth}
			seq00 & 23.0 & 34.1 & 10.0 & 6.91 & 12.96 & 9.51 & 9.08 & 5.49 & 9.10 & 3.84 & 9.33 & 5.69 & {\color{red}\bf 8.55} & {\color{red}\bf 3.65} \\ 
			seq01 & 23.6 & 289 & {\color{red}\bf 6.13} & 189 & 25.39 & 199.62 & 13.52 & {\color{red}\bf 59.11} & 12.33 & 258.02 & 14.70 & 282.43 & 21.57 & 211.75 \\ 
			seq02 & 19.5 & 10.9 & 7.92 & 3.13 & 10.31 & 4.07 & 6.78 & {\color{red}\bf 2.38} & 6.92 & 2.39 & 6.97 & 2.41 & {\color{red}\bf 6.58} & 2.41 \\ 
			seq03 & 18.4 & 34.6 & 6.95 & 3.23 & 10.77 & 8.64 & 6.49 & 2.95 & 6.75 & 3.21 & 6.65 & 2.96 & {\color{red}\bf 6.32} & {\color{red}\bf 2.27} \\ 
			seq04 & 17.5 & 8.19 & 5.40 & 1.01 & 11.15 & 5.42 & 4.52 & 0.97 & 4.60 & 1.01 & 4.79 & 1.05 & {\color{red}\bf 4.15} & {\color{red}\bf 0.78} \\ 
			seq05 & 18.3 & 26.4 & 6.87 & 1.55 & 10.73 & 5.38 & 6.66 & 1.12 & 6.78 & 1.18 & 6.91 & 1.17 & {\color{red}\bf 6.26} & {\color{red}\bf 1.11} \\ 
			seq06 & 18.3 & 32.9 & 5.98 & 1.72 & 9.09 & 8.15 & 5.55 & 4.10 & 5.38 & 1.64 & 5.63 & 1.69 & {\color{red}\bf 4.85} & {\color{red}\bf 1.16} \\
			seq07 & 21.2 & 82.4 & 7.15 & 12.71 & 10.93 & 5.12 & 6.04 & 1.49 & 6.14 & 1.61 & 6.27 & 1.56 & {\color{red}\bf 5.82} & {\color{red}\bf 1.36} \\
			seq08 & 18.3 & 28.0 & 7.24 & 14.9 & 10.82 & 10.32 & 6.83 & 7.32 & 6.85 & 6.89 & 6.93 & 6.92 & {\color{red}\bf 6.54} & {\color{red}\bf 6.82} \\
			seq09 & 19.7 & 14.1 & 7.59 & 2.14 & 11.32 & 13.09 & 7.41 & 5.97 & 7.19 & 1.92 & 7.49 & 3.88 & {\color{red}\bf 6.74} & {\color{red}\bf 1.61} \\
			seq10 & 19.0 & 7.94 & 8.62 & 2.92 & 11.03 & 3.65 & 8.39 & {\color{red}\bf 2.21} & 8.45 & 2.23 & 8.50 & 2.23 & {\color{red}\bf 8.15} & 2.23 \\
			\Xhline{4\arrayrulewidth}
		\end{tabular}
	}
\end{table*}

\begin{table*}[!htb]
    \centering
    \captionsetup{justification=centering}
    \caption{Average estimation errors in all $11$ KITTI odometry sequences using SURF features. The units for rotation and translation are $10^{-4}$ rad and $10^{-4}$, respectively.} \label{average_error_comparison_kitti_surf}
    \resizebox{1.4\columnwidth}{!}{%
        \begin{tabular}{ c c c c c c c c c c c } 
            \Xhline{4\arrayrulewidth}
            \multirow{2}{*}{Sequence} & \multicolumn{2}{c}{LM} & \multicolumn{2}{c}{Eigen} & \multicolumn{2}{c}{SDP} & \multicolumn{2}{c}{GN-E} & \multicolumn{2}{c}{CECME}  \\
            \cline{2-11}
            & $\bf R$ & $\bar {\bf t}$ & $\bf R$ & $\bar {\bf t}$ & $\bf R$ & $\bar {\bf t}$ & $\bf R$ & $\bar {\bf t}$ & $\bf R$ & $\bar {\bf t}$ \\ 
            \Xhline{4\arrayrulewidth}
            seq00 & 19.07 & 18.90 & 11.64 & 7.91 & 11.72 & 5.86 & 12.20 & 8.76 & {\color{red}\bf 10.44} & {\color{red}\bf 4.83} \\ 
            seq01 & 36.98 & 430.72 & 19.97 & {\color{red}\bf 285.65} & {\color{red}\bf 17.88} & 422.78 & 23.43 & 491.96 & 27.01 & 546.87 \\ 
            seq02 & 16.46 & 8.41 & 9.63 & {\color{red}\bf 3.36} & 10.05 & 3.61 & 10.11 & 3.55 & {\color{red}\bf 9.04} & 3.84 \\ 
            seq03 & 16.17 & 13.74 & 7.76 & 3.13 & 8.13 & 3.53 & 8.14 & 3.33 & {\color{red}\bf 7.50} & {\color{red}\bf 2.70} \\ 
            seq04 & 17.87 & 9.43 & 6.68 & 1.74 & 6.85 & 1.87 & 7.25 & 1.88 & {\color{red}\bf 5.88} & {\color{red}\bf 1.38} \\ 
            seq05 & 15.99 & 13.92 & 8.62 & 2.03 & 8.87 & 2.24 & 9.11 & 2.19 & {\color{red}\bf 7.78} & {\color{red}\bf 2.01} \\ 
            seq06 & 15.36 & 28.77 & 8.32 & 6.32 & 8.02 & 2.68 & 8.89 & 6.14 & {\color{red}\bf 6.65} & {\color{red}\bf 2.35} \\
            seq07 & 16.48 & 15.49 & 7.97 & 2.49 & 8.28 & 2.73 & 8.53 & 3.97 & {\color{red}\bf 7.32} & {\color{red}\bf 2.13} \\
            seq08 & 16.70 & 19.23 & 9.11 & 9.45 & 9.33 & 8.85 & 9.53 & 10.20 & {\color{red}\bf 8.40} & {\color{red}\bf 7.81} \\
            seq09 & 17.52 & 23.66 & 10.85 & 8.80 & 10.66 & 5.37 & 11.39 & 8.65 & {\color{red}\bf 9.46} & {\color{red}\bf 3.80} \\
            seq10 & 17.23 & 8.80 & 10.72 & {\color{red}\bf 3.03} & 11.01 & 3.05 & 11.20 & 3.11 & {\color{red}\bf 10.11} & 3.07 \\
            \Xhline{4\arrayrulewidth}
        \end{tabular}
    }
\end{table*}

\begin{table*}[!htb]
    \centering
    \captionsetup{justification=centering}
    \caption{Average estimation errors in all $11$ KITTI odometry sequences using Harris corner detection and optical flow tracking. The units for rotation and translation are $10^{-4}$ rad and $10^{-4}$, respectively.} \label{average_error_comparison_kitti_harris}
    \resizebox{1.4\columnwidth}{!}{%
        \begin{tabular}{ c c c c c c c c c c c } 
            \Xhline{4\arrayrulewidth}
            \multirow{2}{*}{Sequence} & \multicolumn{2}{c}{LM} & \multicolumn{2}{c}{Eigen} & \multicolumn{2}{c}{SDP} & \multicolumn{2}{c}{GN-E} & \multicolumn{2}{c}{CECME}  \\
            \cline{2-11}
            & $\bf R$ & $\bar {\bf t}$ & $\bf R$ & $\bar {\bf t}$ & $\bf R$ & $\bar {\bf t}$ & $\bf R$ & $\bar {\bf t}$ & $\bf R$ & $\bar {\bf t}$ \\ 
            \Xhline{4\arrayrulewidth}
            seq00 & 14.77 & 20.82 & 11.54 & 15.20 & 12.37 & 20.36 & 12.59 & 21.28 & {\color{red}\bf 9.50} & {\color{red}\bf 4.34} \\ 
            seq01 & 47.93 & 966.34 & 34.63 & {\color{red}\bf 497.24} & 30.81 & 1050.56 & 31.67 & 1365.37 & {\color{red}\bf 27.66} & 1332.90 \\ 
            seq02 & 14.24 & 17.16 & 10.76 & 9.51 & 11.66 & 13.27 & 11.36 & 7.61 & {\color{red}\bf 8.45} & {\color{red}\bf 2.86} \\ 
            seq03 & 10.61 & 5.31 & 7.03 & 2.45 & 7.59 & 2.74 & 7.40 & 2.55 & {\color{red}\bf 6.72} & {\color{red}\bf 2.21} \\ 
            seq04 & 11.28 & 4.95 & 5.38 & 0.94 & 5.73 & 0.97 & 5.97 & 1.10 & {\color{red}\bf 4.66} & {\color{red}\bf 0.82} \\ 
            seq05 & 12.76 & 8.32 & 10.31 & 3.29 & 10.74 & 3.52 & 11.23 & 3.69 & {\color{red}\bf 8.35} & {\color{red}\bf 2.42} \\ 
            seq06 & 13.36 & 33.80 & 9.31 & 30.95 & 10.19 & 31.03 & 10.38 & 33.58 & {\color{red}\bf 5.93} & {\color{red}\bf 1.45} \\
            seq07 & 11.65 & 5.23 & 8.09 & 3.80 & 8.66 & 3.45 & 8.75 & 3.21 & {\color{red}\bf 6.79} & {\color{red}\bf 2.22} \\
            seq08 & 13.13 & 19.39 & 10.21 & 17.87 & 10.62 & 14.82 & 10.54 & 15.04 & {\color{red}\bf 7.90} & {\color{red}\bf 7.75} \\
            seq09 & 14.23 & 23.77 & 13.23 & 13.10 & 12.42 & 10.03 & 13.14 & 37.18 & {\color{red}\bf 9.35} & {\color{red}\bf 4.34} \\
            seq10 & 22.35 & 12.38 & 16.99 & 8.62 & 17.98 & 10.90 & 18.16 & 12.64 & {\color{red}\bf 14.47} & {\color{red}\bf 5.61} \\
            \Xhline{4\arrayrulewidth}
        \end{tabular}
    }
\end{table*}

\subsubsection{EuRoC MAV dataset}
Most monocular visual odometry or visual-inertial odometry algorithms adopt a 3D-2D framework for robot motion estimation after initilization, which depends on PnP pose estimation rather than epipolar relative pose estimation. The epipolar relative pose estimation will be only used in the initilization stage. Therefore, we replace the epipolar estimation module in the initialization of VINS~\cite{qin2018vins} (a SOTA and widely-used monocular visual-inertial odometry method) with our proposed \texttt{CECME} estimator and evaluate the overall odometry performance in the EuRoC MAV dataset. EuRoC MAV is a visual-inertial dataset collected on-board a micro aerial vehicle (MAV). The dataset contains stereo images, synchronized IMU measurements, and accurate motion and structure ground-truth.
The original VINS method adopts the seven-point RANSAC algorithm for epipolar pose estimation.

The test results on the EuRoC MAV dataset are summarized in Table~\ref{average_error_comparison_euroc}. Integrating our \texttt{CECME} algorithm into the initialization stage of the VINS method improves odometry performance on 8 out of 11 sequences, demonstrating the effectiveness of \texttt{CECME}. Notably, the improvements on the V102 and V103 sequences are significant, with RMSE reductions of 36.7\% and 37.4\%, respectively. 
\texttt{CECME} exhibits slightly lower accuracy on the MH04, V201, and V202 sequences.
This reduced performance can be attributed to the fact that most initial points in these sequences lie on a plane, which is a degenerate case for essential matrix estimation. In contrast, the seven-point RANSAC algorithm, which samples only seven points at a time, has a probability of selecting points that are not coplanar and therefore achieves better accuracy in these cases.

\begin{table}[!htb]
    \centering
    \captionsetup{justification=centering}
    \caption{Absolute pose errors in all $11$ EuRoC MAV dataset sequences. VINS (CECME) represents replacing the epipolar estimation module in the initialization stage of the VINS algorithm with our proposed CECME estimator.} \label{average_error_comparison_euroc}
    \resizebox{0.98\columnwidth}{!}{
        \begin{tabular}{ c c c c c c c } 
             \Xhline{4\arrayrulewidth}
            \multirow{2}{*}{Sequence} & \multicolumn{3}{c}{VINS} & \multicolumn{3}{c}{VINS (CECME)} \\
            \cline{2-7}
            & RMSE & Mean & Median & RMSE & Mean & Median \\ 
             \Xhline{4\arrayrulewidth}
            MH01 & 0.424 & 0.371 & 0.432 & {\color{red}\bf 0.408} & {\color{red}\bf 0.352} & {\color{red}\bf 0.344} \\ 
            MH02 & 0.375 & 0.324 & 0.297 & {\color{red}\bf 0.362} & {\color{red}\bf 0.295} & {\color{red}\bf 0.198} \\ 
            MH03 & 0.260 & 0.225 & {\color{red}\bf 0.171} & {\color{red}\bf 0.220} & {\color{red}\bf 0.198} & 0.179 \\ 
            MH04 & {\color{red}\bf 0.324} & {\color{red}\bf 0.272} & {\color{red}\bf 0.309} & 0.335 & 0.293 & 0.347 \\ 
            MH05 & 0.372 & 0.322 & {\color{red}\bf 0.250} & {\color{red}\bf 0.369} & {\color{red}\bf 0.318} & 0.262 \\ 
            V101 & 0.146 & 0.140 & 0.146 & {\color{red}\bf 0.136} & {\color{red}\bf 0.129} & {\color{red}\bf 0.131} \\ 
            V102 & 0.289 & 0.224 & 0.186 & {\color{red}\bf 0.183} & {\color{red}\bf 0.153} & {\color{red}\bf 0.132} \\
            V103 & 0.187 & 0.170 & 0.144 & {\color{red}\bf 0.117} & {\color{red}\bf 0.106} & {\color{red}\bf 0.0875} \\
            V201 & {\color{red}\bf 0.0822} & {\color{red}\bf 0.0765} & {\color{red}\bf 0.0796} & 0.0863 & 0.0805 & 0.0837 \\
            V202 & {\color{red}\bf 0.0932} & {\color{red}\bf 0.0804} & {\color{red}\bf 0.0719} & 0.0972 & 0.0814 & 0.0731 \\
            V203 & 0.218 & 0.204 & 0.199 & {\color{red}\bf 0.176} & {\color{red}\bf 0.169} & {\color{red}\bf 0.168} \\
             \Xhline{4\arrayrulewidth}
        \end{tabular}
    }
\end{table}

\section{Conclusion} \label{section_conclusion}
In this paper, we have revisited the CME problem that plays an important role in many computer vision applications. We derived the original measurement model associated with the rotation matrix and normalized translation, based on which the ML problem was formulated. To optimally solve the ML problem in the asymptotic case, we first estimated the noise variance by calculating the maximum eigenvalue of a $9 \times 9$ matrix. Based on the $\sqrt{m}$-consistent noise variance estimator, we then proposed a two-step estimator that has the same asymptotic statistical property as the ML estimator, i.e., consistency and asymptotic efficiency. We showed that our algorithm has $O(m)$ time complexity. 
Experiments on both synthetic data and real images demonstrated that when the point number reaches the order of hundreds, the proposed algorithm outperforms the SOTA ones in terms of MSE and CPU time. It is noteworthy to see that the value of $\|\bf t\|$ hardly affects the estimation of the rotation matrix, but has a significant effect on the translation. In addition, there exists a pure rotation statistic that can efficiently evaluate the estimation quality of the translation. Hence, the assumption that $\|{\bf t}\| \neq 0$ can be relaxed in real applications.

\section*{Appendix A: Proof of Theorem~\ref{theo_noise_est}} \label{proof_consistent_noise}
\setcounter{equation}{23}
The proof is mainly based on the following lemma: 
\begin{lemma}[{\cite[Lemma 6]{zeng2023consistent}}] \label{lemma_largest_eig}
	Let ${\bf R}$ and ${\bf S}$ be two real symmetric matrices and ${\bf Q}={\bf R}+{\bf S}$. If ${\bf Q}$ is positive-definite and ${\bf R}$ is positive-semidefinite with $0$ eigenvalues, then $\lambda_{\rm max}( {\bf Q}^{-1} {\bf S})=1$.
\end{lemma}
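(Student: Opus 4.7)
The plan is to reduce the eigenvalue identity to a statement about the smallest eigenvalue of a conjugated positive-semidefinite matrix. The key algebraic observation is that since $\mathbf{S} = \mathbf{Q} - \mathbf{R}$, we have
\begin{equation*}
\mathbf{Q}^{-1}\mathbf{S} = \mathbf{Q}^{-1}(\mathbf{Q}-\mathbf{R}) = \mathbf{I} - \mathbf{Q}^{-1}\mathbf{R}.
\end{equation*}
Thus if $\{\mu_i\}$ are the eigenvalues of $\mathbf{Q}^{-1}\mathbf{R}$, then $\{1-\mu_i\}$ are the eigenvalues of $\mathbf{Q}^{-1}\mathbf{S}$, and the claim $\lambda_{\max}(\mathbf{Q}^{-1}\mathbf{S})=1$ is equivalent to $\lambda_{\min}(\mathbf{Q}^{-1}\mathbf{R})=0$.

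Next, I would symmetrize $\mathbf{Q}^{-1}\mathbf{R}$ to check that its spectrum is real and nonnegative. Because $\mathbf{Q}$ is positive definite, its unique symmetric positive-definite square root $\mathbf{Q}^{1/2}$ exists and is invertible. The similarity
\begin{equation*}
\mathbf{Q}^{-1}\mathbf{R} = \mathbf{Q}^{-1/2}\bigl(\mathbf{Q}^{-1/2}\mathbf{R}\mathbf{Q}^{-1/2}\bigr)\mathbf{Q}^{1/2}
\end{equation*}
shows that $\mathbf{Q}^{-1}\mathbf{R}$ is similar to the symmetric matrix $\widetilde{\mathbf{R}} := \mathbf{Q}^{-1/2}\mathbf{R}\mathbf{Q}^{-1/2}$, hence shares its (real) eigenvalues. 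Moreover, $\widetilde{\mathbf{R}}$ is congruent to $\mathbf{R}$ via the invertible matrix $\mathbf{Q}^{-1/2}$, so by Sylvester's law of inertia $\widetilde{\mathbf{R}}$ is positive semidefinite and has the same number of zero eigenvalues as $\mathbf{R}$, which by hypothesis is at least one.

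It remains to exhibit a zero eigenvalue concretely, to make the argument self-contained. Let $\mathbf{v}\neq \mathbf{0}$ satisfy $\mathbf{R}\mathbf{v}=\mathbf{0}$ (such $\mathbf{v}$ exists by hypothesis). Set $\mathbf{u} = \mathbf{Q}^{1/2}\mathbf{v}$; since $\mathbf{Q}^{1/2}$ is invertible, $\mathbf{u}\neq \mathbf{0}$. Then
\begin{equation*}
\widetilde{\mathbf{R}}\mathbf{u} = \mathbf{Q}^{-1/2}\mathbf{R}\mathbf{Q}^{-1/2}\mathbf{Q}^{1/2}\mathbf{v} = \mathbf{Q}^{-1/2}\mathbf{R}\mathbf{v} = \mathbf{0},
\end{equation*}
so $0$ is an eigenvalue of $\widetilde{\mathbf{R}}$, and combined with $\widetilde{\mathbf{R}}\succeq \mathbf{0}$ this gives $\lambda_{\min}(\widetilde{\mathbf{R}})=0$. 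Transferring through the similarity yields $\lambda_{\min}(\mathbf{Q}^{-1}\mathbf{R})=0$, which via the identity $\mathbf{Q}^{-1}\mathbf{S}=\mathbf{I}-\mathbf{Q}^{-1}\mathbf{R}$ delivers $\lambda_{\max}(\mathbf{Q}^{-1}\mathbf{S})=1$.

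I do not anticipate a genuine obstacle here; the result is essentially a three-line linear algebra manipulation once the identity $\mathbf{Q}^{-1}\mathbf{S}=\mathbf{I}-\mathbf{Q}^{-1}\mathbf{R}$ is spotted. The only points requiring mild care are (i) justifying that eigenvalues of $\mathbf{Q}^{-1}\mathbf{R}$ are real and nonnegative despite this matrix not being symmetric in general, which is handled by the $\mathbf{Q}^{1/2}$-similarity, and (ii) ensuring the null vector transported to $\widetilde{\mathbf{R}}$ remains nonzero, which is immediate from invertibility of $\mathbf{Q}^{1/2}$.
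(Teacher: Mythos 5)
Your proof is correct. Note that the paper itself does not prove this lemma --- it is quoted verbatim from an external reference ({\cite[Lemma 6]{zeng2023consistent}}) --- so there is no in-paper argument to compare against. Your route (writing ${\bf Q}^{-1}{\bf S}={\bf I}-{\bf Q}^{-1}{\bf R}$, symmetrizing via the similarity with ${\bf Q}^{-1/2}{\bf R}{\bf Q}^{-1/2}$ to get real nonnegative eigenvalues, and transporting a null vector of ${\bf R}$ through ${\bf Q}^{1/2}$ to show $\lambda_{\rm min}({\bf Q}^{-1}{\bf R})=0$) is a clean, self-contained, and complete justification of the claim as stated.
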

First, we show the positive definiteness of ${\bf Q}_m$ when $m \geq 9$. Note that 
\begin{equation*}
	{\rm rank}({\bf A}_m)={\rm rank}\left( \begin{bmatrix}
		{\bf z}_1^{h \top} {\bf L}_1 \\
		\vdots \\
		{\bf z}_m^{h \top} {\bf L}_m
	\end{bmatrix} \right)={\rm rank}\left( \begin{bmatrix}
		{\bf z}_1^{h \top} \otimes {\bf y}_1^{h \top}\\
		\vdots \\
		{\bf z}_m^{h \top} \otimes {\bf y}_m^{h \top}
	\end{bmatrix} \right).
\end{equation*}
Given Assumption~\ref{not_coplanar_assump}, ${\bf y}_i,i=1,\ldots,m$ are not collinear. Hence, the column rank of the matrix $[{\bf y}_1^{h}~\cdots~{\bf y}_m^{h}]^\top$ is $3$. By further combining the fact that ${\bf z}_i, i \in \{1,\ldots,m\}$ are independent random variables, it holds that the matrix ${\bf A}_m$ has full column rank with probability one. 
Since ${\bf Q}_m$ has the same rank as ${\bf A}_m$, we have that ${\bf Q}_m$ is positive-definite with probability one. 
Next, we construct a positive-semidefinite matrix with $0$ eigenvalues. Let ${\bf A}_m^o$ be the noise-free counterpart of ${\bf A}_m$, i.e., ${\bf A}_m^o=[{\bf L}_1^\top {\bf z}_1^{ho}~\cdots~{\bf L}_m^\top {\bf z}_m^{ho}]^\top$, where ${\bf z}_i^{ho}$ is the noise-free counterpart of ${\bf z}_i^{h}$. Since ${\bf z}_i^{ho \top} {\bf E} {\bf y}_i^{h}=0$ (epipolar geometry), we have ${\bf A}_m^o {\bm \theta}={\bf 0}$. Given Assumption~\ref{nonzero_t}, ${\bm \theta}={\rm vec}({\bf E}) \neq {\bf 0}$, which implies that the matrix ${\bf A}_m^o$ is not full column rank, and ${\bm \theta}$ is an eigenvector of ${\bf A}_m^o$ associated with the $0$ eigenvalue. 
Let ${\bf Q}_m^o={\bf A}_m^{o \top} {\bf A}_m^o/m$. Then the matrix ${\bf Q}_m^o$ is positive-semidefinite with at least one $0$ eigenvalue. The following lemma plays an important role in identifying the relationship among ${\bf Q}_m$, ${\bf Q}_m^o$, and ${\bf S}_m$:
\begin{lemma}[{\cite[Lemma 4]{zeng2022global}}] \label{lemma_noise_aver}
	Let $\{X_i\}$ be a sequence of independent random variables with $\mathbb E[X_i]=0$ and $\mathbb E\left[X_i^2 \right]  \leq \varphi <\infty$ for all $i$. Then, there holds $\sum_{i=1}^{m}X_i/m=O_p(1/\sqrt{m})$.
\end{lemma}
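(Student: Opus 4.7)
The plan is to reduce the claim $\sum_{i=1}^m X_i/m = O_p(1/\sqrt m)$ to the stochastic-boundedness of the scaled partial sum $S_m/\sqrt m$, where $S_m := \sum_{i=1}^m X_i$, and then to control $S_m/\sqrt m$ with a one-line Chebyshev argument. Unpacking the definition of $O_p(\cdot)$ given in Section~\ref{preliminary_statistic}, what must be shown is: for every $\varepsilon>0$, there exist finite $M,N$ such that $P(|S_m|/\sqrt m > N)<\varepsilon$ for all $m>M$. The key observation is that the bound we obtain will be uniform in $m$, so the ``$M$'' in the definition plays no essential role; any $M\ge 0$ will do, and the whole difficulty collapses to choosing $N$ correctly in terms of $\varepsilon$ and the uniform moment bound $\varphi$.

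First I would compute the first two moments of $S_m$. By linearity and the assumption $\mathbb E[X_i]=0$, we get $\mathbb E[S_m]=0$. By independence of the $X_i$ and the bound $\mathbb E[X_i^2]\le \varphi$, the variance factorizes as $\mathrm{Var}(S_m)=\sum_{i=1}^m \mathrm{Var}(X_i)\le \sum_{i=1}^m \mathbb E[X_i^2]\le m\varphi$. Dividing by $m$ yields $\mathbb E\bigl[(S_m/\sqrt m)^2\bigr]\le \varphi$, which is the uniform-in-$m$ second-moment bound that drives the rest of the argument.

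Next I would apply Chebyshev's (equivalently, Markov's) inequality to $|S_m/\sqrt m|$ to obtain, for any $N>0$,
\begin{equation*}
P\!\left(\frac{|S_m|}{\sqrt m}>N\right)\le \frac{\mathbb E[S_m^2]}{mN^2}\le \frac{\varphi}{N^2}.
\end{equation*}
Given $\varepsilon>0$, I would then choose any $N>\sqrt{\varphi/\varepsilon}$, so that $\varphi/N^2<\varepsilon$. Since the right-hand side of the displayed inequality does not depend on $m$, the bound $P(|S_m|/\sqrt m>N)<\varepsilon$ holds for every $m\ge 1$, hence in particular for all $m>M$ with $M=0$. This is exactly the definition of $S_m/\sqrt m$ being stochastically bounded, which by the scaling rule $\sum_i X_i/m = (1/\sqrt m)\cdot (S_m/\sqrt m)$ gives $\sum_{i=1}^m X_i/m=O_p(1/\sqrt m)$.

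There is no real obstacle here beyond being careful with the definition: the only subtle point is recognizing that Chebyshev yields a bound independent of $m$ once one factors out $\sqrt m$, which is precisely what allows the choice of $N$ to depend only on $\varepsilon$ and the uniform moment bound $\varphi$, not on $m$. No further assumptions (such as identical distribution, higher moments, or a central limit theorem) are required; uniform boundedness of second moments together with independence suffices.
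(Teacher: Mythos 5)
Your proof is correct and complete: the Chebyshev argument with the uniform variance bound $\mathrm{Var}(S_m)\le m\varphi$ is exactly the standard (and essentially the only natural) way to prove this statement, and you handle the $O_p$ definition carefully, correctly noting that the bound is uniform in $m$ so $M$ can be taken to be $0$. The paper itself offers no proof --- it imports the lemma by citation from an external reference --- so there is nothing to contrast with; your argument fills that gap with the expected reasoning and no missing steps.
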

Based on Lemma~\ref{lemma_noise_aver}, it can be verified that 
\begin{equation} \label{psd_matrix_convergence}
	{\bf Q}_m={\bf Q}_m^o+\sigma^2 {\bf S}_m+O_p(1/\sqrt{m}).
\end{equation}
Further, note that ${\bf Q}_{m}$ is positive-definite, ${\bf Q}_{m}^o$ is positive-semidefinite with $0$ eigenvalues, and $O_p(1/\sqrt{m})$ is a quantity that converges to $0$ at a rate of $1/\sqrt{m}$. According to Lemma~\ref{lemma_largest_eig}, it holds that $\lambda_{\rm max}( {\bf Q}_{m}^{-1} \sigma^2 {\bf S}_{m})$ converges to $1$ at a rate of $1/\sqrt{m}$. In other words,  $\hat \sigma_m^2=1/\lambda_{\rm max} ({\bf Q}_m^{-1} {\bf S}_m)$ converges to $\sigma^2$ at a rate of $1/\sqrt{m}$, which completes the proof. 

\section*{Appendix B: Proof of Lemma~\ref{optimal_multiplier}}
We are going to show that $\lambda_i^*=0,i=1,\ldots,m$ in the asymptotic case. Let ${\bf r}_i^o$ denote the residual in the noise-free case. We can decompose $f_m$ as
\begin{align*}
	f_m & =\frac{1}{m} \sum_{i=1}^{m} \left\| {\bf r}_i\right\|^2 \\
	& = \frac{1}{m} \sum_{i=1}^{m} \left\| {\bf r}_i^o\right\|^2  + \underbrace{\frac{2}{m} \sum_{i=1}^{m} {\bm \epsilon}_i^\top {\bf r}_i^o}_{\rightarrow \ 0} + \underbrace{\frac{1}{m} \sum_{i=1}^{m} \|{\bm \epsilon}_i\|^2}_{\rightarrow \ 2\sigma^2} \\
	& \rightarrow \ \underbrace{\frac{1}{m} \sum_{i=1}^{m} \left\| {\bf r}_i^o\right\|^2}_{:=f_m^o} + 2 \sigma^2,
\end{align*}
where the third line is based on Lemma~\ref{lemma_noise_aver}. Note that $f_m^o=0$ when $\bf R$, $\bar {\bf t}$, and $k_i$'s take true values. Moreover, according to Theorem 22.9~\cite{hartley2003multiple}, given Assumption~\ref{not_coplanar_assump}, there does not exist a conjugate configuration of true values such that $f_m^o=0$. In other words, $f_m^o=0$ only if $k_i=\|{\bf t}^o\|/x_{i3}^o>0$. Then, from the KKT complementary slackness condition $\lambda_i^* k_i=0$, we obtain $\lambda_i^*=0,i=1,\ldots,m$. 

\section*{Appendix C: Derivation of GN iterations on ${\rm SO}(3)$ and 2-sphere} \label{derivation_GN_iteration}
The measurement equation~\eqref{noisy_measurement_model} can be rephrased as
\begin{equation} \label{measurement_model2}
	{\bf z}_i =\frac{{\bf W}({\bf L}_i {\rm vec}({\bf R})+k_i \bar {\bf t})}{{\bf e}_3^\top ({\bf L}_i {\rm vec}({\bf R})+k_i \bar {\bf t})}+{\bm \epsilon}_i.
\end{equation}
Define 
\begin{equation*}
	\begin{split}
		{\bf g}_{i}({\bf s},\alpha,\beta)&={\bf W}({\bf L}_i {\rm vec}(\hat {\bf R}_m^{\rm BE}\exp({\bf s}^{\wedge}))+k_i({\bf s},\alpha,\beta) \bar {\bf t}(\alpha,\beta)), \\
		h_{i}({\bf s},\alpha,\beta)&={\bf e}_3^\top({\bf L}_i {\rm vec}(\hat {\bf R}_m^{\rm BE}\exp({\bf s}^{\wedge}))+k_i({\bf s},\alpha,\beta) \bar {\bf t}(\alpha,\beta)), \\
		{\bf u}_{i}({\bf s},\alpha,\beta)&=\frac{{\bf W}({\bf L}_i {\rm vec}(\hat {\bf R}_m^{\rm BE}\exp({\bf s}^{\wedge}))+k_i({\bf s},\alpha,\beta) \bar {\bf t}(\alpha,\beta))}{{\bf e}_3^\top ({\bf L}_i {\rm vec}(\hat {\bf R}_m^{\rm BE}\exp({\bf s}^{\wedge}))+k_i({\bf s},\alpha,\beta) \bar {\bf t}(\alpha,\beta))}, 
	\end{split}
\end{equation*}
where $k_i({\bf s},\alpha,\beta)$ is defined by substituting $\bf R$ with $\hat {\bf R}_m^{\rm BE}\exp({\bf s}^{\wedge})$ and $\bar {\bf t}$ with $\bar {\bf t}(\alpha,\beta)$ in~\eqref{GN_R_and_t}. 
Then we have 
\begin{align*}
	\frac{\partial {\bf u}_{i}}{\partial {\bf s}^\top} =& \frac{(h_i({\bf 0}){\bf W}-{\bf g}_i({\bf 0}){\bf e}_3^\top)({\bm \Psi}_i+\bar {\bf t}({\bf 0}) \partial k_i/\partial {\bf s}^\top)}{h_i({\bf 0})^2}, \\
	\frac{\partial {\bf u}_{i}}{\partial \alpha} =& \frac{(h_i({\bf 0}){\bf W}-{\bf g}_i({\bf 0}){\bf e}_3^\top)(k_i {\bm \Phi}+\bar {\bf t}({\bf 0}) \partial k_i/\partial \alpha)}{h_i({\bf 0})^2}, \\
	\frac{\partial {\bf u}_{i}}{\partial \beta} =& \frac{(h_i({\bf 0}){\bf W}-{\bf g}_i({\bf 0}){\bf e}_3^\top)(k_i {\bm \Theta}+\bar {\bf t}({\bf 0}) \partial k_i/\partial \beta)}{h_i({\bf 0})^2},
\end{align*}
where all partial derivatives are evaluated at ${\bf s},\alpha,\beta=0$, and
\begin{align*}
	{\bm \Psi}_i & = {\bf y}_i^{h \top} \otimes \hat{\bf R}_m^{\rm BE} \frac{\partial {\rm vec}(\exp(\bf s^{\wedge}))}{
		\partial {\bf s}^\top}, \\
	{\bm \Phi} & = \frac{\partial \bar {\bf t}(\alpha,\beta)}{\partial \alpha} =\begin{bmatrix}
		-\sin \alpha_0 \cos \beta_0 \\
		-\sin \alpha_0 \sin \beta_0 \\
		\cos \alpha_0
	\end{bmatrix},\\
    {\bm \Theta} & = \frac{\partial \bar {\bf t}(\alpha,\beta)}{\partial \beta} =\begin{bmatrix}
		-\cos \alpha_0 \sin \beta_0 \\
		\cos \alpha_0 \cos \beta_0 \\
		0
	\end{bmatrix},
\end{align*}
and $\partial k_i/\partial {\bf s}^\top$, $\partial k_i/\partial \alpha$, $\partial k_i/\partial \beta$ are given in~\eqref{long_eqn1},
\begin{figure*}[b]
    \begin{equation} \label{long_eqn1}
		\begin{split}
			\frac{\partial k_i}{\partial {\bf s}^\top} & = \frac{{\rm den}\left( {\bf y}_i^{h \top}  {{}\hat{\bf R}_m^{\rm BE}}^\top ({\bf C}_1 {\bf I}_3 \otimes \bar {\bf t} + {\bf I}_3 \otimes \bar {\bf t}^\top {\bf C}_1^\top) {\bm \Psi}_i\right) - {\rm num} \left(\bar {\bf t}^\top {\bf C}_2 {\bf I}_3 \otimes \bar {\bf t} {\bm \Psi}_i \right)}{{\rm den}^2}, \\
			\frac{\partial k_i}{\partial \alpha} & = \frac{{\rm den}\left( {\bf y}_i^{h \top}  {{}\hat{\bf R}_m^{\rm BE}}^\top {\bf C}_1 (\hat{\bf R}_m^{\rm BE}{\bf y}_i^{h}) \otimes {\bf I}_3 {\bm \Phi}\right) - {\rm num} \left( \bar {\bf t}^\top {\bf C}_2(\hat{\bf R}_m^{\rm BE}{\bf y}_i^{h}) \otimes {\bf I}_3+(\hat{\bf R}_m^{\rm BE}{\bf y}_i^{h})^\top \otimes \bar {\bf t}^\top {\bf C}_2^\top\right) {\bm \Phi}}{{\rm den}^2}, \\
			\frac{\partial k_i}{\partial \beta} & = \frac{{\rm den}\left( {\bf y}_i^{h \top}  {{}\hat{\bf R}_m^{\rm BE}}^\top {\bf C}_1 (\hat{\bf R}_m^{\rm BE}{\bf y}_i^{h}) \otimes {\bf I}_3 {\bm \Theta}\right) - {\rm num} \left( \bar {\bf t}^\top {\bf C}_2(\hat{\bf R}_m^{\rm BE}{\bf y}_i^{h}) \otimes {\bf I}_3+(\hat{\bf R}_m^{\rm BE}{\bf y}_i^{h})^\top \otimes \bar {\bf t}^\top {\bf C}_2^\top\right) {\bm \Theta}}{{\rm den}^2},
		\end{split}
	\end{equation}
\end{figure*}
where $\rm den$ and $\rm num$ represent the denominator and numerator of $k_i$ in~\eqref{expression_ki}, respectively.

Then we can obtain the Jacobian matrix
\begin{equation*}
	{\bf J}=\begin{bmatrix}
		\vdots ~~~~~~\vdots~~~~~\vdots\\
		\frac{\partial {\bf u}_{i}}{\partial {\bf s}^\top} ~~\frac{\partial {\bf u}_{i}}{\partial \alpha} ~~\frac{\partial {\bf u}_{i}}{\partial \beta} \\
		\vdots ~~~~~~\vdots~~~~~\vdots
	\end{bmatrix} \in \mathbb R^{2m \times 5} .
\end{equation*}
The GN iteration is 
\begin{equation}  \label{GN_iteration}
	\left[ \hat {\bf s}_m^{\rm GN} ~
		\hat \alpha_m^{\rm GN} ~
		\hat \beta_m^{\rm GN}\right]^\top=
	[{\bf 0} ~
		\alpha_0 ~
		\beta_0]^\top+ 
	({\bf J}^\top {\bf J}) ^{-1}
	{\bf J}^\top {\bf r} ,
\end{equation}
where ${\bf r}=[{\bf r}_1^\top~\cdots~{\bf r}_m^\top]^\top$.

\section*{Appendix D: Proof of Theorem~\ref{asymptotic_efficiency_theorem}} \label{asymptotic_efficiency_proof}
Let $f_m({\bf s},\alpha,\beta)$ denote the objective function of~\eqref{LS_problem}, where ${\bf R}=\hat{{\bf R}}_m^{\rm BE}\exp({\bf s}^{\wedge})$ and $\bar {\bf t}=\bar {\bf t}(\alpha,\beta)$, and denote the optimal $\bf s$, $\alpha$, and $\beta$ as $ \hat {\bf s}_m^{\rm ML}$, $\hat \alpha_m^{\rm ML}$, and $\hat \beta_m^{\rm ML}$. Since $\hat {\bf R}_m^{\rm BE}$ and $\hat{\bar {{\bf t}}}_m^{\rm BE}$ are $\sqrt{m}$-consistent, it holds that 
\begin{equation*}
   \left[ \hat {\bf s}_m^{\rm ML} ~
		\hat \alpha_m^{\rm ML} ~
		\hat \beta_m^{\rm ML}\right]^\top-
	[{\bf 0} ~
		\alpha_0 ~
		\beta_0]^\top=O_p(1/\sqrt{m}).
\end{equation*}
Based on the optimality condition $\nabla f_m(\hat {\bf s}_m^{\rm ML},\hat \alpha_m^{\rm ML},\hat \beta_m^{\rm ML})={\bf 0}$ and the Taylor expansion, we have
\begin{equation*}
    {\bf 0}=\nabla f_{m,{\bf 0}}+\nabla^2 f_{m,{\bf 0}} \begin{bmatrix}
        \hat {\bf s}_m^{\rm ML}-{\bf 0} \\
        \hat \alpha_m^{\rm ML}-\alpha_0 \\
        \hat \beta_m^{\rm ML}-\beta_0
    \end{bmatrix} + o_p(\frac{1}{\sqrt{m}}),
\end{equation*}
where we use $f_{m,{\bf 0}}$ for the abbreviation of $f_m({\bf 0},\alpha_0,\beta_0)$.
Then,
\begin{equation*}
    \begin{bmatrix}
        \hat {\bf s}_m^{\rm ML}-{\bf 0} \\
        \hat \alpha_m^{\rm ML}-\alpha_0 \\
        \hat \beta_m^{\rm ML}-\beta_0
    \end{bmatrix}=-\nabla^2 f_{m,{\bf 0}}^{-1}\nabla f_{m,{\bf 0}}  + o_p(\frac{1}{\sqrt{m}}).
\end{equation*}
By combining the GN iteration~\eqref{GN_iteration}, we finally obtain
\begin{align*}
    & \begin{bmatrix}
        \hat {\bf s}_m^{\rm ML}-\hat {\bf s}_m^{\rm GN} \\
        \hat \alpha_m^{\rm ML}-\hat \alpha_m^{\rm GN} \\
        \hat \beta_m^{\rm ML}-\hat \beta_m^{\rm GN}
    \end{bmatrix} \\
    & =-\nabla^2 f_{m,{\bf 0}}^{-1}\nabla f_{m,{\bf 0}}-({\bf J}^\top {\bf J}) ^{-1}
	{\bf J}^\top {\bf r}+o_p(\frac{1}{\sqrt{m}}) \\
 & = \left( \frac{2 \nabla^2 f_{m,{\bf 0}}^{-1}}{m}-\frac{({\bf J}^\top {\bf J}) ^{-1}}{m}\right) \frac{{\bf J}^\top {\bf r}}{m} +o_p(\frac{1}{\sqrt{m}}) \\
 & = O_p(\frac{1}{\sqrt{m}}) O_p(\frac{1}{\sqrt{m}})+o_p(\frac{1}{\sqrt{m}}) \\
 & =o_p(\frac{1}{\sqrt{m}}),
\end{align*}
where the third ``$=$'' is based on Lemma~\ref{lemma_noise_aver} in Appendix A. Since the $\exp, \sin, \cos$ are all continuous functions, we have $ \hat {\bf R}_m^{\rm ML}-\hat {\bf R}_m^{\rm GN}=o_p(1/\sqrt{m})$ and $\hat{\bar {{\bf t}}}_m^{\rm ML}-\hat{\bar {{\bf t}}}_m^{\rm GN}=o_p(1/\sqrt{m})$, which completes the proof.

\section*{Appendix E: The Cramer-Rao Bound} \label{derivation_CRB}
Define 
\begin{equation*}
	\begin{split}
		{\bf g}_{i}'({\bf R},\bar {\bf t})&={\bf W}({\bf R}{\bf y}_i^h+k_i \bar {\bf t}), \\
		h_{i}'({\bf R},\bar {\bf t})&={\bf e}_3^\top ({\bf R}{\bf y}_i^h+k_i \bar {\bf t}), \\
		{\bf u}_{i}'({\bf R},\bar {\bf t})&=\frac{{\bf W}({\bf R}{\bf y}_i^h+k_i \bar {\bf t})}{{\bf e}_3^\top ({\bf R}{\bf y}_i^h+k_i \bar {\bf t})}.
	\end{split}
\end{equation*}
Let ${\bm \Sigma}=\sigma^2 {\bf I}_2$, then given Assumption~\ref{noise_assump}, the likelihood function is 
\begin{equation*}
	\mathcal L({\bf R},\bar {\bf t};{\bf z}) = \prod_{i=1}^{m} \frac{1}{2 \pi \sigma^2} ~{\rm exp} \left( -\frac{1}{2} \left\|( {\bf z}_i-{\bf u}_{i}'({\bf R},\bar {\bf t}))\right\|_{{\bm \Sigma}}^2\right),
\end{equation*}
which further yields the log-likelihood function
\begin{equation} \label{log_likelihood}
	\ell({\bf R},\bar {\bf t};{\bf z})= m~ {\rm ln} \frac{1}{2 \pi \sigma^2}-\sum_{i=1}^{m} \frac{1}{2} \left\|( {\bf z}_i-{\bf u}_{i}'({\bf R},\bar {\bf t}))\right\|_{{\bm \Sigma}}^2.
\end{equation}
Let ${\bm \xi}_{\bf R}={\rm vec}({\bf R})$ and ${\bm \xi}=\left[{\bm \xi}_{\bf R}^\top ~~ \bar {\bf t}^\top \right]^\top$. The derivative of $ \ell({\bf R},\bar {\bf t};{\bf z})$ is 
\begin{equation*}
	\frac{\partial  \ell}{\partial {\bm \xi}^\top} = \sum_{i=1}^{m} \left( {\bf z}_i-{\bf u}_{i}'({\bf R},\bar {\bf t})\right)^\top {\bm \Sigma}^{-1} \frac{\partial  {\bf u}_{i}'}{\partial {\bm \xi}^\top}.
\end{equation*}
To obtain $ \partial  {\bf u}_{i}'/\partial {\bm \xi}^\top$, we need to calculate $ \partial  {\bf u}_{i}'/\partial {\bm \xi}_{\bf R}^\top$ and $ \partial  {\bf u}_{i}'/\partial \bar {\bf t}^\top$, respectively. The result is, 
	\begin{align*}
		\frac{\partial  {\bf u}_{i}'}{\partial {\bm \xi}_{\bf R}^\top} &= \frac{h_{i}' \left( {\bf W}{\bf L}_i + {\bf W} \bar {\bf t} \frac{\partial k_i}{\partial {\bm \xi}_{\bf R}^\top} \right)-{\bf g}_{i}'\left( {\bf e}_3^\top {\bf L}_i + {\bf e}_3^\top \bar {\bf t} \partial \frac{\partial k_i}{\partial {\bm \xi}_{\bf R}^\top} \right)}{h_{i}'^2},\\
		\frac{\partial  {\bf u}_{i}'}{\partial \bar {\bf t}^\top} &= \frac{h_{i}' \left( {\bf W}k_i + {\bf W} \bar {\bf t} \frac{\partial k_i}{\partial \bar {\bf t}^\top} \right)-{\bf g}_{i}'\left( {\bf e}_3^\top k_i + {\bf e}_3^\top \bar {\bf t} \partial \frac{\partial k_i}{\partial \bar {\bf t}^\top} \right)}{h_{i}'^2},
	\end{align*}
where $ \partial  k_i/\partial {\bm \xi}_{\bf R}^\top$ and $ \partial  k_i/\partial \bar {\bf t}^\top$ are given in~\eqref{long_eqn2},
	\begin{figure*}[b]
	    \begin{equation} \label{long_eqn2}
		\begin{split}
			\frac{\partial k_i}{\partial {\bm \xi}_{\bf R}^\top} &= \frac{{\rm den}\left( {\bf y}_i^{h \top} {{}\hat{\bf R}_m^{\rm BE}}^\top ({\bf C}_1 {\bf I}_3 \otimes \bar {\bf t} + {\bf I}_3 \otimes \bar {\bf t}^\top {\bf C}_1^\top) {\bf L}_i\right) - {\rm num} \left(\bar {\bf t}^\top {\bf C}_2 {\bf I}_3 \otimes \bar {\bf t} {\bf L}_i \right)}{{\rm den}^2}, \\
			\frac{\partial k_i}{\partial \bar {\bf t}^\top} &= \frac{{\rm den}\left( {\bf y}_i^{h \top}  {{}\hat{\bf R}_m^{\rm BE}}^\top {\bf C}_1 (\hat{\bf R}_m^{\rm BE}{\bf y}_i^{h}) \otimes {\bf I}_3 \right) - {\rm num} \left( \bar {\bf t}^\top {\bf C}_2(\hat{\bf R}_m^{\rm BE}{\bf y}_i^{h}) \otimes {\bf I}_3+(\hat{\bf R}_m^{\rm BE}{\bf y}_i^{h})^\top \otimes \bar {\bf t}^\top {\bf C}_2^\top\right)}{{\rm den}^2},
		\end{split}
	\end{equation}
	\end{figure*}
where $\rm den$ and $\rm num$ represent the denominator and numerator of $k_i$ in~\eqref{expression_ki}, respectively.

Then we have $\partial  {\bf u}_{i}'/\partial {\bm \xi}^\top=\left[ \partial  {\bf u}_{i}'/\partial {\bm \xi}_{\bf R}^\top~~\partial  {\bf u}_{i}'/\partial \bar {\bf t}^\top \right]$. Note that ${\bf z}_i-{\bf u}_{i}'({\bf R}^o,\bar {\bf t}^o)={\bm \epsilon}_i$. Hence, the Fisher information matrix can be calculated as 
\begin{align*}
	{\bf F} & = \mathbb E\left[ \frac{\partial  \ell}{\partial {\bm \xi}} \frac{\partial  \ell}{\partial {\bm \xi}^\top}\right] \\
	& = \mathbb E \left[ \left( \sum_{i=1}^{m} \frac{\partial  {\bf u}_{i}'}{\partial {\bm \xi}}{\bm \Sigma}^{-1}{\bm \epsilon}_i \right) \left( \sum_{i=1}^{m} {\bm \epsilon}_i^\top {\bm \Sigma}^{-1} \frac{\partial  {\bf u}_{i}'}{\partial {\bm \xi}^\top} \right) \right] \\
	& = \sum_{i=1}^{m} \frac{\partial  {\bf u}_{i}'}{\partial {\bm \xi}} {\bm \Sigma}^{-1} \frac{\partial  {\bf u}_{i}'}{\partial {\bm \xi}^\top},
\end{align*}
where the third line is derived based on the independence among measurement noises ${\bm \epsilon}_i$'s.
Note that what we have now derived is unconstrained Fisher information. Since there are constraints on the rotation matrix $\bf R$ and the normalized translation $\bar {\bf t}$, we need to calculate a constrained counterpart ${\bf F}_c$. We can use the following $7$ equations to characterize the constraints
\begin{equation*}
	{\bf h}({\bm \xi}) = \begin{bmatrix}
		{\bm \xi}_{1:3}^{\top}{\bm \xi}_{1:3}-1 \\
		{\bm \xi}_{4:6}^{\top}{\bm \xi}_{1:3} \\
		{\bm \xi}_{7:9}^{\top}{\bm \xi}_{1:3} \\
		{\bm \xi}_{4:6}^{\top}{\bm \xi}_{4:6}-1 \\
		{\bm \xi}_{7:9}^{\top}{\bm \xi}_{4:6} \\
		{\bm \xi}_{7:9}^{\top}{\bm \xi}_{7:9}-1 \\
		{\bm \xi}_{10:12}^{\top}{\bm \xi}_{10:12}-1
	\end{bmatrix}={\bf 0},
\end{equation*}
where the first $6$ equations are associated with the rotation matrix~\cite{lynch2017modern} and the last one is for the normalized translation. Let 
\begin{equation*}
	{\bf H}({\bm \xi}) = \frac{\partial {\bf h}({\bm \xi})}{\partial {\bm \xi}^\top} \in \mathbb R^{7 \times 12}.
\end{equation*}
The gradient matrix ${\bf H}({\bm \xi})$ have full row rank since the constraints are nonredundant, and hence there exists a matrix ${\bf U}\in\mathbb{R}^{12\times 5}$ whose columns form an orthonormal basis for the nullspace of ${\bf H}({\bm \xi})$, that is, 
$${\bf H}({\bm \xi}){\bf U}={\bf 0}$$
where ${\bf U}^{\top}{\bf U}={\bf I}$.
Finally, the constrained Fisher information is given as~\cite{stoica1998cramer}
\begin{equation*}
	{\bf F}_c = {\bf U} ({\bf U}^\top {\bf F} {\bf U})^{-1}{\bf U}^\top,
\end{equation*}
and the theoretical lower bound is ${\rm CRB}={\rm tr}({\bf F}_c)$.

\section*{Appendix F: Performance comparison of choosing varied initial guesses} \label{diff_eigenvectors}

Here we show how performance changes according to the policy of choosing the initial guess with eigenvectors corresponding to larger eigenvalues of ${\bf Q}_m^{\rm BE}$. 
We adopt the same simulation setting as Figure 4(c), and the result is shown in Figure~\ref{MSE_diff_eigenvector}. In our paper, since $\bm \theta \in \mathbb R^9$, the curve ``v9'' denotes choosing the eigenvector corresponding to the smallest eigenvalue of ${\bf Q}_m^{\rm BE}$, ``v8'' represents choosing the eigenvector corresponding to the second smallest eigenvalue of ${\bf Q}_m^{\rm BE}$, and by analogy for curves ``v7'' and ``v6''. We see that by choosing the eigenvector corresponding to the smallest eigenvalue of ${\bf Q}_m^{\rm BE}$ as the initial value, our algorithm can asymptotically achieve the theoretical lower bound, CRB. In contrast, selecting eigenvectors corresponding to other eigenvalues results in significantly larger MSEs that do not decrease as the point number increases. 

 \begin{figure}[!htbp]
    \centering
   \begin{subfigure}{0.49\linewidth}
       \includegraphics[width=\linewidth]{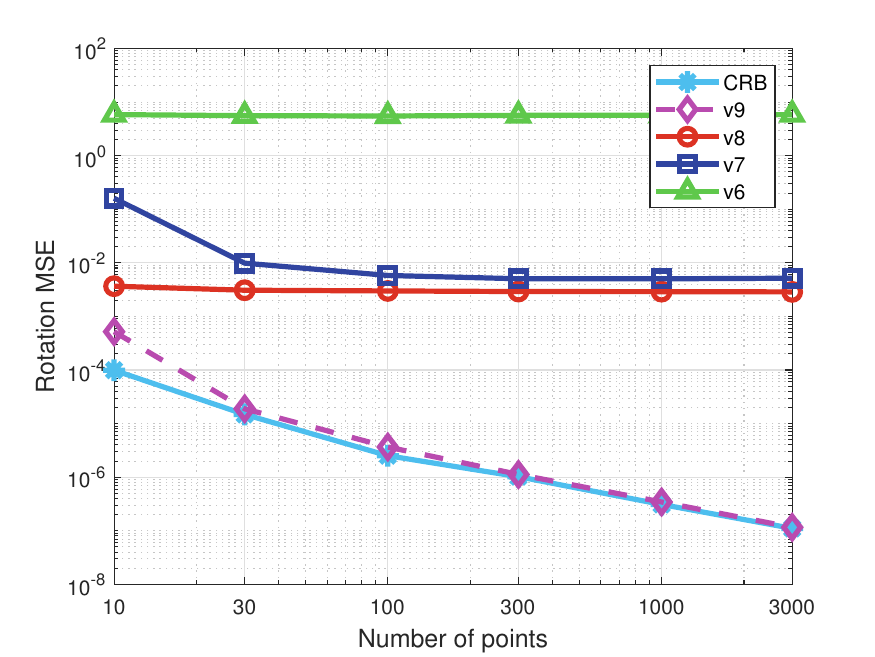}
   \end{subfigure}
    \begin{subfigure}{0.49\linewidth}
       \includegraphics[width=\linewidth]{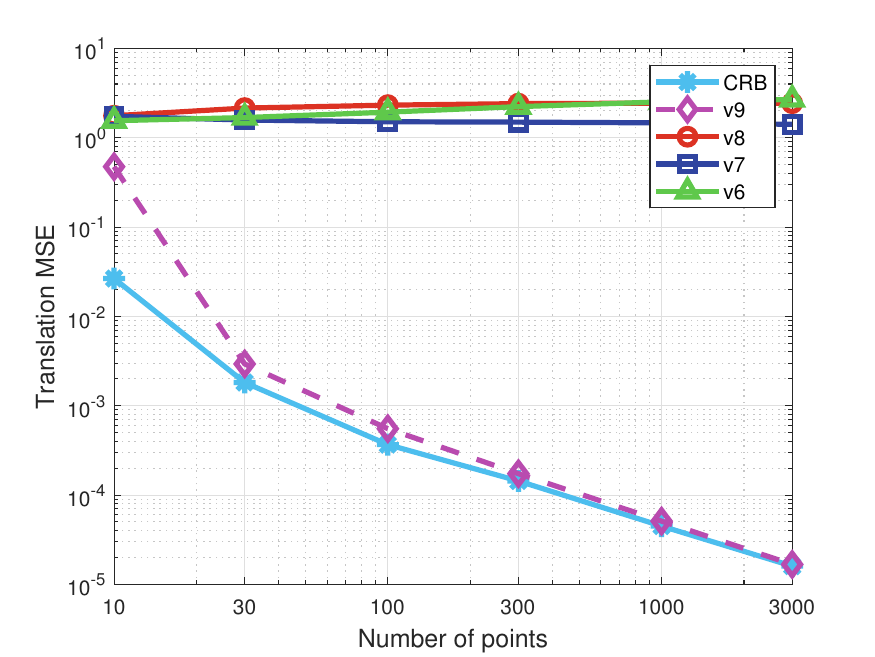}
   \end{subfigure}
   \caption{MSE of selecting varied eigenvectors as the initial guess.}
            \label{MSE_diff_eigenvector}
\end{figure}

\section*{Appendix G: Discussion of Gaussian assumption}  \label{Gaussian_discussion}
We observe that the i.i.d. Gaussian noise assumption is typically well satisfied. For instance, we select images 11 and 12 from the exhibition hall sequence in the ETH3D dataset, as well as images 7 and 8 from sequence 4 in the KITTI odometry dataset. Using the ground truth relative pose, we project feature points from one image to the other, represented as epipolar lines, and plot the empirical distribution of epipolar point-to-line distances (in pixels). As illustrated in Figure~\ref{empirical_distributions}, the epipolar point-to-line errors align closely with a Gaussian distribution.

If the feature noises exhibit varying variances, the proposed algorithm will still maintain consistency; however, it will lose statistical efficiency. We include a simulation that violates the i.i.d. assumption. Specifically, a uniformly random noise level within the range of $[0.5, 1.5]$ pixels ($[0.0625\%, 0.1875\%]$ of the image diagonal) is assigned to each point correspondence. As shown in Figure~\ref{MSE_non_iid}, under the non-i.i.d. noise condition, while our estimator \texttt{CECME} still demonstrates consistency, it fails to achieve the theoretical lower bound, CRB. This is because, in our constructed ML problem, equal weights are assigned to each point correspondence based on the i.i.d. noise assumption, which reduces the problem to a LS formulation rather than a true ML problem and results in a suboptimal estimation covariance.

 \begin{figure}[!htbp]
    \centering
   \begin{subfigure}{0.48\linewidth}
       \includegraphics[width=\linewidth]{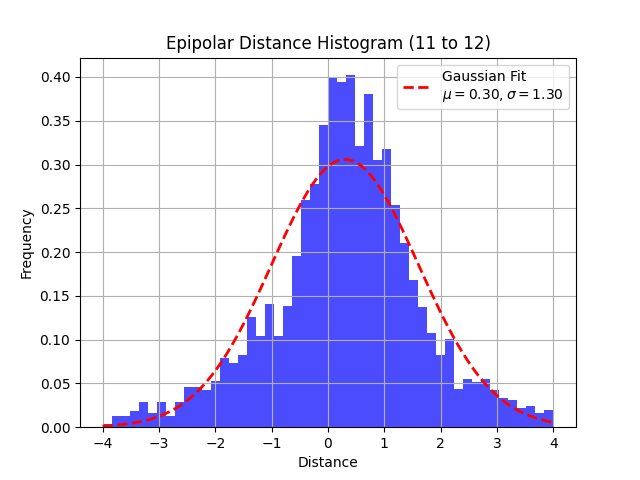}
       \caption{ETH3D exhib. hall 11-12}
   \end{subfigure}
    \begin{subfigure}{0.48\linewidth}
       \includegraphics[width=\linewidth]{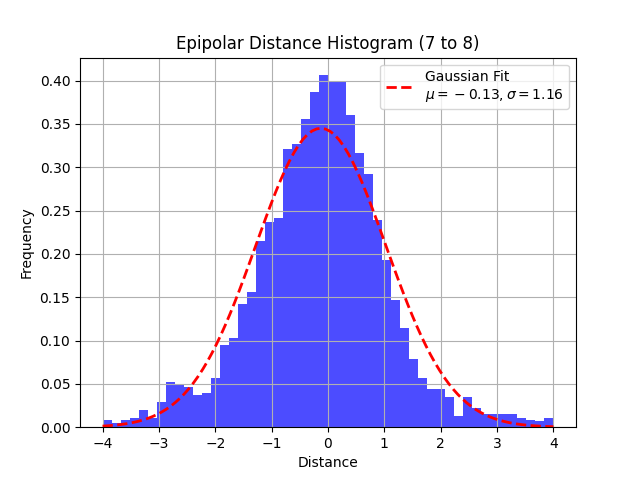}
       \caption{KITTI seq04 7-8}
   \end{subfigure}
   \caption{Empirical distribution of feature-matching noises.}
            \label{empirical_distributions}
\end{figure}

 \begin{figure}[!htbp]
    \centering
   \begin{subfigure}{0.49\linewidth}
       \includegraphics[width=\linewidth]{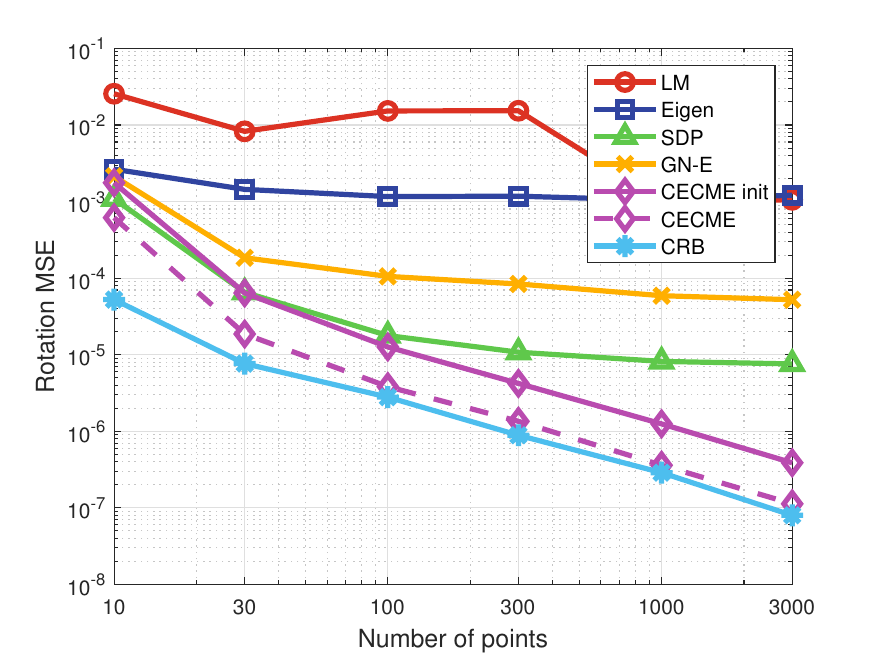}
   \end{subfigure}
    \begin{subfigure}{0.49\linewidth}
       \includegraphics[width=\linewidth]{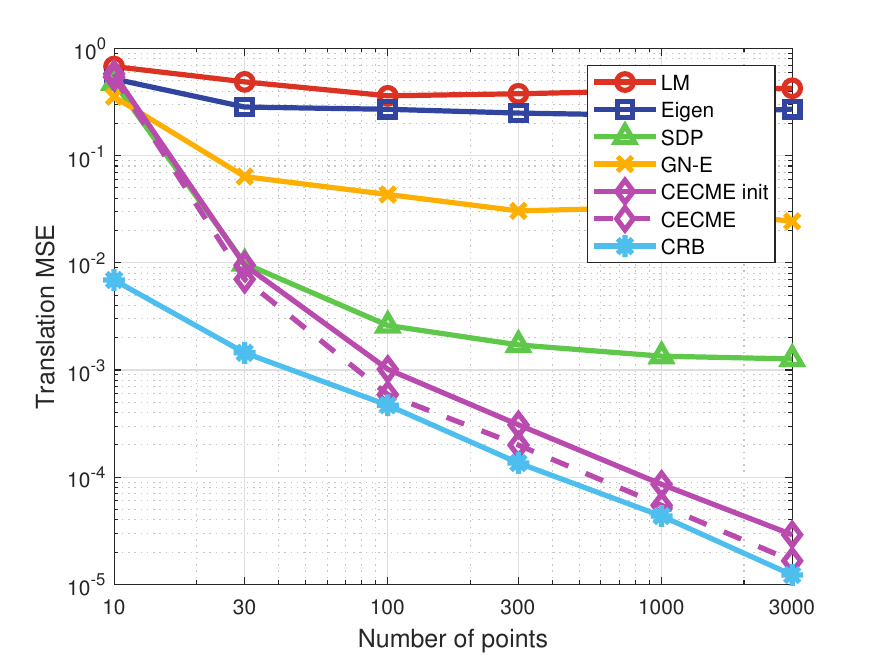}
   \end{subfigure}
   \caption{MSE comparison in the non-i.i.d. noise case.}
            \label{MSE_non_iid}
\end{figure}

\section*{Appendix H: Effect of GN iteration number in the ETH3D dataset}  \label{effect_GN_num_ETH3D}
We test the estimation errors in the ETH3D dataset with varied GN numbers. The results are listed in Table~\ref{ETH3D_varied_GN}, from which we see that a single GN iteration largely improves the initial estimate, while 5 GN iterations result in errors comparable to those achieved with just one iteration.

\begin{table}[!htb]
    \centering
    \captionsetup{justification=centering}
    \caption{Average estimation errors in the ETH3D dataset with varied GN numbers. The units for rotation and translation are $10^{-3}$ rad and $10^{-5}$, respectively. The percentages next to 1 GN and 5 GN represent the changes of 1 GN relative to 0 GN and the changes of 5 GN relative to 1 GN, respectively.} \label{ETH3D_varied_GN}
    \resizebox{0.98\columnwidth}{!}{%
        \begin{tabular}{ c c c c c c c } 
            \hline
            \multirow{2}{*}{Scenario} & \multicolumn{3}{c}{$\bf R$} & \multicolumn{3}{c}{$\bar {\bf t}$} \\
            \cline{2-7}
            & 0 GN & 1 GN & 5 GN & 0 GN & 1 GN & 5 GN \\ 
            \hline
            door & 0.523 & 0.374 (${\color{red} \downarrow 28\%}$) & 0.374 (0.0\%) & 1.56 & 0.232 (${\color{red} \downarrow 85\%}$) & 0.232 (0.0\%) \\ 
            relief & 2.20 & 0.787 (${\color{red} \downarrow 64\%}$) & 0.797 (${\color{blue} \uparrow 1.3\%}$) & 3.91 & 0.485 (${\color{red} \downarrow 88\%}$) & 0.492 (${\color{blue} \uparrow 1.4\%}$) \\ 
            observatory & 1.65 & 1.32 (${\color{red} \downarrow 20\%}$) & 1.31 (${\color{red} \downarrow 0.76\%}$) & 12.5 & 5.90 (${\color{red} \downarrow 53\%}$) & 5.18 (${\color{red} \downarrow 12\%}$) \\ 
            facade & 2.10 & 1.14 (${\color{red} \downarrow 46\%}$) & 1.12 (${\color{red} \downarrow 1.8\%}$) & 55.5 & 38.3 (${\color{red} \downarrow 31\%}$) & 38.3 (0.0\%) \\ 
            boulders & 3.40 & 0.992 (${\color{red} \downarrow 71\%}$) & 0.832 (${\color{red} \downarrow 16\%}$) & 17.4 & 14.2 (${\color{red} \downarrow 18\%}$) & 0.490 (${\color{red} \downarrow 97\%}$) \\ 
            courtyard & 27.2 & 22.7 (${\color{red} \downarrow 17\%}$) & 21.2 (${\color{red} \downarrow 6.6\%}$) & 464 & 411 (${\color{red} \downarrow 11\%}$) & 402 (${\color{red} \downarrow 2.2\%}$) \\ 
            relief 2 & 1.58 & 0.828 (${\color{red} \downarrow 48\%}$) & 0.844 (${\color{blue} \uparrow 1.9\%}$) & 2.89 & 0.842 (${\color{red} \downarrow 71\%}$) & 0.850 (${\color{blue} \uparrow 0.95\%}$) \\ 
            terrace 2 & 0.652 & 0.428 (${\color{red} \downarrow 34\%}$) & 0.428 (0.0\%) & 1.32 & 0.246 (${\color{red} \downarrow 81\%}$) & 0.246 (0.0\%) \\ 
            statue & 0.671 & 0.407 (${\color{red} \downarrow 39\%}$) & 0.407 (0.0\%) & 0.120 & 0.0087 (${\color{red} \downarrow 93\%}$) & 0.0087 (0.0\%) \\ 
            delivery area & 3.93 & 2.28 (${\color{red} \downarrow 42\%}$) & 2.05 (${\color{red} \downarrow 10\%}$) & 72.5 & 41.0 (${\color{red} \downarrow 43\%}$) & 34.4 (${\color{red} \downarrow 16\%}$) \\ 
            bridge & 1.56 & 0.875 (${\color{red} \downarrow 44\%}$) & 0.838 (${\color{red} \downarrow 4.2\%}$) & 2.79 & 0.877 (${\color{red} \downarrow 69\%}$) & 0.470 (${\color{red} \downarrow 46\%}$) \\ 
            exhibition hall & 6.05 & 3.94 (${\color{red} \downarrow 35\%}$) & 3.66 (${\color{red} \downarrow 7.1\%}$) & 562 & 513 (${\color{red} \downarrow 8.7\%}$) & 474 (${\color{red} \downarrow 7.6\%}$) \\ 
            electro & 2.21 & 1.04 (${\color{red} \downarrow 53\%}$) & 1.04 (0.0\%) & 11.5 & 2.63 (${\color{red} \downarrow 77\%}$) & 2.04 (${\color{red} \downarrow 22\%}$) \\ 
            terrace & 1.80 & 0.979 (${\color{red} \downarrow 46\%}$) & 0.982 (${\color{blue} \uparrow 0.31\%}$) & 2.47 & 0.273 (${\color{red} \downarrow 89\%}$) & 0.272 (${\color{red} \downarrow 0.37\%}$) \\ 
            kicker & 4.71 & 2.15 (${\color{red} \downarrow 54\%}$) & 2.18 (${\color{blue} \uparrow 1.4\%}$) & 28.4 & 8.76 (${\color{red} \downarrow 69\%}$) & 6.91 (${\color{red} \downarrow 21\%}$) \\ 
            botanical garden & 2.94 & 0.927 (${\color{red} \downarrow 68\%}$) & 0.899 (${\color{red} \downarrow 3.0\%}$) & 28.0 & 3.70 (${\color{red} \downarrow 87\%}$) & 3.29 (${\color{red} \downarrow 11\%}$) \\ 
            terrains & 2.80 & 1.40 (${\color{red} \downarrow 50\%}$) & 1.51 (${\color{blue} \uparrow 7.9\%}$) & 7.01 & 2.24 (${\color{red} \downarrow 68\%}$) & 6.79 (${\color{blue} \uparrow 203\%}$) \\ 
            playground & 1.44 & 0.680 (${\color{red} \downarrow 53\%}$) & 0.681 (${\color{blue} \uparrow 0.15\%}$) & 0.940 & 0.453 (${\color{red} \downarrow 52\%}$) & 0.450 (${\color{red} \downarrow 0.66\%}$) \\ 
            living room & 2.28 & 1.44 (${\color{red} \downarrow 37\%}$) & 1.46 (${\color{blue} \uparrow 1.4\%}$) & 5.78 & 3.19 (${\color{red} \downarrow 45\%}$) & 3.20 (${\color{blue} \uparrow 0.31\%}$) \\ 
            lecture room & 1.90 & 1.01 (${\color{red} \downarrow 47\%}$) & 0.979 (${\color{red} \downarrow 3.1\%}$) & 40.0 & 8.10 (${\color{red} \downarrow 80\%}$) & 1.31 (${\color{red} \downarrow 84\%}$) \\ 
            pipes & 2.83 & 1.58 (${\color{red} \downarrow 44\%}$) & 1.77 (${\color{blue} \uparrow 12\%}$) & 10.4 & 1.03 (${\color{red} \downarrow 90\%}$) & 3.18 (${\color{blue} \uparrow 209\%}$) \\ 
            lounge & 2.28 & 1.58 (${\color{red} \downarrow 31\%}$) & 1.79 (${\color{blue} \uparrow 13\%}$) & 4.23 & 2.63 (${\color{red} \downarrow 38\%}$) & 4.30 (${\color{blue} \uparrow 63\%}$) \\ 
            office & 2.55 & 1.37 (${\color{red} \downarrow 46\%}$) & 1.36 (${\color{red} \downarrow 0.73\%}$) & 8.38 & 1.44 (${\color{red} \downarrow 83\%}$) & 1.49 (${\color{blue} \uparrow 3.5\%}$) \\ 
            meadow & 10.0 & 4.51 (${\color{red} \downarrow 55\%}$) & 4.38 (${\color{red} \downarrow 2.9\%}$) & 45.5 & 4.96 (${\color{red} \downarrow 89\%}$) & 4.66 (${\color{red} \downarrow 6.0\%}$) \\ 
            old computer & 2.29 & 1.33 (${\color{red} \downarrow 42\%}$) & 1.28 (${\color{red} \downarrow 3.8\%}$) & 20.4 & 7.06 (${\color{red} \downarrow 65\%}$) & 4.50 (${\color{red} \downarrow 36\%}$) \\ 
            \hline
        \end{tabular}
    }
\end{table}

\bibliographystyle{IEEEtran}
\bibliography{sj_reference}

\end{document}